\documentclass{article}

\usepackage{microtype}
\usepackage{graphicx}
\usepackage{subcaption}
\usepackage{booktabs}
\usepackage{hyperref}
\usepackage{amsmath}
\usepackage{amssymb}
\usepackage{mathtools}
\usepackage{amsthm}
\usepackage[normalem]{ulem}
\usepackage{algorithm}
\usepackage{algorithmic}
\usepackage{multirow} 
\usepackage{listings}
\usepackage{verbatim}
\usepackage{xcolor}
\usepackage{pifont}  % for \ding{51} checkmark
\usepackage{enumitem}  % for itemize options
\usepackage{tcolorbox}

\usepackage[accepted]{icml2026} 

\theoremstyle{plain}

\newtheorem{theorem}{Theorem}[section]
\newtheorem{proposition}[theorem]{Proposition}

\theoremstyle{definition}

\theoremstyle{remark}

\newcommand{\E}{\mathbb{E}}

\newcommand{\argmax}{\operatorname*{arg\,max}}

\icmltitlerunning{Models Under \textsc{Scope}: Scalable and Controllable Routing via Pre-hoc Reasoning}

\begin{document}

\twocolumn[
\icmltitle{Models Under \textsc{Scope}: Scalable and Controllable Routing via Pre-hoc Reasoning}

\icmlsetsymbol{equal}{*}

  \begin{icmlauthorlist}
    \icmlauthor{Qi Cao}{equal,ucsd}
    \icmlauthor{Shuhao Zhang}{equal,ucsd}
    \icmlauthor{Ruizhe Zhou}{indep}
    \icmlauthor{Ruiyi Zhang}{ucsd}
    \icmlauthor{Peijia Qin}{ucsd}
    \icmlauthor{Pengtao Xie}{ucsd}
  \end{icmlauthorlist}

  \icmlaffiliation{ucsd}{University of California, San Diego}
  \icmlaffiliation{indep}{Independent Researcher}

  \icmlcorrespondingauthor{Pengtao Xie}{p1xie@ucsd.edu}

  \icmlkeywords{Model Routing, LLM Reasoning, Reinforcement Learning, Retrieval Augmentation}

  \begin{center}
  \small\textbf{Project Page:} \url{https://sullivan07043.github.io/SCOPE/}
  \end{center}
  
  \vskip 0.3in
]

\printAffiliationsAndNotice{\icmlEqualContribution}

\begin{abstract} 
Model routing chooses which language model to use for each query. By sending easy queries to cheaper models and hard queries to stronger ones, it can significantly reduce inference cost while maintaining high accuracy. However, most existing routers treat this as a fixed choice among a small set of models, which makes them hard to adapt to new models or changing budget constraints. In this paper, we propose \textsc{Scope} (\textbf{S}calable and \textbf{C}ontrollable \textbf{O}utcome \textbf{P}erformance \textbf{E}stimator), a routing framework that goes beyond model selection by predicting their cost and performance. Trained with reinforcement learning, \textsc{Scope} makes reasoning-based predictions by retrieving how models behave on similar problems, rather than relying on fixed model names, enabling it to work with new, unseen models. Moreover, by explicitly predicting how accurate and how expensive a model will be, it turns routing into a dynamic decision problem, allowing users to easily control the trade-off between accuracy and cost. Experiments show that \textsc{Scope} is more than just a cost-saving tool. It flexibly adapts to user needs: it can boost accuracy by up to \textbf{25.7\%} when performance is the priority, or cut costs by up to \textbf{95.1\%} when efficiency matters most.

\end{abstract}

\section{Introduction}
\label{sec:intro}

\begin{figure}[t] \centering \includegraphics[width=0.47\textwidth]{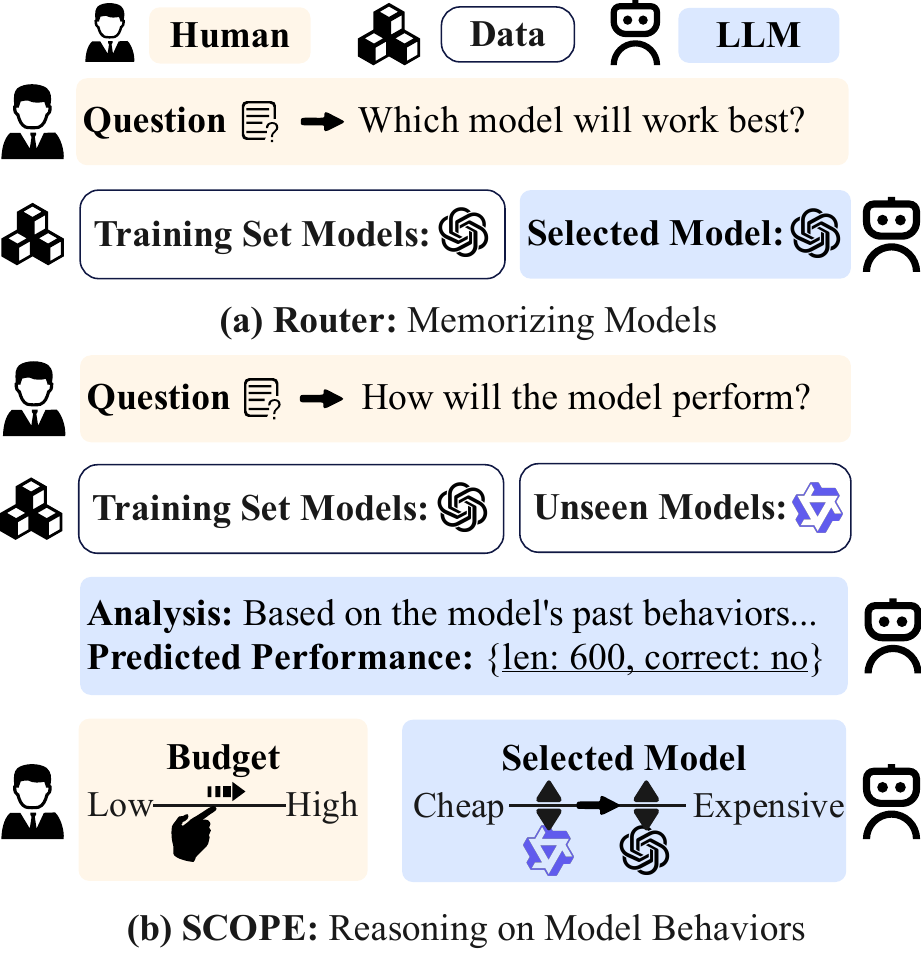} \caption{\textbf{Paradigm Comparison.} Unlike other LLM-based routers (a) that perform closed-set classification, \textsc{Scope} (b) leverages the model's past behaviors to explicitly \textbf{predict token length and correctness}. This pre-hoc estimation enables \textbf{generalization to unseen models} and facilitates \textbf{budget-aware decision-making.}} \label{fig:comparison} \end{figure}

Reasoning Language Models (RLMs)~\cite{singh2025openaigpt5card, deepseekai2025deepseekv32pushingfrontieropen} have achieved remarkable performance, but they come with high costs and latency~\cite{wei2023chainofthoughtpromptingelicitsreasoning, shinn2023reflexionlanguageagentsverbal, wang2023selfconsistencyimproveschainthought, cao2025dreamprmdomainreweightedprocessreward}. In real-world applications, query difficulty varies significantly—using a powerful, expensive model for every simple query is wasteful. To solve this, model routing~\cite{chen2023frugalgptuselargelanguage, ong2025routellmlearningroutellms} adopts a portfolio approach: it assigns simple queries to cheaper models and complex problems to more capable ones. The goal is to find the best balance between performance and cost for each input.

However, current routers face two limitations because they treat routing as a simple classification task (i.e., selecting a model from a fixed list).
First, they cannot handle new models.
Traditional routers learn to choose from a specific set of models. If a new model is released, the router does not know how to use it.
Integrating a new model requires collecting new data and retraining the entire system, which is inefficient in the rapidly evolving LLM landscape.
Second, they lack flexible control.
Most routers make a ``hard'' decision (e.g., simply outputting ``Model A'') without explaining why.
They do not estimate how much better or more expensive a model is.
As a result, users cannot adjust the router’s behaviors dynamically, for example, to save money when the budget is tight or to maximize accuracy for high-stakes tasks.

To overcome these limitations, we propose \textsc{Scope}  (\textbf{S}calable and \textbf{C}ontrollable \textbf{O}utcome \textbf{P}erformance \textbf{E}stimator), which changes the routers in two fundamental ways, as shown in Fig.~\ref{fig:comparison}. First, we move from fixed candidates to behavioral fingerprints. Traditional routers just memorize model names, but \textsc{Scope} makes decisions by looking at how a model actually answers similar questions—its “fingerprint”. By retrieving these behavioral signatures, \textsc{Scope} can evaluate any model, even one it has never seen before, simply by reading its fingerprint. This enables generalization to new models without retraining. Second, we shift from simple selection to detailed estimation. Instead of just saying “Pick Model A”, \textsc{Scope} predicts if a model will be correct and estimates how many tokens it will cost. After that, the final decision is made by maximizing a utility function that weighs accuracy against cost, adjusted dynamically by the user's budget preference. This transforms routing into a flexible optimization task, allowing users to seamlessly navigate the trade-off between performance and cost.

Our contributions are summarized as follows:

\begin{itemize}
\item We show that model routing does not have to be a simple choice among a fixed set of models.
Instead, \textsc{Scope} predicts how well a model will perform and how much it will cost based on its past behaviors on similar questions, which allows the system to naturally handle new models without retraining.

\item \textsc{Scope} gives users direct control over the trade-off between accuracy and cost.
By using predicted performance and cost, users can decide whether to save money or achieve higher accuracy for each query, without changing the system itself.

\item We demonstrate that \textsc{Scope} consistently achieves better accuracy–cost trade-offs than any individual model in the evaluated pool.
Depending on user preferences, it can either improve accuracy by up to \textbf{25.7\%} or reduce inference cost by up to \textbf{95.1\%}, making it an effective way to allocate test-time compute.

\end{itemize}

\begin{figure*}[h]   
  \centering
  \includegraphics[width=\textwidth]{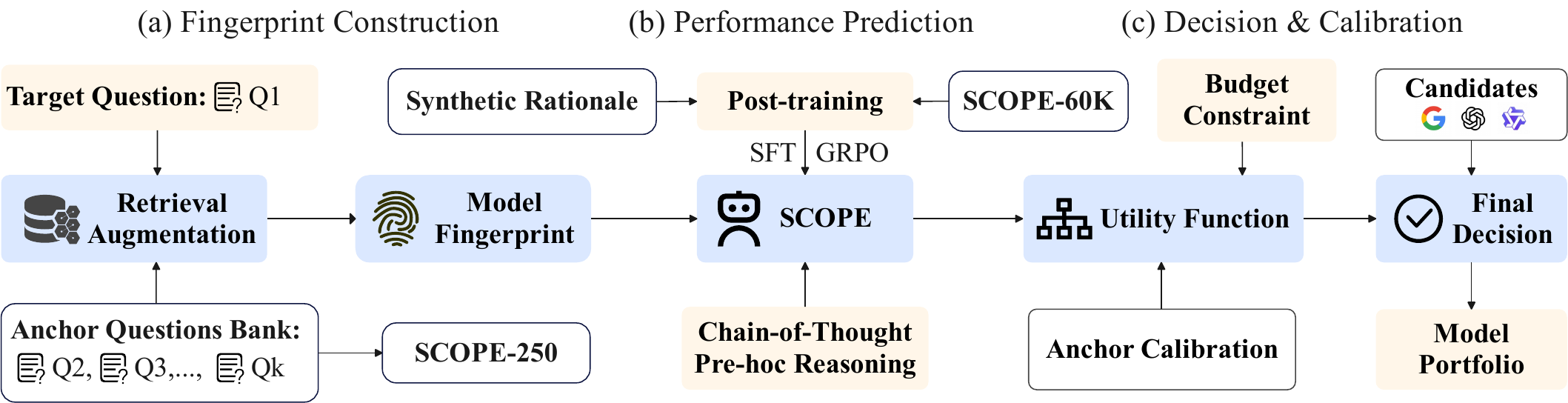}
  \caption{\textbf{The \textsc{Scope} Framework.} The pipeline consists of three stages: (a) constructing behavioral fingerprints via anchor retrieval; (b) estimating outcome correctness and cost using a reasoning-driven predictor optimized with SFT and GRPO; and (c) selecting the optimal model via a calibrated, budget-aware utility function. Ablation studies of key components are detailed in Section.~\ref{subsec:ablation}.}
  \label{fig:framework}
\end{figure*}
\label{subsec:related_routing}

\section{Related Work and Preliminaries}

\paragraph{Predictive Routing Methods.}
A line of work approaches model routing through lightweight predictive signals. 
Methods such as Avengers-Pro~\cite{Zhang_2025} and OmniRouter~\cite{mei2025omnirouterbudgetperformancecontrollable} rely on embedding-based retrieval or clustering to guide fast model selection, while Route-To-Reason (RTR)~\cite{pan2025routereasonadaptiverouting} and CARGO~\cite{barrak2025cargoframeworkconfidenceawarerouting} further incorporate learned regressors to estimate performance- or cost-related scores.
These approaches expose explicit continuous signals (e.g., confidence or predicted utility), enabling controllable routing via optimization and thresholding.
However, such systems often require carefully tuned feature extractors, regressors, and decision thresholds, resulting in a multi-stage pipeline with substantial hyperparameter sensitivity. 
This complexity makes deployment and adaptation difficult.

\paragraph{LLM-based Routers.}
Another line of work leverages Large Language Models that directly map input queries to routing decisions. 
Methods such as Router-R1~\cite{zhang2025routerr1teachingllmsmultiround} and xRouter~\cite{qian2025xroutertrainingcostawarellms} train LLMs with reinforcement learning (e.g., PPO~\cite{schulman2017proximalpolicyoptimizationalgorithms} or DAPO~\cite{yu2025dapoopensourcellmreinforcement}) to internalize routing policies, while CoRL~\cite{jin2025controllingperformancebudgetcentralized} and Reasoning Router 0.6B~\cite{mohseni2025reasoningrouter} rely on prompt-based reasoning or difficulty classification.
LLM-based routers benefit from strong natural language understanding, rich internal knowledge, and robust instruction-following ability. 
These properties allow them to interpret complex, loosely specified constraints, leverage prior knowledge about task difficulty or model behaviors, and provide a precise decision.

\paragraph{Motivation: Harmonizing Flexibility and Reasoning.}
\textsc{Scope} is designed to combine the strengths of two different routing philosophies.
On one hand, it retains the ability of large language models to understand complex questions and generate step-by-step reasoning.
On the other hand, it allows users to clearly control how much accuracy or cost they are willing to trade for a given query.
Instead of committing to a fixed decision rule, \textsc{Scope} first assesses how well a model is likely to perform and how expensive it will be, and only then decides which model to use.
This separation between prediction and decision-making enables \textsc{Scope} to be both powerful in reasoning and flexible in practice.

\paragraph{Roadmap.}
We now formalize the above design principles and describe how they are implemented in \textsc{Scope}, as  illustrated in Fig.~\ref{fig:framework}.
Section~\ref{sec:fingerprinting} addresses generalization by introducing retrieval-augmented~\cite{lewis2021retrievalaugmentedgenerationknowledgeintensivenlp} model fingerprinting (Fig.~\ref{fig:framework}a), enabling training-free adaptation to unseen models.
Section~\ref{sec:training} focuses on prediction reasoning model training, detailing a post-training pipeline based on hindsight distillation~\cite{liu2023chainhindsightalignslanguage} and Group Relative Policy Optimization (GRPO)~\cite{shao2024deepseekmathpushinglimitsmathematical} (Fig.~\ref{fig:framework}b).
Finally, Section~\ref{sec:routing} establishes controllability through an anchor-calibrated utility function that translates pre-hoc predictions into inference-time decisions (Fig.~\ref{fig:framework}c).

\section{Fingerprint Construction}
\label{sec:fingerprinting}

\textsc{Scope} casts pre-hoc performance estimation as a retrieval-augmented, in-context simulation problem. 
Given a target query $x$ and a target model $M$, rather than inspecting model weights, we retrieve task-relevant behavioral patterns from a pre-computed “fingerprint” to predict the model's correctness and cost.

\subsection{Model Fingerprinting}
To characterize models universally without retraining, we employ a fixed \textbf{Anchor Set} $\mathcal{B}=\{x_i\}_{i=1}^{N}$, consisting of $N$ representative queries covering diverse task distributions.

For any specific model $M$, we generate a \textbf{Model Fingerprint} $\phi_{\mathcal{B}}(M)$ by recording its ground-truth performance on the anchor set:
\begin{equation}
    \phi_{\mathcal{B}}(M) = \big\{(x_i,\, y_i^M,\, c_i^M)\big\}_{i=1}^{N}
    \label{eq:fingerprint}
\end{equation}
where $y_i^M \in \{0,1\}$ denotes correctness and $c_i^M \in \mathbb{R}_+$ denotes token cost.
This mechanism enables training-free scalability: adapting \textsc{Scope} to a new model requires only a single forward pass over $\mathcal{B}$ to generate its fingerprint, enabling estimation on arbitrary future queries without gradient updates.

\subsection{Retrieval-Augmented Context}
For a target query $x_{\text{target}}$, we identify the most relevant historical evidence using dense retrieval. Let $s(x, q)$ be the cosine similarity between the embeddings of two queries\footnote{Qwen3-Embedding-0.6B~\cite{qwen3embedding} is used.}. We retrieve the top-$K$ anchors:
\begin{equation}
    \mathcal{A}_K(x_{\text{target}}) = \operatorname{TopK}_{x_i \in \mathcal{B}} \, s(x_{\text{target}}, x_i)
\end{equation}

We then extract the corresponding performance metrics for model $M$ to form the retrieved fingerprint slice:
\begin{equation}
    \phi_K(x_{\text{target}}, M) = \{(x_i, y_i^M, c_i^M) \mid x_i \in \mathcal{A}_K(x_{\text{target}})\}
\end{equation}

Finally, we serialize this slice into a structured text format to condition the estimator. The final prompt is constructed as:
\begin{equation}
    P(x_{\text{target}}, M) = I \;\|\; \operatorname{Ser}(\phi_K(x_{\text{target}}, M)) \;\|\; x_{\text{target}}
    \label{eq:prompt}
\end{equation}
where $I$ is the system instruction and $\|$ denotes string concatenation. This allows \textsc{Scope} to infer the likely outcome for $x_{\text{target}}$ by analyzing how model $M$ behaved on semantically similar problems.

\section{Performance Prediction and Model Training}
\label{sec:training}

We instantiate \textsc{Scope} as a reasoning language model designed to provide robust performance estimates grounded in interpretable rationales. 

\subsection{Reasoning Estimator}
Conditioned on the retrieval-augmented prompt $P(x, M)$ containing the query $x$ and model $M$, the estimator generates a reasoning path $z$ followed by a structured prediction tuple.
Formally, we sample a response sequence:
\begin{equation}
    z, (\hat{y}, \hat{\ell}) \sim p_\theta(\cdot \mid P(x, M))
    \label{eq:cot_generation}
\end{equation}
where $z$ is the chain-of-thought rationale, $\hat{y} \in \{0,1\}$ is the predicted binary correctness, and $\hat{\ell} \in \mathbb{R}_+$ estimates the token consumption. By grounding predictions in $z$, \textsc{Scope} ensures estimates are causally derived from observed behavioral patterns in the fingerprint rather than mere statistical guessing.

\subsection{Data Construction}
To align \textsc{Scope} with diverse reasoning behaviors, we construct two complementary datasets.

\paragraph{\textsc{Scope-60K} (Supervision).} 
We collect a comprehensive dataset of model-query interactions across 13 diverse LLMs, covering domains from STEM to Humanities (Fig.~\ref{fig:scope-60K}). Each sample $(x, M, y, \ell)$ records the input query, model identity, ground-truth correctness, and token usage, serving as the basis for our reward modeling.

\paragraph{\textsc{Scope-250} (Fingerprinting).}
We curate a compact anchor set of 250 representative queries. This set functions as a “topological skeleton” of the task space, selected to preserve the category distribution of \textsc{Scope-60K}(Fig.~\ref{fig:anchor} in Appendix~\ref{abl:dataset} ). This ensures that the retrieved fingerprints provide in-context evidence that is distributionally aligned with test-time queries.

\begin{figure}[t]
    \centering
    \includegraphics[width=\linewidth]{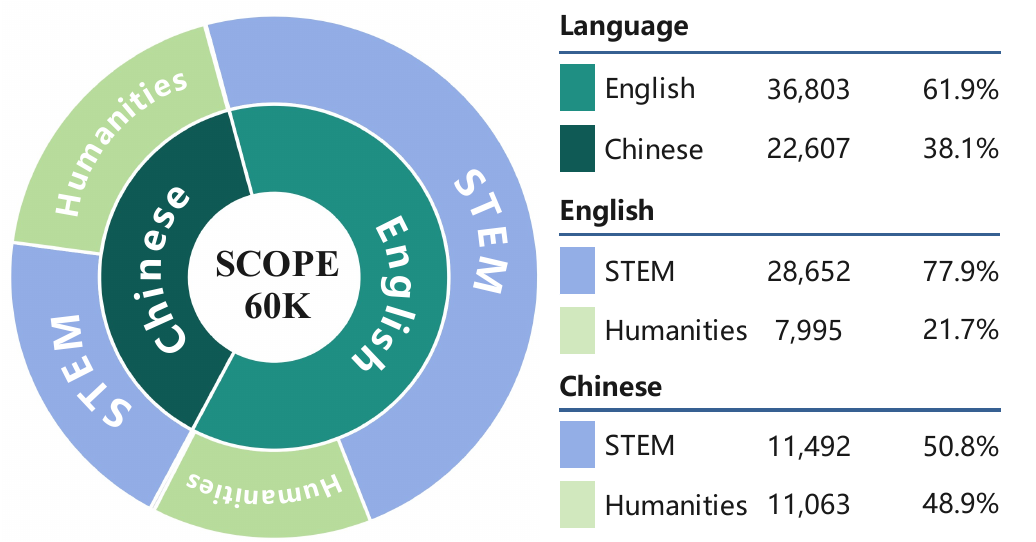}
    \vspace{-15pt}
    \caption{\textbf{Composition of \textsc{Scope}-60K.} The dataset spans diverse domains, primarily STEM (e.g., Math 29.7\%, Chemistry 20.2\%, Physics 18.0\%, etc) and Humanities (e.g., History 21.4\%, Politics 20.7\%, Chinese 18.0\%, etc).}
    \label{fig:scope-60K}
\end{figure}

\subsection{Training Strategy}
\label{subsec:training_strategy}

We perform a two-stage training of \textsc{Scope}, where each phase serves a distinct purpose. 
The \textbf{SFT stage} primarily initializes the model to produce concise CoT outputs; this ensures inference efficiency and yields cheaper, more stable rollouts for the subsequent reinforcement learning phase. 
Building on this efficient base, the \textbf{GRPO stage} serves as the main driver for maximizing predictive accuracy.

\paragraph{Stage 1: SFT via Hindsight Distillation.}
We first perform Supervised Fine-Tuning (SFT) using a \textit{hindsight distillation}~\cite{liu2023chainhindsightalignslanguage} approach. We prompt a teacher model with both the query and the ground-truth outcomes $(y, \ell)$ to generate a coherent, concise Chain-of-Thought $z$ (CoT)~\cite{wei2023chainofthoughtpromptingelicitsreasoning} that justifies the realized metrics. The model is then trained to predict $z, \hat{y}, \hat{\ell}$ using standard next-token prediction.

\paragraph{Stage 2: Reinforcement Learning via GRPO.}
To enforce output formatting and improve estimation precision, we further align the model using GRPO. We define a gated composite reward function:
\begin{equation}
    R(o) = \mathcal{G}(o) \cdot \Big( R_{\text{corr}}(\hat{y}, y_{\text{gt}}) + R_{\text{token}}(\hat{\ell}, \ell_{\text{gt}}) \Big)
    \label{eq:reward_total}
\end{equation}
where $\mathcal{G}(o)$ is a binary format gate.

\begin{itemize}
    \item \textbf{Correctness Reward ($R_{\text{corr}}$):} A sparse reward that equals 1 if the predicted binary label matches the ground truth, and 0 otherwise.
    
    \item \textbf{Adaptive Token Reward ($R_{\text{token}}$):} To handle variance in model verbosity, we design an \textbf{adaptive tolerance} mechanism. For reasoning models, we use a relative error tolerance (e.g., within 50\%); for standard models, we use a tighter absolute tolerance. (See Appendix \ref{app:reward_details} for the full formulation).
\end{itemize}

\section{Decision and Calibration}
\label{sec:routing}

Given a candidate set $\mathcal{M} = \{M_1, \dots, M_K\}$, our objective is to select the optimal model $M^*$ for each query $q$ that maximizes a user-defined trade-off between performance and cost. We formulate this as a utility maximization problem governed by a preference coefficient $\alpha \in [0, 1]$, where $\alpha=1$ prioritizes accuracy and $\alpha=0$ prioritizes cost.

\subsection{Utility Formulation}
To unify accuracy and cost into a single metric, we first apply min-max normalization within question clusters to ensure comparability. For cost normalization, we employ a log-transformation to enhance sensitivity in low-cost regimes.
We define the predicted utility $U_{\text{pred}}(M_i)$ based on \textsc{Scope}'s output:
\begin{equation}
    U_{\text{pred}}(M_i) = \alpha \cdot \hat{y}_i + (1-\alpha) \cdot (1-\hat{c}_i)^{\gamma_{\text{dyn}}}
    \label{eq:utility_pred}
\end{equation}
where $\hat{y}_i$ is the predicted binary correctness, $\hat{c}_i$ is the normalized predicted cost, and $\gamma_{\text{dyn}}$ is a sensitivity factor.

\begin{figure*}[t]
    \centering
    \includegraphics[width=\textwidth]{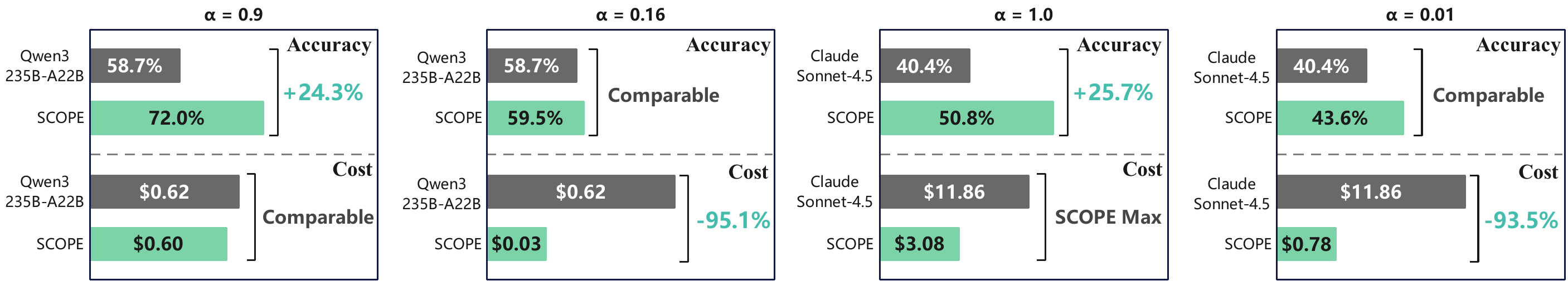}
    \caption{\textbf{Comparison with Individual Models.} \textsc{Scope} shows effective trade-off performance in two distinct regimes. \textbf{Performance Boosting:} With high $\alpha$, \textsc{Scope} transcends the accuracy ceiling of strong baselines (\texttt{Qwen3-235B} and \texttt{Claude-Sonnet-4.5}) by over \textbf{+24\%} while maintaining comparable or lower costs. \textbf{Cost Efficiency:} With low $\alpha$, \textsc{Scope} maintains competitive accuracy while slashing inference costs by up to \textbf{95.1\%}. More models' results are provided in Appendix Fig.~\ref{fig:other_comparison}.}
    \label{fig:main_comparison}
\end{figure*}

\begin{table*}[t]
\centering
\caption{Routing performance comparison. We evaluate \textsc{Scope} under different trade-off coefficients $\alpha$ against baseline routers on both Test Set and OOD Set.}
\label{tab:scope_results}
\resizebox{0.9\textwidth}{!}{%
\begin{tabular}{@{}l ccc ccc@{}}
\toprule
& \multicolumn{3}{c}{\textbf{Test Set}} & \multicolumn{3}{c}{\textbf{OOD Set}} \\
\cmidrule(lr){2-4} \cmidrule(lr){5-7}
\textbf{Setting} & \textbf{PGR} (\%, $\uparrow$) & \textbf{Avg.\ A} (\%, $\uparrow$) & \textbf{Cost} (\$, $\downarrow$) & \textbf{PGR} (\%, $\uparrow$) & \textbf{Avg.\ A} (\%, $\uparrow$) & \textbf{Cost} (\$, $\downarrow$) \\
\midrule
%Oracle & 100.00 & 83.33 & 0.09 & 100.00 & 64.00 & 0.98 \\
Random & 50.75 & 57.14 & 0.65 & 33.00 & 41.60 & 3.59 \\
Cheapest & 41.79 & 52.38 & 0.02 & 23.43 & 38.40 & 0.46 \\
Most Expensive & 65.67 & 65.08 & 2.50 & 31.80 & 41.20 & 11.89 \\
\midrule
SVM Router~\cite{llmrouter2025} & 0.00 & 30.16 & 0.01 & 29.41 & 40.40 & 11.86 \\
MLP Router~\cite{hu2024routerbenchbenchmarkmultillmrouting} & 26.12 & 44.05 & 0.28 & 31.80 & 41.20 & 11.52 \\
KNN Router~\cite{hu2024routerbenchbenchmarkmultillmrouting} & 10.45 & 35.71 & 0.16 & 29.41 & 40.40 & 11.79 \\
Graph Router~\cite{feng2025graphroutergraphbasedrouterllm} & \underline{80.60} & \underline{73.02} & 0.72 & 34.20 & 42.00 & 1.37 \\
xRouter~\cite{qian2025xroutertrainingcostawarellms} & / & 57.94 & 1.01 & / & 30.57 & 1.16 \\
\addlinespace[2pt]
\midrule
\textsc{Scope} ($\alpha = 0.0$) & 44.03 & 53.57 & 0.03 & 27.02 & 39.60 & 0.75 \\
\textsc{Scope} ($\alpha = 0.6$) & 67.91 & 66.27 & 0.22 & \underline{53.34} & \underline{48.40} & 1.38 \\
\textsc{Scope} ($\alpha = 1.0$) & \textbf{84.33} & \textbf{75.00} & 0.63 & \textbf{59.32} & \textbf{50.80} & 3.08 \\
\bottomrule
\end{tabular}%
}
\label{tab:main}
\end{table*}

\subsection{Anchor-Based Calibration}
To improve robustness, we incorporate a {calibration signal derived from the ground-truth performance of retrieved anchors. This acts as a historical prior. For the top-$k$ retrieved anchors, we compute an aggregated calibration utility $U_{\text{cal}}(M_i)$ using the weighted average of their actual performance and costs, weighted by their semantic similarity to the query.

\subsection{Final Decision}
The final routing decision aggregates the real-time prediction ($U_{\text{pred}}$) and the historical calibration ($U_{\text{cal}}$):
\begin{equation}
    M^* = \arg \max_{M_i \in \mathcal{M}} \left( w \cdot U_{\text{cal}}(M_i) + (1-w) \cdot U_{\text{pred}}(M_i) \right)
    \label{eq:final_routing}
\end{equation}
where $w \in [0,1]$ is an adaptive weighting factor that balances the influence of historical priors versus instance-specific predictions. To ensure smooth transitions across the Pareto frontier, we design $w$ to adjust dynamically based on the strictness of the budget constraint $\alpha$. We provide the detailed formulation of this dynamic weighting mechanism in Appendix~\ref{app:routing_details}.

\section{Experiments}
\label{sec:experiments}

In this section, we evaluate \textsc{Scope} through a set of experiments. 
Our evaluation focuses on three main questions:
\begin{enumerate}
    \item[\textbf{Q1.}] \textbf{Routing Performance:} Can \textsc{Scope} achieve a better accuracy--cost trade-off than every individual model and existing baselines?
    
    \item[\textbf{Q2.}] \textbf{Ablation Study:} How does each key component contribute to the overall performance of the system?
    
    \item[\textbf{Q3.}] \textbf{Cost Analysis:} Does \textsc{Scope} significantly reduce costs compared to other methods?
\end{enumerate}

\paragraph{Experimental Setup.}
We evaluate our framework using a diverse portfolio of 11 models, categorized into 7 \textit{seen} models (used for training) and 4 \textit{unseen} models (held out for training-free generalization; see Appendix Tab.~\ref{tab:model_pricing}).
Our evaluation is conducted across two distinct regimes:
(1) \textbf{In-Distribution (Test Set):} To assess standard routing effectiveness, we utilize a held-out 5\% split of the \textsc{Scope-60K} dataset. This set encompasses diverse tasks sourced from established benchmarks, including MMLU-Pro~\cite{wang2024mmluprorobustchallengingmultitask}, R-Bench~\cite{guo2025rbenchgraduatelevelmultidisciplinarybenchmarks}, GAOKAO~\cite{zhang2024evaluatingperformancelargelanguage}, and GPQA~\cite{rein2023gpqagraduatelevelgoogleproofqa}.
(2) \textbf{Out-of-Distribution (OOD Set):} To stress-test robustness against unseen architectures and frontier-level difficulty, we curate a specialized set of 250 queries. These samples are drawn exclusively from the most challenging reasoning benchmarks: AIME (2024/2025)~\cite{aime_2024_2025}, Humanity's Last Exam~\cite{phan2025humanitysexam}, SimpleQA~\cite{wei2024measuringshortformfactualitylarge}, and OlympiadBench~\cite{he2024olympiadbenchchallengingbenchmarkpromoting}.

\subsection{Routing Performance (Q1)}
\label{subsec:predictive_accuracy}

To answer Q1, we benchmark \textsc{Scope} against both individual models and existing router baselines. 
We perform these comparisons under two settings: the Test Set, which evaluates performance on seen models; and the OOD Set, which tests if the system can generalize to new, unseen models.

\paragraph{Comparison with Individual Models.} 
One key result is that \textsc{Scope} can outperform individual models while remaining cost-efficient.
As shown in Fig.~\ref{fig:main_comparison}, \textsc{Scope} behaves as a dynamic system that consistently achieves better accuracy--cost trade-offs than any single static model. 
When accuracy is prioritized ($\alpha \approx 1.0$), \textsc{Scope} improves accuracy by +24.3\% over the strong training-set model \texttt{Qwen3-235B-A22B}. 
More importantly, on the unseen model \texttt{Claude-Sonnet-4.5}, it improves accuracy by +25.7\% while reducing inference cost by 74\% (\$3.08 vs. \$11.86).

We further analyze this effect using Pareto frontier analysis. 
As illustrated in Fig.~\ref{fig:pareto_efficiency} (left), individual models correspond to fixed operating points, while \textsc{Scope} forms a dynamic frontier that covers a wider range of accuracy--cost trade-offs. 
These results show that \textsc{Scope} is not only a way to save cost, but also a practical form of pre-hoc test-time scaling (TTS)~\cite{snell2024scalingllmtesttimecompute}: by predicting which queries are difficult and allocating stronger models accordingly, it allows the combined system to outperform any single model.

\begin{figure}[t]
    \centering
    \includegraphics[width=\linewidth]{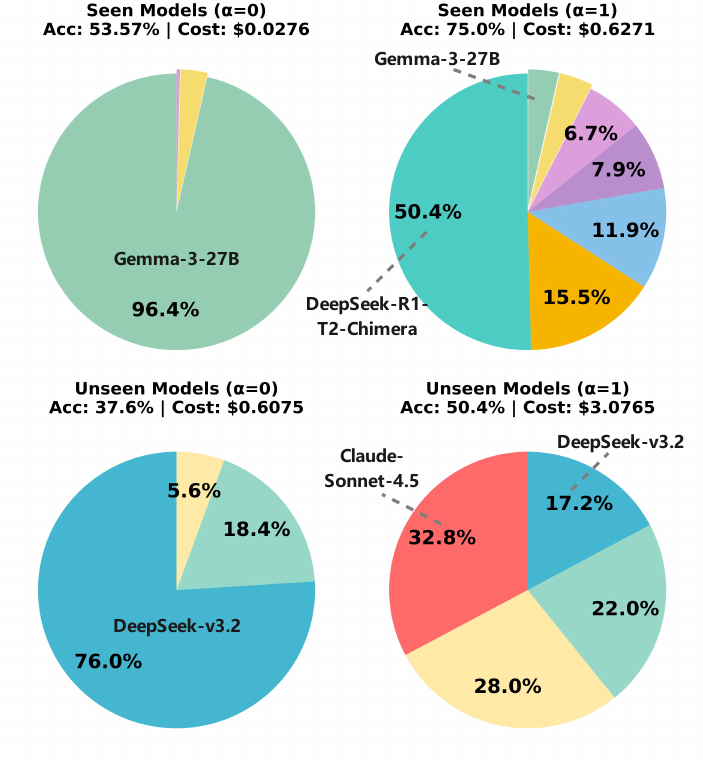}
    \caption{\textbf{Adaptive Model Portfolio.} \textsc{Scope} autonomously reconfigures its selection based on $\alpha$. It transitions from low-cost dominance at $\alpha=0$ to a diversified portfolio at $\alpha=1$. More details are provided in Appendix Fig.~\ref{fig:full_portfolio_dist}.}
    \label{fig:portfolio_dist}
\end{figure}

\paragraph{Comparison with Router Baselines.} 
Tab.~\ref{tab:main} reports the results compared with both predictive routing methods and LLM-based routers. 
We report Average Accuracy (Avg. A), total cost, and Performance Gap Recovered (PGR)~\cite{ong2025routellmlearningroutellms}, which measures how close a router gets to the optimal choice (i.e., the cheapest model that can answer the query correctly).

\textbf{On the Test Set}, all supervised baselines are trained under the same conditions: SVM~\cite{hearst1998support}, KNN~\cite{kearns89}, MLP~\cite{popescu2009multilayer}, and Graph Router~\cite{feng2025graphroutergraphbasedrouterllm} use the same \textsc{Scope-60K} split as our method. 
Under this setting, \textsc{Scope} ($\alpha=1.0$) achieves 75.00\% accuracy, clearly outperforming the strongest supervised baseline. 
We also compare with xRouter~\cite{qian2025xroutertrainingcostawarellms}, which uses a different model pool with proprietary high-end models (e.g., \texttt{GPT-5-mini}, \texttt{o4-mini}). 
Even with this advantage, xRouter reaches only 57.94\% accuracy at a cost of \$1.01. 
In contrast, \textsc{Scope} improves accuracy by over 17\% while reducing cost by 38\%.

\textbf{On the OOD Set}, the ability to handle unseen models becomes crucial. 
xRouter keeps a fixed candidate pool and therefore performs poorly on these benchmarks. 
For supervised baselines, we re-train them using the anchor set (\textsc{Scope-250}) so that they can include the new models as labels. 
Even with this adaptation, their performance drops to near random due to the distribution shift. 
In contrast, \textsc{Scope} generalizes to unseen models without retraining. 
By retrieving past behaviors of similar models, it can effectively route queries and maintain strong performance where traditional classifiers and fixed policies fail.

\paragraph{Model Portfolio.}

Fig.~\ref{fig:portfolio_dist} shows how \textsc{Scope} adapts its routing behaviors under different settings. 
When efficiency is prioritized ($\alpha=0$), the system mainly selects low-cost models (e.g., assigning 76.0\% of queries to \texttt{DeepSeek-v3.2}). 
When accuracy is prioritized ($\alpha=1$), it instead shifts toward stronger and more capable models. 
Importantly, on the OOD set, \textsc{Scope} can identify and use powerful new models (e.g., \texttt{Claude-Sonnet-4.5}) that were not seen during training, showing that our fingerprinting method can capture model quality from past behaviors without requiring retraining.

% \begin{tcolorbox}[
%     colback=blue!5!white,      % 极淡蓝色背景
%     colframe=blue!50!white,    % 深蓝边框
%     title=\textbf{Model Portfolio (test set, $\alpha=1.0$)}, % 标题
%     fonttitle=\bfseries\small, % 标题字体
%     left=4pt, right=4pt, top=4pt, bottom=4pt, % 内部边距调小
%     boxrule=0.8pt
% ]
%     \small \centering
%     \texttt{deepseek-r1t2-chimera} (\textbf{50.40\%}) 
%     \texttt{qwen3-235b-a22b} (\textbf{15.48\%}) $\bullet$ 
%     \texttt{qwen3-14b} (\textbf{11.90\%}) 
%     \texttt{nova-2-lite-v1} (\textbf{7.94\%}) $\bullet$ 
%     \texttt{gpt-oss-20b} (\textbf{6.75\%}) 
%     \texttt{llama-3-3-70b} (\textbf{3.97\%}) $\bullet$ 
%     \texttt{gemma-3-27b} (\textbf{3.57\%})
% \end{tcolorbox}

% \begin{tcolorbox}[
%     colback=blue!5!white,      % 极淡蓝色背景
%     colframe=blue!50!white,    % 深蓝边框
%     title=\textbf{Model Portfolio (OOD set, $\alpha=1.0$)}, % 标题
%     fonttitle=\bfseries\small, % 标题字体
%     left=4pt, right=4pt, top=4pt, bottom=4pt, % 内部边距调小
%     boxrule=0.8pt
% ]
%     \small \centering
%     \texttt{claude-sonnet-4.5} (\textbf{32.8\%}) $\bullet$ 
%     \texttt{gpt-5-mini} (\textbf{28.0\%}) 
%     \texttt{grok-4.1-fast} (\textbf{22.0\%}) $\bullet$ 
%     \texttt{deepseek-v3.2} (\textbf{17.2\%})
% \end{tcolorbox}

% \textcolor{blue}{TODO: Finish this ablation.}

\begin{table*}[ht]
\centering
\caption{Predictive accuracy across different categories. We report Mean Absolute Error (MAE) for token length and Accuracy (ACC) for correctness. \textsc{\textsc{Scope}} variants consistently outperform the base model without training.}
\label{tab:predictive_results_category}
\small
\setlength{\tabcolsep}{3pt}
\begin{tabular}{@{}cc cc cc cc cc cc cc @{}}
\toprule
\textbf{Method} & \textbf{Anchors} & \multicolumn{2}{c}{\textbf{Mathematics}} & \multicolumn{2}{c}{\textbf{Physics}} & \multicolumn{2}{c}{\textbf{Chemistry}} & \multicolumn{2}{c}{\textbf{History}} & \multicolumn{2}{c}{\textbf{Engineering}} & \multicolumn{2}{c}{\textbf{Overall}} \\
\cmidrule(r){3-4} \cmidrule(r){5-6} \cmidrule(r){7-8} \cmidrule(r){9-10} \cmidrule(r){11-12} \cmidrule(r){13-14}
& & MAE $\downarrow$ & ACC $\uparrow$ & MAE $\downarrow$ & ACC $\uparrow$ & MAE $\downarrow$ & ACC $\uparrow$ & MAE $\downarrow$ & ACC $\uparrow$ & MAE $\downarrow$ & ACC $\uparrow$ & MAE $\downarrow$ & ACC $\uparrow$ \\
\midrule
\textsc{Scope} & 5 & \textbf{2085} & \underline{69.9\%} & \textbf{2307} & \textbf{79.9\%} & \textbf{2452} & \underline{78.4\%} & \underline{413} & \underline{67.1\%} & \textbf{3112} & \textbf{78.7\%} & \textbf{1555} & \textbf{77.0\%} \\
\textsc{Scope}$_{\text{NoCoT}}$ & 5 & 2339 & \textbf{72.5\%} & 2643 & \underline{74.3\%} & 2991 & \textbf{83.7\%} & \textbf{393} & \textbf{70.5\%} & \underline{3113} & \underline{71.0\%} & 1739 & \underline{75.1\%} \\
\midrule
Qwen4B & 5 & \underline{2292} & 62.5\% & \underline{2630} & 55.6\% & 2677 & 66.3\% & 631 & 60.6\% & 3391 & 59.4\% & 1740 & 59.4\% \\
$\text{Qwen4B}_{\text{NoCoT}}$ & 5 & 2339 & 58.3\% & 2649 & 45.0\% & \underline{2613} & 54.7\% & 508 & 62.2\% & 3513 & 59.8\% & \underline{1728} & 56.7\% \\
\midrule
Qwen4B & 0 & 2997 & 61.3\% & 3559 & 37.9\% & 3294 & 38.8\% & 674 & 61.5\% & 4477 & 53.1\% & 2213 & 53.5\% \\
$\text{Qwen4B}_{\text{NoCoT}}$ & 0 & 3046 & 61.4\% & 3587 & 40.5\% & 3321 & 42.6\% & 705 & 61.5\% & 4526 & 56.6\% & 2234 & 54.3\% \\
\bottomrule
\end{tabular}
\end{table*}

% \textcolor{blue}{TODO: Finish the experiment. The figure should be consice and beautiful.}
\subsection{Ablation Study (Q2)}
\label{subsec:ablation}
In this section, we break down \textsc{Scope} to understand how its main components contribute to overall performance. 
Specifically, we study three questions: 
(1) How accurate is \textsc{Scope} at predicting model performance? 
(2) Does chain-of-thought reasoning help improve these predictions? 
(3) How do the utility function and anchor-based calibration affect the final routing results?

\paragraph{Accuracy of Pre-hoc Estimation.}
\label{subsec:ablation_estimation}
We implement \textsc{Scope} by fine-tuning the \texttt{Qwen3-4B-Instruct-2507}~\cite{qwen3technicalreport} backbone (referred to as \texttt{Qwen3-4B} in this paper). 
To evaluate how accurate our pre-hoc predictions are, we compare \textsc{Scope} with standard baselines and with a non-reasoning variant on the test set. 
To isolate the effect of reasoning, we train a variant called \textsc{Scope}$_{\text{NoCoT}}$, using a dataset that removes reasoning traces, so that the model learns a direct mapping from input to output. 
As shown in Tab.~\ref{tab:predictive_results_category}, \textsc{Scope} achieves the highest accuracy of 77.0\% and the lowest token prediction error (MAE 1555). 
This clearly outperforms the few-shot baseline (\texttt{Qwen-4B} at 59.4\%), showing the effectiveness of our training approach. 
Moreover, \textsc{Scope} also outperforms \textsc{Scope}$_{\text{NoCoT}}$ (75.1\%), indicating that generating intermediate reasoning steps helps ground the prediction and improves estimation accuracy.

\begin{figure}[t]
    \centering
    \includegraphics[width=\linewidth]{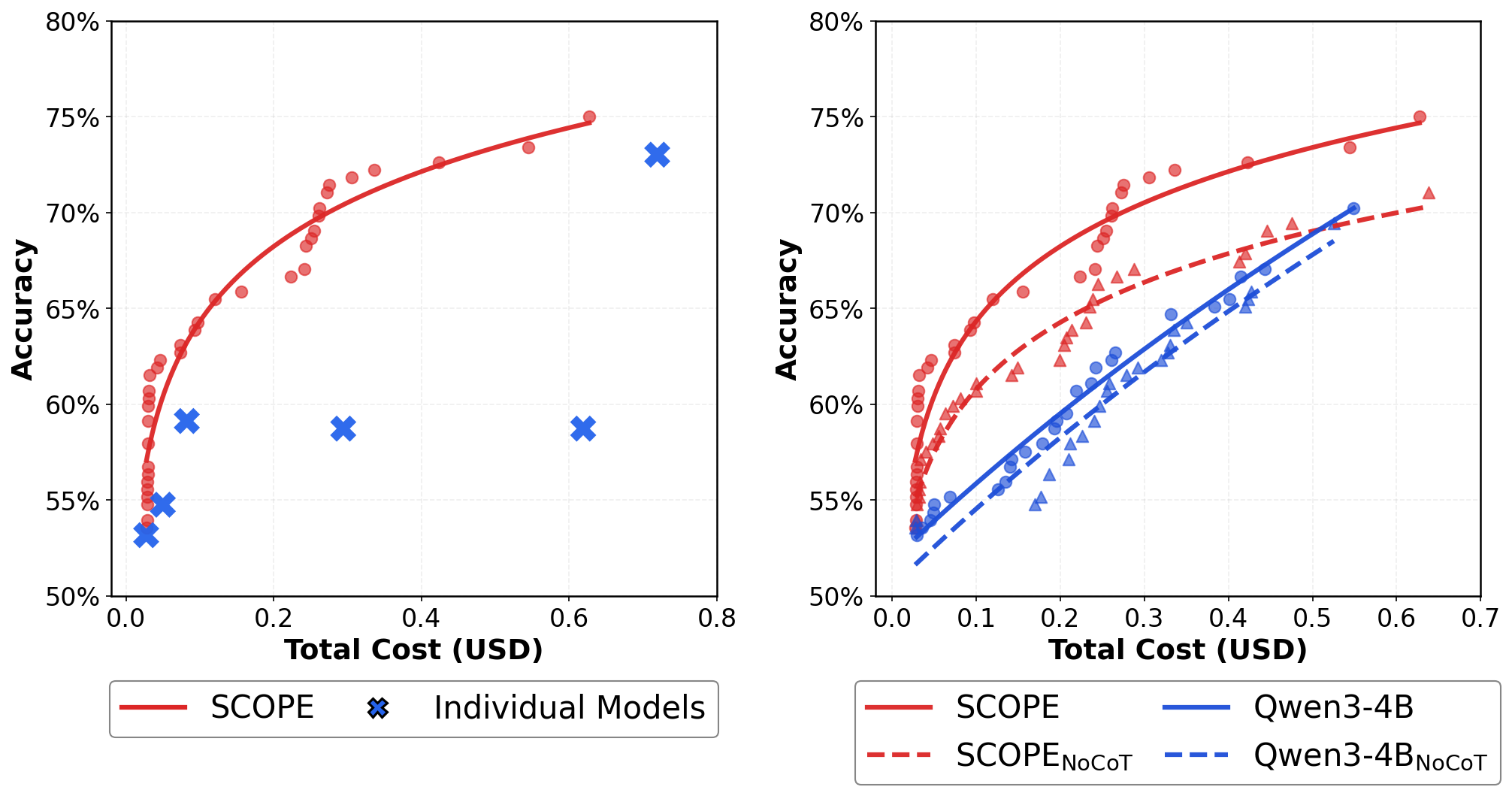}
    \caption{\textbf{Effectiveness of \textsc{Scope} and Reasoning Strategy.} 
    (Left) \textsc{Scope} establishes a superior Pareto frontier compared to individual models, consistently achieving higher accuracy at lower costs. 
    (Right) Ablation results demonstrate that our RL-enhanced CoT strategy (Solid Red) significantly outperforms direct prediction baselines (Dashed lines), validating that reasoning capability is essential for precise performance estimation.}
    \label{fig:pareto_efficiency}
\end{figure}

\begin{figure}[t]
    \centering
    \includegraphics[width=\linewidth]{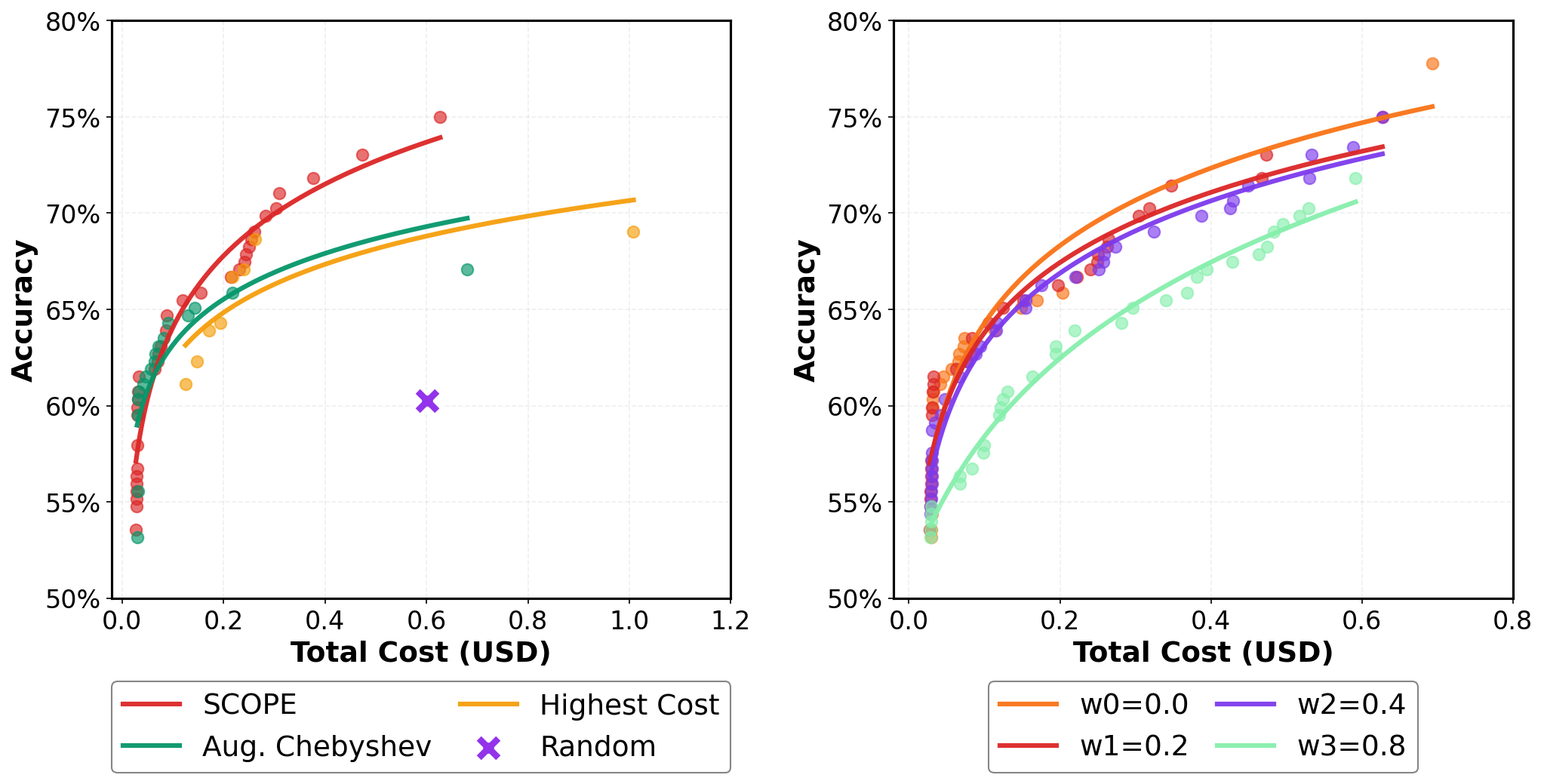}
    \caption{\textbf{Efficacy of Utility Function Design.} 
    (Left) Our dynamic utility maximization strategy significantly outperforms standard scalarization baselines (e.g., Chebyshev, Random), proving the effectiveness of the proposed routing objective. 
    (Right) Sensitivity analysis on the aggregation weight $w$ confirms the rationality of our two-stage design: balancing historical anchor statistics with real-time predictions yields the most robust trade-off curve.}
    \label{fig:utility_ablation}
\end{figure}

% \begin{table}[h]
% \centering
% \caption{Evaluation of outcome prediction accuracy.}
% \label{tab:estimation_acc}
% ...
% \end{table}

\paragraph{Impact of Reasoning and Training Strategy.}
\label{subsec:ablation_reasoning}
To understand the impact of reasoning and training strategy on routing performance, we compare different methods on the cost--accuracy Pareto frontier, as shown in Fig.~\ref{fig:pareto_efficiency} (right). 
First, both \textsc{Scope} variants (red lines) clearly outperform the untuned \texttt{Qwen-4B} baseline (blue lines) across all cost levels. 
This shows that the base model cannot reliably predict performance through simple prompting alone, and that our training pipeline is necessary to learn this ability. 
Second, among the trained variants, the reasoning-based \textsc{Scope} (solid red line) consistently outperforms \textsc{Scope}$_{\text{NoCoT}}$ (dashed red line). 
This suggests that while fine-tuning provides the basic capability, adding reasoning supervision helps the model make more precise predictions, leading to better routing decisions.

\paragraph{Effectiveness of Calibration and Decision Logic.}
\label{subsec:ablation_decision}
Finally, we examine our decision-making framework by studying the utility function and the calibration prior in Fig.~\ref{fig:utility_ablation}. 
For the utility function (Left), our dynamic maximization strategy consistently outperforms standard baselines across the Pareto frontier. 
In particular, it achieves better results than Augmented Chebyshev~\cite{chen2019non} and the greedy Highest Cost strategy. 
The Highest Cost baseline performs poorly because it always selects the most expensive model allowed by the budget, even when cheaper models are sufficient. 
In contrast, our method can identify these cheaper but effective choices, showing the advantage of using a utility-based objective.

We also study the role of calibration in controlling routing behavior (Right). 
When using only real-time predictions ($w=0$), the resulting frontier becomes discontinuous, with large gaps in the middle cost range (between \$0.1 and \$0.7). 
This means the router often jumps between extreme choices (very cheap or very expensive models), and fails to use mid-range models. 
Adding the calibration term ($w=0.2$) smooths this behavior by incorporating historical information, leading to a more balanced use of models across different costs. 
As a result, the system can provide finer control over the accuracy--cost trade-off, instead of being limited to only extreme options.

\subsection{Cost Analysis (Q3)}
\label{subsec:cost_efficiency_q3}

\begin{figure}[t]
    \centering
    \includegraphics[width=\linewidth]{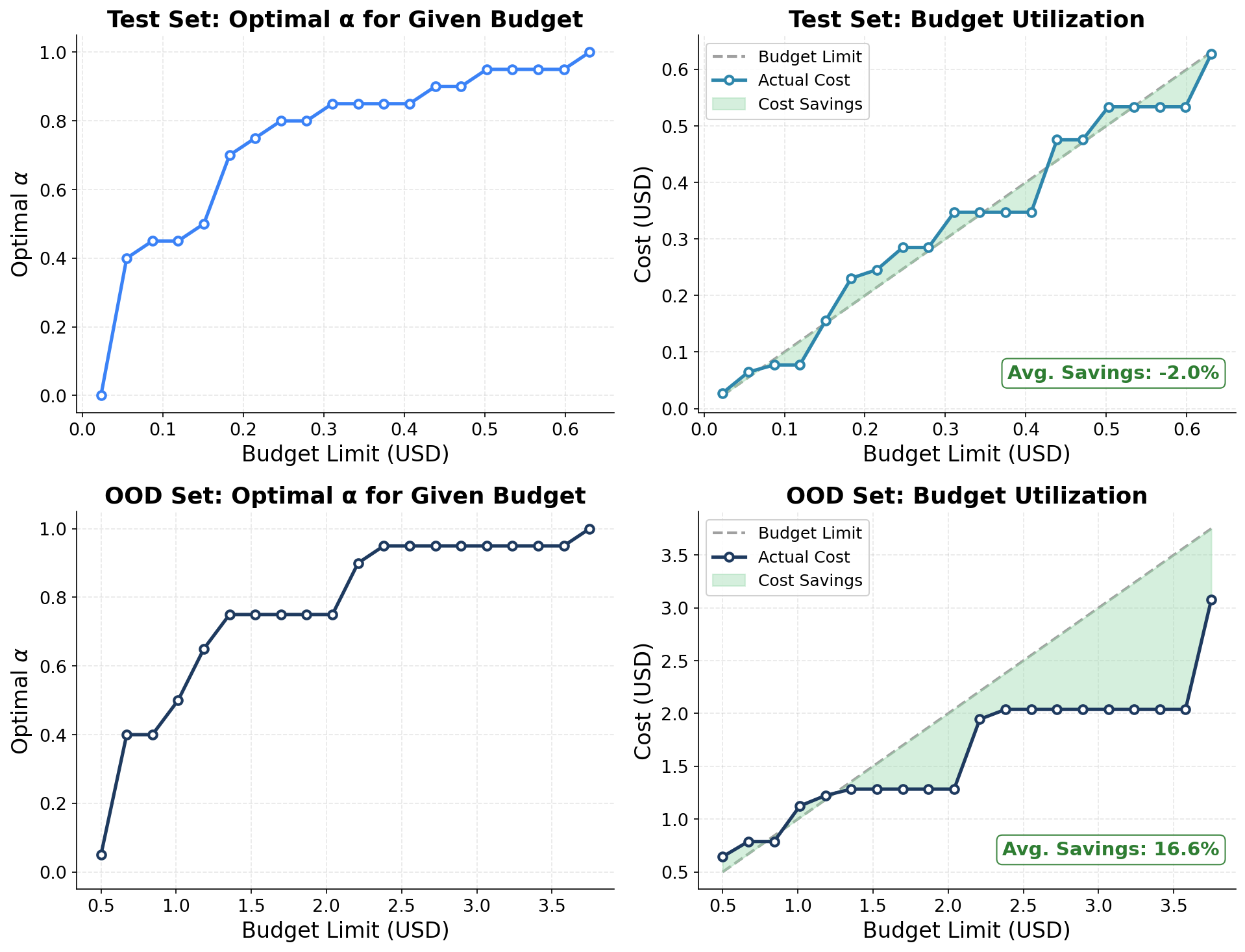}
    \caption{\textbf{Budget-aware control.} Given a user-specified budget, \textsc{Scope} automatically maps it to a trade-off coefficient $\alpha$ and performs routing that closely matches the desired budget. This enables precise, user-controllable accuracy--cost trade-offs at inference time.}
    \label{fig:budget_control}
\end{figure}

We analyze the cost of \textsc{Scope} from three aspects: budget control, token overhead relative to test-time scaling, and the computational cost of domain adaptation.

\paragraph{Flexible budget control.}
\textsc{Scope} treats the utility coefficient $\alpha$ as a controllable variable rather than a fixed hyperparameter.
Given an incoming query and a user-specified budget limit, we formulate a constrained optimization problem using \textsc{Scope}'s pre-hoc estimates of accuracy and cost.
We then select $\alpha^\star$ that maximizes expected accuracy subject to the budget constraint.
As illustrated in Fig.~\ref{fig:budget_control}, varying the budget yields a continuous shift in routing behavior, from cost-sensitive to accuracy-seeking as the constraint relaxes.
The full derivation and the finite search procedure are provided in Appendix~\ref{sec:budget}.

\begin{figure}[t]
    \centering
    \includegraphics[width=\linewidth]{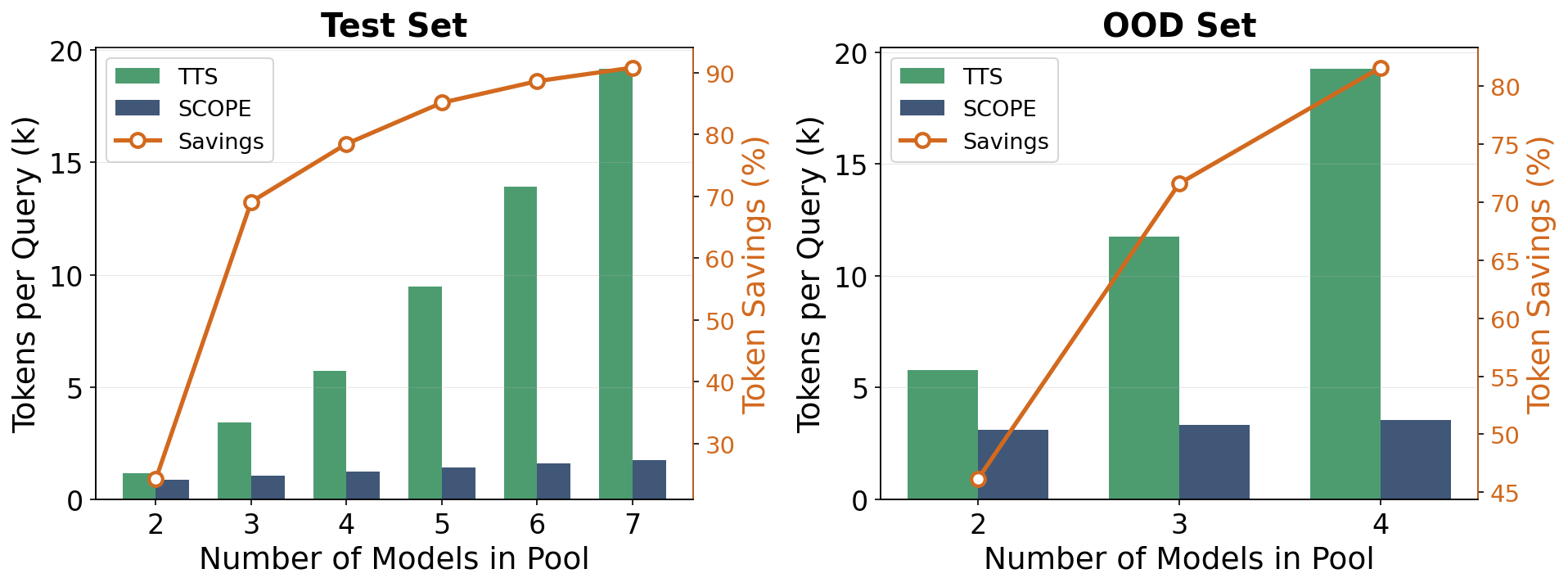}
    \caption{\textbf{Token comparison between Test-Time Scaling (TTS) and \textsc{Scope}.} Green bars show TTS token usage (executing all models per query). Blue bars show \textsc{Scope} token usage (pool-wide prediction overhead plus a single model execution). The orange line shows the percentage token savings.}
    \label{fig:token_savings}
\end{figure}

\paragraph{Pre-hoc evaluation versus test-time scaling.}
We compare \textsc{Scope} with standard test-time scaling (TTS), which requires executing multiple candidate models and therefore incurs cost that scales linearly with the pool size.
In contrast, \textsc{Scope} replaces these executions with lightweight pre-hoc predictions, and executes only one model per query.
This reduction relies on hindsight distillation, which shortens the reasoning analysis used for prediction from 2354.9 tokens to 238.7 tokens on average.
As shown in Fig.~\ref{fig:token_savings}, with a pool of 7 models, \textsc{Scope} reduces token consumption by 90.8\% compared to TTS (1.8k versus 19.2k tokens per query).
As the pool grows, the gap widens because \textsc{Scope} scales primarily with the distilled prediction tokens, whereas TTS scales with full model generation tokens.
A formal scaling analysis is provided in Appendix~\ref{app:token_efficiency}.

\paragraph{Computational cost of domain adaptation.}
Compared to baseline routers that require retraining for new domains, \textsc{Scope} adapts using anchor-based inference.
We quantify this difference using \texttt{DeepSeek-V3.2} (37B activated parameters at inference).
A representative baseline pipeline executes the 37B model on the full training set (4.8k samples) to generate labels, and then trains a 4B router, with a total cost of $3.40 \times 10^{18}$ FLOPs.
In contrast, \textsc{Scope} performs inference only on 250 anchor questions, costing $9.02 \times 10^{16}$ FLOPs.
This yields a 38$\times$ reduction in compute:
\begin{center}
\begin{tabular}{lcc}
\toprule
& \textbf{Baseline} & \textbf{\textsc{Scope}} \\
\midrule
37B inference samples & 4,778 & 250 \\
37B inference tokens & 23.3M & 1.22M \\
4B training tokens & 69.8M & --- \\
Total FLOPs & $3.40 \times 10^{18}$ & $9.02 \times 10^{16}$ \\
\textbf{Ratio} & \multicolumn{2}{c}{\textbf{38$\times$}} \\
\bottomrule
\end{tabular}
\end{center}
The detailed derivation is provided in Appendix~\ref{app:computational_cost}.

%\paragraph{Takeaway.}
%\textsc{\textsc{Scope}} replaces many expensive API executions with lightweight predictor calls.
%After RL training, the predictor is short (238.7 tokens per model per query), so even when evaluating the whole pool we incur about
%$|\mathcal{M}|\cdot 238.7$ tokens of prediction overhead and still execute only one model per query.
%This is substantially cheaper than test-time scaling, which executes every candidate model and costs $19{,}174.6$ to $29{,}592.9$ tokens per query in our setting.
%When the pool becomes large, the total prediction overhead can grow (e.g., $\sim 5{,}000$ tokens for $|\mathcal{M}|=20$), but our deployment target is routing among large, API-served frontier models where the relevant alternative is multi-API execution.
%Under this assumption, replacing repeated API calls by short predictions is the practically appropriate and cost-efficient choice, and the efficiency gap widens as $|\mathcal{M}|$ increases because \textsc{\textsc{Scope}} scales with $\E[\ell_{\mathrm{pred}}]$ rather than $\E[\ell_{\mathrm{api}}]$.

\section{Conclusion and Future Work}
We introduced \textsc{Scope}, a routing framework that decides which model to use by first predicting how accurate and how expensive each model before running it. 
By relying on models’ past behaviors on similar questions, \textsc{Scope} can naturally handle new, unseen models without retraining, and gives users direct control over the trade-off between performance and cost. 
Across a wide range of benchmarks and budget settings, \textsc{Scope} achieves better accuracy--cost trade-offs than individual models, showing that pre-hoc estimation is an effective way to allocate test-time compute.

Looking forward, we plan to make \textsc{Scope} fully open and easy to use in practice. 
We will release the trained \textsc{Scope} model weights on HuggingFace, along with the \textsc{Scope-60K} training set and the \textsc{Scope-250} anchor set as public datasets, and continuously maintain and update the model fingerprint library to support customizable model portfolios. 
In addition, we will open-source the full three-stage pipeline of \textsc{Scope} on GitHub, enabling researchers and practitioners to easily adapt the framework to new models, new routing objectives, and real-world deployment scenarios.

\section*{Impact Statement}
This paper presents work whose goal is to advance the field of machine learning, specifically in the area of efficient and controllable model routing. 
There are many potential societal consequences of this line of research, including improved accessibility to advanced AI systems through reduced computational cost and better resource allocation. 
We do not foresee any immediate negative societal impacts arising uniquely from this work beyond those already associated with the general deployment of large language models.

\bibliography{example_paper}

@phdthesis{kearns89,
  author = {M. J. Kearns},
  title =  {Computational Complexity of Machine Learning},
  school = {Department of Computer Science, Harvard University},
  year =   {1989}
}

@misc{kadavath2022languagemodelsmostlyknow,
      title={Language Models (Mostly) Know What They Know}, 
      author={Saurav Kadavath and Tom Conerly and Amanda Askell and Tom Henighan and Dawn Drain and Ethan Perez and Nicholas Schiefer and Zac Hatfield-Dodds and Nova DasSarma and Eli Tran-Johnson and Scott Johnston and Sheer El-Showk and Andy Jones and Nelson Elhage and Tristan Hume and Anna Chen and Yuntao Bai and Sam Bowman and Stanislav Fort and Deep Ganguli and Danny Hernandez and Josh Jacobson and Jackson Kernion and Shauna Kravec and Liane Lovitt and Kamal Ndousse and Catherine Olsson and Sam Ringer and Dario Amodei and Tom Brown and Jack Clark and Nicholas Joseph and Ben Mann and Sam McCandlish and Chris Olah and Jared Kaplan},
      year={2022},
      eprint={2207.05221},
      archivePrefix={arXiv},
      primaryClass={cs.CL},
      url={https://arxiv.org/abs/2207.05221}, 
}

@misc{kuhn2023semanticuncertaintylinguisticinvariances,
      title={Semantic Uncertainty: Linguistic Invariances for Uncertainty Estimation in Natural Language Generation}, 
      author={Lorenz Kuhn and Yarin Gal and Sebastian Farquhar},
      year={2023},
      eprint={2302.09664},
      archivePrefix={arXiv},
      primaryClass={cs.CL},
      url={https://arxiv.org/abs/2302.09664}, 
}

@misc{wang2023selfconsistencyimproveschainthought,
      title={Self-Consistency Improves Chain of Thought Reasoning in Language Models}, 
      author={Xuezhi Wang and Jason Wei and Dale Schuurmans and Quoc Le and Ed Chi and Sharan Narang and Aakanksha Chowdhery and Denny Zhou},
      year={2023},
      eprint={2203.11171},
      archivePrefix={arXiv},
      primaryClass={cs.CL},
      url={https://arxiv.org/abs/2203.11171}, 
}

@misc{xiao2025generalizedcorrectnessmodelslearning,
      title={Generalized Correctness Models: Learning Calibrated and Model-Agnostic Correctness Predictors from Historical Patterns}, 
      author={Hanqi Xiao and Vaidehi Patil and Hyunji Lee and Elias Stengel-Eskin and Mohit Bansal},
      year={2025},
      eprint={2509.24988},
      archivePrefix={arXiv},
      primaryClass={cs.CL},
      url={https://arxiv.org/abs/2509.24988}, 
}

@misc{khanmohammadi2025calibratingllmconfidenceprobing,
      title={Calibrating LLM Confidence by Probing Perturbed Representation Stability}, 
      author={Reza Khanmohammadi and Erfan Miahi and Mehrsa Mardikoraem and Simerjot Kaur and Ivan Brugere and Charese H. Smiley and Kundan Thind and Mohammad M. Ghassemi},
      year={2025},
      eprint={2505.21772},
      archivePrefix={arXiv},
      primaryClass={cs.CL},
      url={https://arxiv.org/abs/2505.21772}, 
}

@misc{snell2024scalingllmtesttimecompute,
      title={Scaling LLM Test-Time Compute Optimally can be More Effective than Scaling Model Parameters}, 
      author={Charlie Snell and Jaehoon Lee and Kelvin Xu and Aviral Kumar},
      year={2024},
      eprint={2408.03314},
      archivePrefix={arXiv},
      primaryClass={cs.LG},
      url={https://arxiv.org/abs/2408.03314}, 
}

@misc{han2025tokenbudgetawarellmreasoning,
      title={Token-Budget-Aware LLM Reasoning}, 
      author={Tingxu Han and Zhenting Wang and Chunrong Fang and Shiyu Zhao and Shiqing Ma and Zhenyu Chen},
      year={2025},
      eprint={2412.18547},
      archivePrefix={arXiv},
      primaryClass={cs.CL},
      url={https://arxiv.org/abs/2412.18547}, 
}

@misc{wang2023costeffectivehyperparameteroptimizationlarge,
      title={Cost-Effective Hyperparameter Optimization for Large Language Model Generation Inference}, 
      author={Chi Wang and Susan Xueqing Liu and Ahmed H. Awadallah},
      year={2023},
      eprint={2303.04673},
      archivePrefix={arXiv},
      primaryClass={cs.CL},
      url={https://arxiv.org/abs/2303.04673}, 
}

@misc{singh2025openaigpt5card,
      title={OpenAI GPT-5 System Card}, 
      author={OpenAI},
      year={2025},
      eprint={2601.03267},
      archivePrefix={arXiv},
      primaryClass={cs.CL},
      url={https://arxiv.org/abs/2601.03267}, 
}

@misc{deepseekai2025deepseekv32pushingfrontieropen,
      title={DeepSeek-V3.2: Pushing the Frontier of Open Large Language Models}, 
      author={DeepSeek},
      year={2025},
      eprint={2512.02556},
      archivePrefix={arXiv},
      primaryClass={cs.CL},
      url={https://arxiv.org/abs/2512.02556}, 
}

@misc{wei2023chainofthoughtpromptingelicitsreasoning,
      title={Chain-of-Thought Prompting Elicits Reasoning in Large Language Models}, 
      author={Jason Wei and Xuezhi Wang and Dale Schuurmans and Maarten Bosma and Brian Ichter and Fei Xia and Ed Chi and Quoc Le and Denny Zhou},
      year={2023},
      eprint={2201.11903},
      archivePrefix={arXiv},
      primaryClass={cs.CL},
      url={https://arxiv.org/abs/2201.11903}, 
}

@misc{shinn2023reflexionlanguageagentsverbal,
      title={Reflexion: Language Agents with Verbal Reinforcement Learning}, 
      author={Noah Shinn and Federico Cassano and Edward Berman and Ashwin Gopinath and Karthik Narasimhan and Shunyu Yao},
      year={2023},
      eprint={2303.11366},
      archivePrefix={arXiv},
      primaryClass={cs.AI},
      url={https://arxiv.org/abs/2303.11366}, 
}

@misc{ong2025routellmlearningroutellms,
      title={RouteLLM: Learning to Route LLMs with Preference Data}, 
      author={Isaac Ong and Amjad Almahairi and Vincent Wu and Wei-Lin Chiang and Tianhao Wu and Joseph E. Gonzalez and M Waleed Kadous and Ion Stoica},
      year={2025},
      eprint={2406.18665},
      archivePrefix={arXiv},
      primaryClass={cs.LG},
      url={https://arxiv.org/abs/2406.18665}, 
}

@misc{chen2023frugalgptuselargelanguage,
      title={FrugalGPT: How to Use Large Language Models While Reducing Cost and Improving Performance}, 
      author={Lingjiao Chen and Matei Zaharia and James Zou},
      year={2023},
      eprint={2305.05176},
      archivePrefix={arXiv},
      primaryClass={cs.LG},
      url={https://arxiv.org/abs/2305.05176}, 
}

@misc{cao2025dreamprmdomainreweightedprocessreward,
      title={DreamPRM: Domain-Reweighted Process Reward Model for Multimodal Reasoning}, 
      author={Qi Cao and Ruiyi Wang and Ruiyi Zhang and Sai Ashish Somayajula and Pengtao Xie},
      year={2025},
      eprint={2505.20241},
      archivePrefix={arXiv},
      primaryClass={cs.LG},
      url={https://arxiv.org/abs/2505.20241}, 
}

@inproceedings{Zhang_2025, series={DAI ’25},
   title={Beyond GPT-5: Making LLMs Cheaper and Better via Performance-Efficiency Optimized Routing},
   url={http://dx.doi.org/10.1145/3772429.3772445},
   DOI={10.1145/3772429.3772445},
   booktitle={Proceedings of the 2025 The Seventh International Conference on Distributed Artificial Intelligence},
   publisher={ACM},
   author={Zhang, Yiqun and Li, Hao and Chen, Jianhao and Zhang, Hangfan and Ye, Peng and Bai, Lei and Hu, Shuyue},
   year={2025},
   month=nov, pages={122–129},
   collection={DAI ’25} }

@misc{mei2025omnirouterbudgetperformancecontrollable,
      title={OmniRouter: Budget and Performance Controllable Multi-LLM Routing}, 
      author={Kai Mei and Wujiang Xu and Minghao Guo and Shuhang Lin and Yongfeng Zhang},
      year={2025},
      eprint={2502.20576},
      archivePrefix={arXiv},
      primaryClass={cs.DB},
      url={https://arxiv.org/abs/2502.20576}, 
}

@misc{zhang2025routerr1teachingllmsmultiround,
      title={Router-R1: Teaching LLMs Multi-Round Routing and Aggregation via Reinforcement Learning}, 
      author={Haozhen Zhang and Tao Feng and Jiaxuan You},
      year={2025},
      eprint={2506.09033},
      archivePrefix={arXiv},
      primaryClass={cs.CL},
      url={https://arxiv.org/abs/2506.09033}, 
}

@misc{qian2025xroutertrainingcostawarellms,
      title={xRouter: Training Cost-Aware LLMs Orchestration System via Reinforcement Learning}, 
      author={Cheng Qian and Zuxin Liu and Shirley Kokane and Akshara Prabhakar and Jielin Qiu and Haolin Chen and Zhiwei Liu and Heng Ji and Weiran Yao and Shelby Heinecke and Silvio Savarese and Caiming Xiong and Huan Wang},
      year={2025},
      eprint={2510.08439},
      archivePrefix={arXiv},
      primaryClass={cs.LG},
      url={https://arxiv.org/abs/2510.08439}, 
}

@misc{jin2025controllingperformancebudgetcentralized,
      title={Controlling Performance and Budget of a Centralized Multi-agent LLM System with Reinforcement Learning}, 
      author={Bowen Jin and TJ Collins and Donghan Yu and Mert Cemri and Shenao Zhang and Mengyu Li and Jay Tang and Tian Qin and Zhiyang Xu and Jiarui Lu and Guoli Yin and Jiawei Han and Zirui Wang},
      year={2025},
      eprint={2511.02755},
      archivePrefix={arXiv},
      primaryClass={cs.CL},
      url={https://arxiv.org/abs/2511.02755}, 
}

@misc{schulman2017proximalpolicyoptimizationalgorithms,
      title={Proximal Policy Optimization Algorithms}, 
      author={John Schulman and Filip Wolski and Prafulla Dhariwal and Alec Radford and Oleg Klimov},
      year={2017},
      eprint={1707.06347},
      archivePrefix={arXiv},
      primaryClass={cs.LG},
      url={https://arxiv.org/abs/1707.06347}, 
}

@misc{shao2024deepseekmathpushinglimitsmathematical,
      title={DeepSeekMath: Pushing the Limits of Mathematical Reasoning in Open Language Models}, 
      author={Zhihong Shao and Peiyi Wang and Qihao Zhu and Runxin Xu and Junxiao Song and Xiao Bi and Haowei Zhang and Mingchuan Zhang and Y. K. Li and Y. Wu and Daya Guo},
      year={2024},
      eprint={2402.03300},
      archivePrefix={arXiv},
      primaryClass={cs.CL},
      url={https://arxiv.org/abs/2402.03300}, 
}

@misc{yu2025dapoopensourcellmreinforcement,
      title={DAPO: An Open-Source LLM Reinforcement Learning System at Scale}, 
      author={Qiying Yu and Zheng Zhang and Ruofei Zhu and Yufeng Yuan and Xiaochen Zuo and Yu Yue and Weinan Dai and Tiantian Fan and Gaohong Liu and Lingjun Liu and Xin Liu and Haibin Lin and Zhiqi Lin and Bole Ma and Guangming Sheng and Yuxuan Tong and Chi Zhang and Mofan Zhang and Wang Zhang and Hang Zhu and Jinhua Zhu and Jiaze Chen and Jiangjie Chen and Chengyi Wang and Hongli Yu and Yuxuan Song and Xiangpeng Wei and Hao Zhou and Jingjing Liu and Wei-Ying Ma and Ya-Qin Zhang and Lin Yan and Mu Qiao and Yonghui Wu and Mingxuan Wang},
      year={2025},
      eprint={2503.14476},
      archivePrefix={arXiv},
      primaryClass={cs.LG},
      url={https://arxiv.org/abs/2503.14476}, 
}

@misc{liu2023chainhindsightalignslanguage,
      title={Chain of Hindsight Aligns Language Models with Feedback}, 
      author={Hao Liu and Carmelo Sferrazza and Pieter Abbeel},
      year={2023},
      eprint={2302.02676},
      archivePrefix={arXiv},
      primaryClass={cs.LG},
      url={https://arxiv.org/abs/2302.02676}, 
}

@misc{mcinnes2020umapuniformmanifoldapproximation,
      title={UMAP: Uniform Manifold Approximation and Projection for Dimension Reduction}, 
      author={Leland McInnes and John Healy and James Melville},
      year={2020},
      eprint={1802.03426},
      archivePrefix={arXiv},
      primaryClass={stat.ML},
      url={https://arxiv.org/abs/1802.03426}, 
}

@misc{wang2024mmluprorobustchallengingmultitask,
      title={MMLU-Pro: A More Robust and Challenging Multi-Task Language Understanding Benchmark}, 
      author={Yubo Wang and Xueguang Ma and Ge Zhang and Yuansheng Ni and Abhranil Chandra and Shiguang Guo and Weiming Ren and Aaran Arulraj and Xuan He and Ziyan Jiang and Tianle Li and Max Ku and Kai Wang and Alex Zhuang and Rongqi Fan and Xiang Yue and Wenhu Chen},
      year={2024},
      eprint={2406.01574},
      archivePrefix={arXiv},
      primaryClass={cs.CL},
      url={https://arxiv.org/abs/2406.01574}, 
}

@misc{guo2025rbenchgraduatelevelmultidisciplinarybenchmarks,
      title={R-Bench: Graduate-level Multi-disciplinary Benchmarks for LLM \& MLLM Complex Reasoning Evaluation}, 
      author={Meng-Hao Guo and Jiajun Xu and Yi Zhang and Jiaxi Song and Haoyang Peng and Yi-Xuan Deng and Xinzhi Dong and Kiyohiro Nakayama and Zhengyang Geng and Chen Wang and Bolin Ni and Guo-Wei Yang and Yongming Rao and Houwen Peng and Han Hu and Gordon Wetzstein and Shi-min Hu},
      year={2025},
      eprint={2505.02018},
      archivePrefix={arXiv},
      primaryClass={cs.CV},
      url={https://arxiv.org/abs/2505.02018}, 
}

@misc{zhang2024evaluatingperformancelargelanguage,
      title={Evaluating the Performance of Large Language Models on GAOKAO Benchmark}, 
      author={Xiaotian Zhang and Chunyang Li and Yi Zong and Zhengyu Ying and Liang He and Xipeng Qiu},
      year={2024},
      eprint={2305.12474},
      archivePrefix={arXiv},
      primaryClass={cs.CL},
      url={https://arxiv.org/abs/2305.12474}, 
}

@misc{rein2023gpqagraduatelevelgoogleproofqa,
      title={GPQA: A Graduate-Level Google-Proof Q\&A Benchmark}, 
      author={David Rein and Betty Li Hou and Asa Cooper Stickland and Jackson Petty and Richard Yuanzhe Pang and Julien Dirani and Julian Michael and Samuel R. Bowman},
      year={2023},
      eprint={2311.12022},
      archivePrefix={arXiv},
      primaryClass={cs.AI},
      url={https://arxiv.org/abs/2311.12022}, 
}

@misc{aime_2024_2025,
  author = {{Mathematical Association of America}},
  title = {American Invitational Mathematics Examination (AIME)},
  year = {2024--2025},
  howpublished = {\url{https://www.maa.org/math-competitions/aime}},
  note = {Problems sourced from official competitions}
}

@misc{phan2025humanitysexam,
      title={Humanity's Last Exam}, 
      author={Long Phan and Alice Gatti and et al.},
      year={2025},
      eprint={2501.14249},
      archivePrefix={arXiv},
      primaryClass={cs.LG},
      url={https://arxiv.org/abs/2501.14249}, 
}

@misc{wei2024measuringshortformfactualitylarge,
      title={Measuring short-form factuality in large language models}, 
      author={Jason Wei and Nguyen Karina and Hyung Won Chung and Yunxin Joy Jiao and Spencer Papay and Amelia Glaese and John Schulman and William Fedus},
      year={2024},
      eprint={2411.04368},
      archivePrefix={arXiv},
      primaryClass={cs.CL},
      url={https://arxiv.org/abs/2411.04368}, 
}

@misc{he2024olympiadbenchchallengingbenchmarkpromoting,
      title={OlympiadBench: A Challenging Benchmark for Promoting AGI with Olympiad-Level Bilingual Multimodal Scientific Problems}, 
      author={Chaoqun He and Renjie Luo and Yuzhuo Bai and Shengding Hu and Zhen Leng Thai and Junhao Shen and Jinyi Hu and Xu Han and Yujie Huang and Yuxiang Zhang and Jie Liu and Lei Qi and Zhiyuan Liu and Maosong Sun},
      year={2024},
      eprint={2402.14008},
      archivePrefix={arXiv},
      primaryClass={cs.CL},
      url={https://arxiv.org/abs/2402.14008}, 
}

@misc{tng_technology_consulting_gmbh_2025_07_02,
    author       = { TNG Technology Consulting GmbH },
    title        = { DeepSeek-TNG-R1T2-Chimera },
    year         = 2025,
    month        = { July },
    url          = { https://huggingface.co/tngtech/DeepSeek-TNG-R1T2-Chimera },
    doi          = { 10.57967/hf/5950 },
    publisher    = { Hugging Face }
}

@misc{qwen3technicalreport,
      title={Qwen3 Technical Report}, 
      author={Qwen},
      year={2025},
      eprint={2505.09388},
      archivePrefix={arXiv},
      primaryClass={cs.CL},
      url={https://arxiv.org/abs/2505.09388}, 
}

@misc{agi2025amazonnovafamilymodels,
      title={The Amazon Nova Family of Models: Technical Report and Model Card}, 
      author={Amazon},
      year={2025},
      eprint={2506.12103},
      archivePrefix={arXiv},
      primaryClass={cs.AI},
      url={https://arxiv.org/abs/2506.12103}, 
}

@misc{openai2025gptoss120bgptoss20bmodel,
      title={gpt-oss-120b \& gpt-oss-20b Model Card}, 
      author={OpenAI},
      year={2025},
      eprint={2508.10925},
      archivePrefix={arXiv},
      primaryClass={cs.CL},
      url={https://arxiv.org/abs/2508.10925}, 
}

@misc{grattafiori2024llama3herdmodels,
      title={The Llama 3 Herd of Models}, 
      author={Meta},
      year={2024},
      eprint={2407.21783},
      archivePrefix={arXiv},
      primaryClass={cs.AI},
      url={https://arxiv.org/abs/2407.21783}, 
}

@misc{gemmateam2025gemma3technicalreport,
      title={Gemma 3 Technical Report}, 
      author={Google},
      year={2025},
      eprint={2503.19786},
      archivePrefix={arXiv},
      primaryClass={cs.CL},
      url={https://arxiv.org/abs/2503.19786}, 
}

@techreport{anthropic2025claude45,
  title={Introducing Claude Sonnet 4.5},
  author={Anthropic},
  year={2025},
  institution={Anthropic},
  url={https://www.anthropic.com/news/claude-sonnet-4-5},
  note={Technical Report}
}

@techreport{xAI2025grok41,
  title={Grok 4.1},
  author={xAI},
  year={2025},
  institution={xAI},
  url={https://x.ai/news/grok-4-1},
  note={Technical Report}
}

@misc{llmrouter2025,
  title        = {LLMRouter: An Open-Source Library for LLM Routing},
  author       = {Tao Feng and Haozhen Zhang and Zijie Lei and Haodong Yue and Chongshan Lin and Jiaxuan You},
  year         = {2025},
  howpublished = {\url{https://github.com/ulab-uiuc/LLMRouter}},
  note         = {GitHub repository}
}

@article{hearst1998support,
  title={Support vector machines},
  author={Hearst, Marti A. and Dumais, Susan T and Osuna, Edgar and Platt, John and Scholkopf, Bernhard},
  journal={IEEE Intelligent Systems and their applications},
  volume={13},
  number={4},
  pages={18--28},
  year={1998},
  publisher={IEEE}
}

@article{popescu2009multilayer,
  title={Multilayer perceptron and neural networks},
  author={Popescu, Marius-Constantin and Balas, Valentina E and Perescu-Popescu, Liliana and Mastorakis, Nikos},
  journal={WSEAS Transactions on Circuits and Systems},
  volume={8},
  number={7},
  pages={579--588},
  year={2009}
}

@misc{feng2025graphroutergraphbasedrouterllm,
      title={GraphRouter: A Graph-based Router for LLM Selections}, 
      author={Tao Feng and Yanzhen Shen and Jiaxuan You},
      year={2025},
      eprint={2410.03834},
      archivePrefix={arXiv},
      primaryClass={cs.AI},
      url={https://arxiv.org/abs/2410.03834}, 
}

@misc{lewis2021retrievalaugmentedgenerationknowledgeintensivenlp,
      title={Retrieval-Augmented Generation for Knowledge-Intensive NLP Tasks}, 
      author={Patrick Lewis and Ethan Perez and Aleksandra Piktus and Fabio Petroni and Vladimir Karpukhin and Naman Goyal and Heinrich Küttler and Mike Lewis and Wen-tau Yih and Tim Rocktäschel and Sebastian Riedel and Douwe Kiela},
      year={2021},
      eprint={2005.11401},
      archivePrefix={arXiv},
      primaryClass={cs.CL},
      url={https://arxiv.org/abs/2005.11401}, 
}

@article{chen2019non,
  title={Non-model based expansion from limited points to an augmented set of points using Chebyshev polynomials},
  author={Chen, Yuanchang and Logan, Patrick and Avitabile, Peter and Dodson, Jacob},
  journal={Experimental Techniques},
  volume={43},
  number={5},
  pages={521--543},
  year={2019},
  publisher={Springer}
}

@misc{pan2025routereasonadaptiverouting,
      title={Route to Reason: Adaptive Routing for LLM and Reasoning Strategy Selection}, 
      author={Zhihong Pan and Kai Zhang and Yuze Zhao and Yupeng Han},
      year={2025},
      eprint={2505.19435},
      archivePrefix={arXiv},
      primaryClass={cs.CL},
      url={https://arxiv.org/abs/2505.19435}, 
}

@misc{barrak2025cargoframeworkconfidenceawarerouting,
      title={CARGO: A Framework for Confidence-Aware Routing of Large Language Models}, 
      author={Amine Barrak and Yosr Fourati and Michael Olchawa and Emna Ksontini and Khalil Zoghlami},
      year={2025},
      eprint={2509.14899},
      archivePrefix={arXiv},
      primaryClass={cs.SE},
      url={https://arxiv.org/abs/2509.14899}, 
}

@misc{mohseni2025reasoningrouter,
  author       = {Mohseni, Amir},
  title        = {To Think or Not to Think: A Router for Hybrid LLMs},
  howpublished = {Hugging Face Blog Post},
  month        = {November},
  year         = {2025},
  url          = {https://huggingface.co/blog/AmirMohseni/reasoning-router}
}

@article{qwen3embedding,
  title={Qwen3 Embedding: Advancing Text Embedding and Reranking Through Foundation Models},
  author={Zhang, Yanzhao and Li, Mingxin and Long, Dingkun and Zhang, Xin and Lin, Huan and Yang, Baosong and Xie, Pengjun and Yang, An and Liu, Dayiheng and Lin, Junyang and Huang, Fei and Zhou, Jingren},
  journal={arXiv preprint arXiv:2506.05176},
  year={2025}
}

@misc{hu2024routerbenchbenchmarkmultillmrouting,
      title={RouterBench: A Benchmark for Multi-LLM Routing System}, 
      author={Qitian Jason Hu and Jacob Bieker and Xiuyu Li and Nan Jiang and Benjamin Keigwin and Gaurav Ranganath and Kurt Keutzer and Shriyash Kaustubh Upadhyay},
      year={2024},
      eprint={2403.12031},
      archivePrefix={arXiv},
      primaryClass={cs.LG},
      url={https://arxiv.org/abs/2403.12031}, 
}
\bibliographystyle{icml2026}
\newpage

\appendix
\section*{Appendix}
% ----------------------------
% Appendix Table of Contents
% ----------------------------
\vspace{2mm}
\noindent\textbf{Appendix Contents.}
\begin{itemize}
    \setlength{\itemsep}{3pt}
    \setlength{\parskip}{0pt}
    \setlength{\topsep}{2pt}
    \setlength{\leftmargin}{14pt}
    \item Appendix~\ref{sec:related_works}: More Related Works
    \item Appendix~\ref{sec:method_details}: Method Details
    \item Appendix~\ref{sec:routing_effectiveness_details}: Routing Effectiveness Details
    \item Appendix~\ref{sec:budget}: Derivation of Budget-controlled Alpha
    \item Appendix~\ref{app:token_efficiency}: Proof for Token Cost Efficiency
    \item Appendix~\ref{app:computational_cost}: Proof for Computational Cost Comparison
    \item Appendix~\ref{abl:dataset}: Dataset Details
    \item Appendix~\ref{sec:data_examples}: Data Examples
    \item Appendix~\ref{sec:test_case_studies}: Test Set Case Studies
    \item Appendix~\ref{sec:ood_case_studies}: OOD Set Case Studies
\end{itemize}
\vspace{2mm}

\section{More Related Works}
\label{sec:related_works}
Other than model routing, we provide more related works in this section.

\paragraph{Performance Prediction.} Predicting the solvability of a query is intrinsically linked to Uncertainty Quantification (UQ) and confidence calibration in LLMs. Traditional approaches often rely on white-box signals, such as token probabilities, perplexity, or semantic entropy, to gauge generation quality \cite{kadavath2022languagemodelsmostlyknow, kuhn2023semanticuncertaintylinguisticinvariances}. While sampling-based techniques like Self-Consistency \cite{wang2023selfconsistencyimproveschainthought} provide robust confidence estimates, they necessitate multiple expensive inference passes, rendering them impractical for latency-sensitive pre-inference routing. Recent works have explored learning error predictors or quality estimators \cite{xiao2025generalizedcorrectnessmodelslearning, khanmohammadi2025calibratingllmconfidenceprobing}, yet these often depend on internal model states or post-hoc analysis. In contrast, \textsc{Scope} operates as a \textit{model-agnostic meta-predictor}. It decouples prediction from execution, estimating both correctness and resource consumption without accessing the target model's logits or triggering its inference, thus enabling purely \textit{ex-ante} decision-making.

\paragraph{Budget-Aware Inference.} In real-world deployments, latency and cost constraints often necessitate strict adherence to computational budgets. Recent studies on scaling laws indicate that increasing test-time compute can significantly enhance reasoning capabilities, yet this introduces variable and potentially unbounded costs \cite{snell2024scalingllmtesttimecompute}. To manage these resources, prior works have explored “budget forcing” mechanisms, such as dynamically constraining the reasoning chain length in System 2 models \cite{han2025tokenbudgetawarellmreasoning}, or optimizing generation hyperparameters (e.g., max tokens) via frameworks like EcoOptiGen to fit constraints \cite{wang2023costeffectivehyperparameteroptimizationlarge}. However, such methods typically operate \textit{intra-model}, intervening in the decoding process or hyperparameter settings, which may prematurely truncate necessary reasoning steps or degrade output quality. Conversely, \textsc{Scope} addresses budget constraints at the \textit{routing level}. By predicting token consumption \textit{ex-ante}, \textsc{Scope} enables the construction of a global routing policy that avoids invoking expensive models for queries predicted to exceed token limits, ensuring budget compliance without compromising the integrity of the generation process.

\section{Method Details}
\label{sec:method_details}
\begin{table*}[t]
    \centering
    \caption{\textbf{Engineering Design Principles of \textsc{Scope}.} We implement three strategic mechanisms to ensure the system is practical, efficient, and controllable. Empirical validations of these choices are detailed in Section~\ref{subsec:cost_efficiency_q3}.}
    \label{tab:design_choices}
    \resizebox{\linewidth}{!}{
    \begin{tabular}{l p{0.15\linewidth} p{0.8\linewidth}}
        \toprule
        \textbf{Goal} & \textbf{Design Choice} & \textbf{Strategic Advantage \& Mechanism} \\
        \midrule
        \textbf{Generalization} & \textbf{Fingerprinting on \textsc{Scope}-250} & \textbf{No Retraining Needed.} By conditioning on behavioral signatures from a small anchor set (250 queries), \textsc{Scope} learns the \textit{skill} of reasoning-based estimation rather than memorizing fixed model IDs, enabling training-free scaling to open-set models. \\
        \midrule
        \textbf{Efficiency} & \textbf{Hindsight \newline Distillation} & \textbf{Compressed Reasoning Traces.} We distill concise rationales from teacher models to shorten the Chain-of-Thought length. This significantly reduces token overhead during inference and stabilizes the subsequent GRPO reinforcement learning phase. \\
        \midrule
        \textbf{Controllability} & \textbf{Budget-Constrained $\alpha$-Search} & \textbf{Precise Constraint Satisfaction.} Instead of a fixed trade-off, we treat the utility weight $\alpha$ as a dynamic variable. \textsc{Scope} solves for the optimal $\alpha$ per query to strictly adhere to the user's monetary budget while maximizing accuracy. \\
        \bottomrule
    
    \end{tabular}
    }
\end{table*}
\subsection{Design Principles}
To operationalize these capabilities, we implement three strategic mechanisms outlined in Table~\ref{tab:design_choices}. We address \textbf{generalization} through anchor-based fingerprinting (eliminating the need for retraining), enhance \textbf{efficiency} via hindsight distillation (reducing inference overhead), and ensure \textbf{controllability} through a budget-constrained optimization solver. These design choices collectively enable \textsc{Scope} to scale effectively while maintaining strict adherence to user-defined cost limits.

\subsection{Reward Function Details}
\label{app:reward_details}

\begin{figure}
    \centering
    \includegraphics[width=1\linewidth]{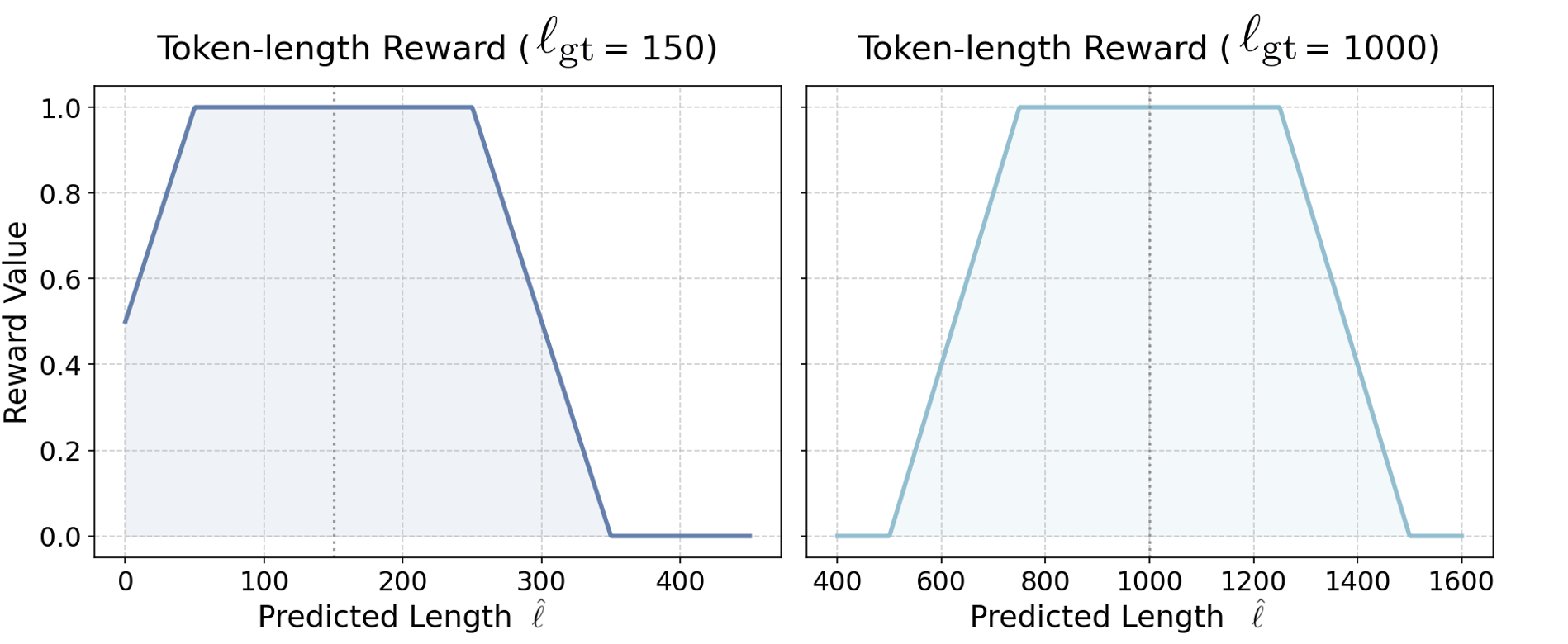}
    \caption{Token-length reward. Predictions within $\dfrac{\tau}{2}$ of the ground truth receive full reward, with linear decay to zero at $\tau$. \textbf{Left} $\ell_{gt} = 150, \tau = 200$. \textbf{Right} $\ell_{gt} = 1000, \tau = 500$.}
    \label{fig:reward}
\end{figure}

In Section \ref{sec:training}, we introduced the adaptive token reward $R_{\text{token}}$. Here we provide the exact formulation.

Let $\hat{\ell}$ be the predicted token length and $\ell_{\text{gt}}$ be the ground truth. We define the error distance $d = |\hat{\ell} - \ell_{\text{gt}}|$. 
To handle the heterogeneity between verbose reasoning models and concise standard models, we define a dynamic tolerance threshold $\tau$:
\begin{equation}
    \tau = \max \left( 200, \; 0.5 \cdot \ell_{\text{gt}} \right)
\end{equation}
This threshold implies that for short responses, we allow a fixed buffer of 200 tokens, whereas for long reasoning traces (e.g., 5000 tokens), we allow a relative deviation of 50\%.

The reward is shaped using a plateau-with-decay function:
\begin{equation}
    R_{\text{token}}(o) = 
    \begin{cases} 
    1 & \text{if } d \le \frac{\tau}{2} \\
    \frac{\tau - d}{0.5 \tau} & \text{if } \frac{\tau}{2} < d \le \tau \\
    0 & \text{if } d > \tau 
    \end{cases}
\end{equation}
This shaping encourages the model to land within the “safe zone” ($\le \tau/2$) while providing a smooth gradient for optimization when predictions are near the boundary.

\subsection{Detailed Routing Formulation}
\label{app:routing_details}
\begin{figure}[t]
    \centering
    \includegraphics[width=0.9\linewidth]{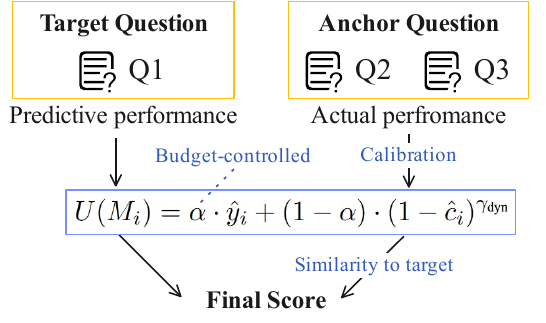}
    \caption{\textbf{Decision and Calibration Mechanism.} 
    The final routing score aggregates \textsc{Scope}'s predicted utility (based on estimated correctness $\hat{y}$ and cost $\hat{c}$) with a calibration prior derived from the ground-truth performance of retrieved anchors. 
    This hybrid approach utilizes historical evidence from similar queries to correct potential prediction errors and ensure robust model selection.}
    \label{fig:decision_calibration}
\end{figure}

In this section, we provide the comprehensive mathematical formulation for the utility function and the dynamic calibration mechanism introduced in Section \ref{sec:routing}.

\subsubsection{Cost Normalization via Log-Transformation}
Inference costs across the candidate model pool span multiple orders of magnitude (e.g., from \$0.01/M tokens for efficient models to \$10.00+/M tokens for frontier models). A standard linear min-max normalization would compress the differences between efficient models into a negligible range. 
To address this, we apply a log-transformed min-max normalization to the cost $c$. For a given query cluster in anchor-based calibration, we set $c_{\min}$ and $c_{\max}$ to be the cluster-wise minimum and maximum observed costs. For each input query in online prediction, $c_{\min}$ and $c_{\max}$ here are set to be the per-query minimum and maximum predicted cost among current selected models.The normalized cost $\tilde{c}$ is computed as:
\begin{equation}
    \tilde{c} = \frac{\log(c + \epsilon) - \log(c_{\min} + \epsilon)}{\log(c_{\max} + \epsilon) - \log(c_{\min} + \epsilon)}
\end{equation}
where $\epsilon=1e^{-6}$ is a small constant for numerical stability. This transformation ensures that the router remains sensitive to price differences even in the low-cost regime.

\subsubsection{Dynamic Cost Sensitivity ($\gamma_{\text{dyn}}$)}
The utility function balances accuracy and cost using a trade-off coefficient $\alpha$. However, a linear combination is often insufficient to enforce strict budget constraints. We introduce a dynamic sensitivity factor $\gamma_{\text{dyn}}$ to non-linearly amplify cost penalties when the user prioritizes savings (i.e., low $\alpha$).
The utility function is defined as:
\begin{equation}
    u = \alpha \cdot \tilde{p} + (1-\alpha) \cdot (1-\tilde{c})^{\gamma_{\text{dyn}}}
\end{equation}
where $\gamma_{\text{dyn}}$ adapts based on $\alpha$:
\begin{equation}
    \gamma_{\text{dyn}} = \gamma_{\text{base}} \cdot \left( 1.0 + \beta (1-\alpha) \right)
\end{equation}
In our experiments, we set $\gamma_{\text{base}}=1.0$ and $\beta=2.0$. 
\textbf{Intuition:} When $\alpha \to 0$ (cost focus), $\gamma_{\text{dyn}} \to 3.0$. A higher exponent makes the term $(1-\tilde{c})^{\gamma}$ drop sharply as cost increases, effectively vetoing expensive models unless they guarantee near-perfect accuracy. Conversely, when $\alpha \to 1$, $\gamma_{\text{dyn}} \to 1.0$, allowing for a linear, more tolerant trade-off.

\subsubsection{Dynamic Calibration Weight ($w_{\text{cal}}$)}
The final routing score is a weighted ensemble of the model's self-prediction ($u_{\text{pred}}$) and the retrieval-based calibration ($u_{\text{cal}}$). We hypothesize that historical ground truth (anchors) becomes more valuable when maximizing accuracy, whereas predicted metrics are sufficient for cost control. 
We define the calibration weight $w_{\text{cal}}$ as:
\begin{equation}
    w_{\text{cal}} = w_{\text{base}} \cdot (0.5 + 0.5\alpha)
\end{equation}
with $w_{\text{base}} = 0.2$. This scales the influence of historical anchors from 10\% (when $\alpha=0$) to 20\% (when $\alpha=1$). The final model selection is determined by:
\begin{equation}
    M^* = \arg \max_{M_i \in \mathcal{M}} \left( (1-w_{\text{cal}}) \cdot u_{\text{pred}}(M_i) + w_{\text{cal}} \cdot u_{\text{cal}}(M_i) \right)
\end{equation}
This mechanism ensures that for high-stakes queries (high $\alpha$), the router “double-checks” its decision against real historical performance on similar problems.

\begin{figure*}[t]
    \centering
    \includegraphics[width=\linewidth]{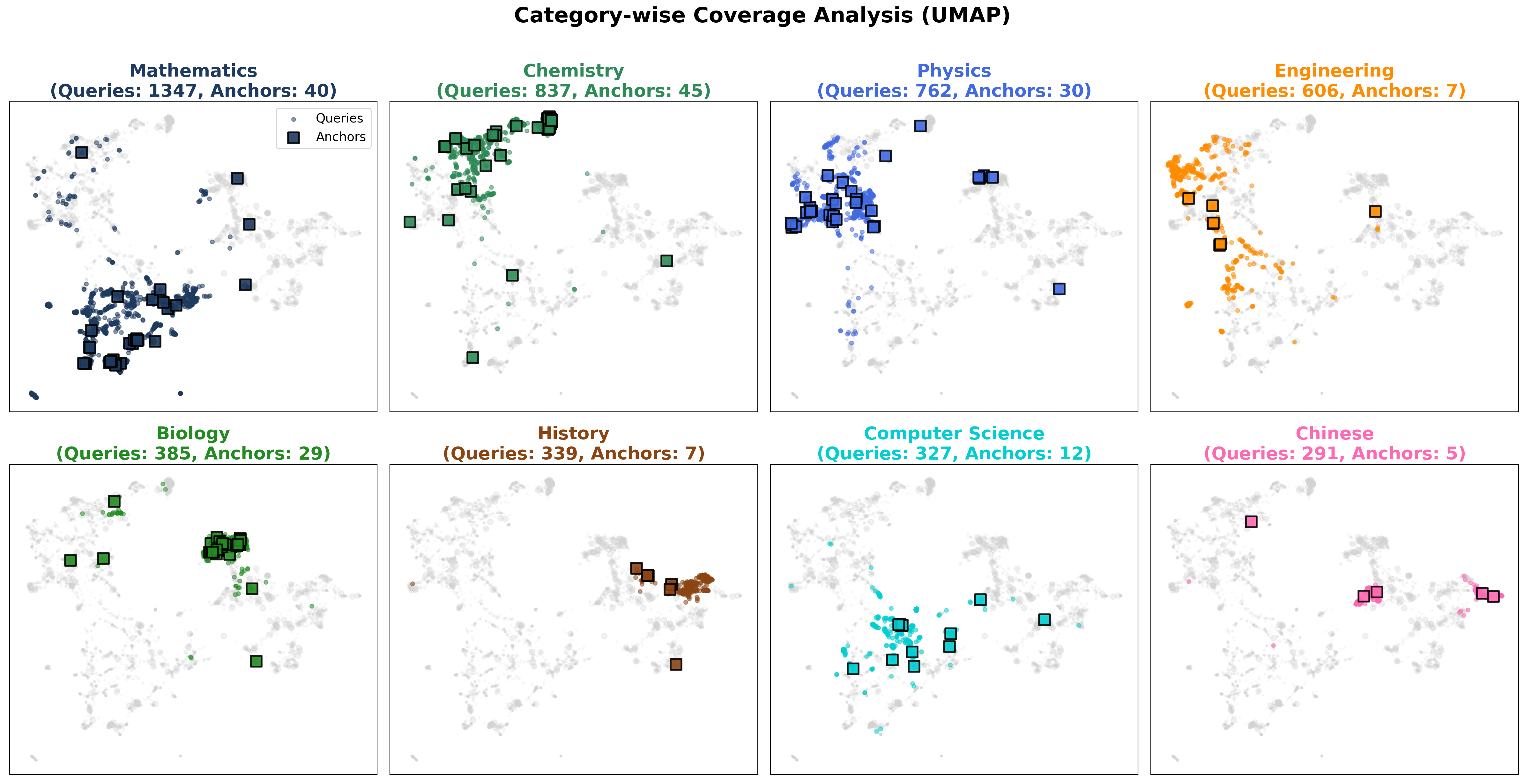}
    \vspace{-10pt}
    \caption{\textbf{Semantic Coverage Analysis.} We visualize the embedding space of user queries (points) and selected anchors (squares) across 8 major domains using UMAP~\cite{mcinnes2020umapuniformmanifoldapproximation}. Despite the high compression ratio (reducing 60k queries to just 250 anchors), the anchors are strategically distributed across the semantic manifold of each category, rather than being clustered in a single region. This spatial coverage confirms that \textsc{Scope-250} provides a representative and diverse basis for retrieval-augmented fingerprinting.}
    \label{fig:anchor}
\end{figure*}

% \begin{figure}[t]
%   \centering
%   \includegraphics[width=0.8\linewidth]{figures/5.pdf}
%   \caption{Density plots comparing the cosine similarity distributions of retrieved top-5 contexts versus random contexts. The distributions show that retrieved contexts (blue) have a significantly higher average cosine similarity ($\mu=0.409$) compared to random baselines (red, $\mu=0.207$), validating the quality of the dense retriever.}
%   \label{fig:similarity_dist}
% \end{figure}

\section{Routing Effectiveness Details}
\label{sec:routing_effectiveness_details}
\begin{table}[http]
\centering
\caption{Model API pricing (USD/1M tokens).}
\label{tab:model_pricing}
\resizebox{0.9\columnwidth}{!}{%
\begin{tabular}{@{}lccc@{}}
\toprule
\textbf{Model} & \textbf{Act. Params} & \textbf{Input} & \textbf{Output} \\
\midrule
\multicolumn{4}{l}{\textit{Train Set Models (7 models)}} \\
DeepSeek-R1-T2-Chimera & \textasciitilde37B & 0.30 & 1.20 \\
Qwen3-235B-A22B & 22B & 0.18 & 0.54 \\
Nova-2-Lite-V1 & / & 0.30 & 2.50 \\
Qwen3-14B & 14B & 0.05 & 0.22 \\
GPT-OSS-20B & 20B & 0.03 & 0.14 \\
Llama-3.3-70B-Instruct & 70B & 0.10 & 0.32 \\
Gemma-3-27B-IT & 27B & 0.04 & 0.15 \\
\midrule
\multicolumn{4}{l}{\textit{Unseen Models (4 models)}} \\
Claude-Sonnet-4.5 & / & 3.00 & 15.00 \\
DeepSeek-V3.2 & \textasciitilde37B & 0.25 & 0.38 \\
GPT-5-Mini & / & 0.25 & 2.00 \\
Grok-4.1-Fast & / & 0.20 & 0.50 \\
\bottomrule
\end{tabular}%
}
\end{table}

Table~\ref{tab:model_pricing} details the diverse portfolio of 11 models employed in our evaluation. 
The seen (training) pool consists of 7 models spanning various capability and cost tiers, including the high-performance DeepSeek-R1-T2-Chimera~\cite{tng_technology_consulting_gmbh_2025_07_02} and the MoE-based Qwen3-235B-A22B~\cite{qwen3technicalreport}. 
This set also incorporates widely adopted open-weights models such as Llama-3.3-70B-Instruct~\cite{grattafiori2024llama3herdmodels} and Gemma-3-27B-IT~\cite{gemmateam2025gemma3technicalreport}, as well as highly efficient lightweight options including Qwen3-14B~\cite{qwen3technicalreport}, GPT-OSS-20B~\cite{openai2025gptoss120bgptoss20bmodel}, and Nova-2-Lite-V1~\cite{agi2025amazonnovafamilymodels}.
To rigorously test training-free generalization, we reserve 4 unseen (OOD) models that encompass the latest proprietary advancements: Claude-Sonnet-4.5~\cite{anthropic2025claude45}, DeepSeek-V3.2~\cite{deepseekai2025deepseekv32pushingfrontieropen}, GPT-5-Mini~\cite{singh2025openaigpt5card}, and Grok-4.1-Fast~\cite{xAI2025grok41}. 
This selection creates a challenging testbed with pricing varying by two orders of magnitude, from \$0.14 to \$15.00 per million tokens.

Figure~\ref{fig:other_comparison} provides a direct comparison between \textsc{Scope} and each individual model as a deployment baseline.
Across these baselines, \textsc{Scope} consistently improves Pareto efficiency: when the baseline model is relatively expensive, \textsc{Scope} can match or slightly exceed its accuracy while reducing overall cost by shifting easier queries to cheaper models; when the baseline model is relatively cheap, \textsc{Scope} increases accuracy by selectively allocating harder queries to stronger models, without incurring the full cost of always using the strongest option.
This verifies that the gain is not tied to a particular reference model, but arises from exploiting query-dependent difficulty and the non-dominated structure of the model pool.

Figure~\ref{fig:full_portfolio_dist} visualizes how the induced model portfolio evolves as $\alpha$ varies from cost-oriented routing to accuracy-oriented routing.
On the test set, small $\alpha$ concentrates most queries on low-cost models, yielding a low-cost regime with limited accuracy.
As $\alpha$ increases, the router gradually mixes in stronger models and reduces reliance on the cheapest options, leading to a smooth increase in set-level accuracy accompanied by a higher total cost.
At large $\alpha$, the portfolio becomes dominated by high-capability models, reflecting that the routing objective prioritizes predicted correctness over cost penalties.

A similar but more pronounced trend appears on the OOD set, where the candidate pool contains only unseen proprietary models with a wider price spread.
In the cost-focused regime, the portfolio is dominated by the cheapest unseen model, while increasing $\alpha$ introduces more frequent use of higher-capability and higher-price models.
The steep cost increase at large $\alpha$ indicates that, under OOD generalization, achieving additional accuracy often requires allocating a non-trivial fraction of queries to frontier models.
Overall, Fig.~\ref{fig:full_portfolio_dist} confirms that $\alpha$ functions as an effective and interpretable control knob that induces a coherent shift in routing behavior, rather than causing unstable or erratic portfolio changes.

\begin{figure*}[t]
    \centering
    \includegraphics[width=\textwidth]{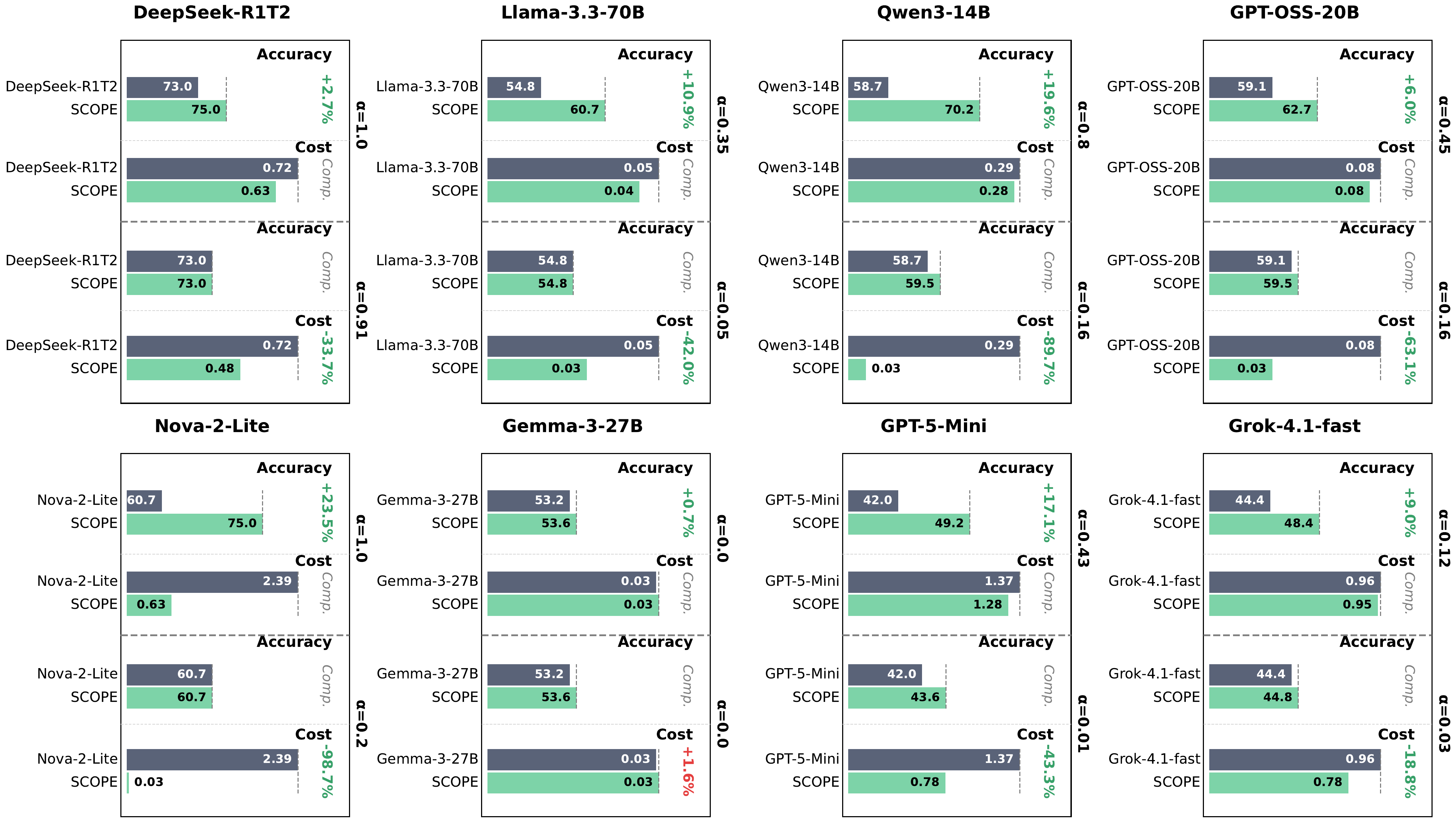}
    \caption{\textbf{Comparison with Individual Models.} \textsc{Scope} achieves strong Pareto efficiency in practice for other eight models. Detailed version of Fig.~\ref{fig:main_comparison}. }
    \label{fig:other_comparison}
\end{figure*}

\begin{figure*}[http]
    \centering
    \includegraphics[width=\textwidth]{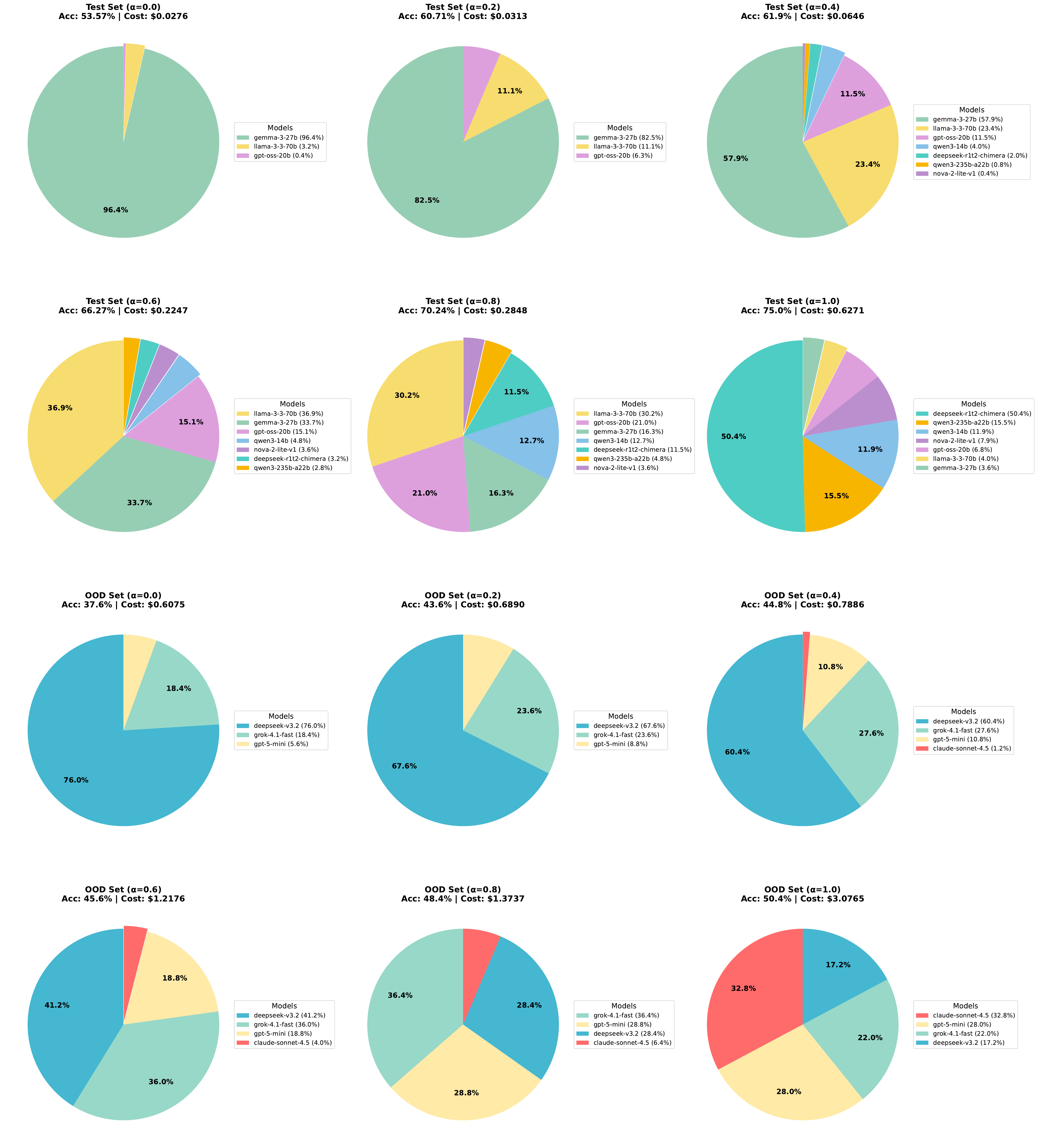}
    \caption{\textbf{Adaptive Model Portfolio.} All the pie charts from $\alpha =0$ to $\alpha = 1$. Detailed version of Fig.~\ref{fig:portfolio_dist}.}
    \label{fig:full_portfolio_dist}
\end{figure*}

% \begin{figure}[t]
%     \centering
%     \includegraphics[width=1.0\linewidth]{figures/rl_v3_cot_performance_weight.png}
%     \caption{Effect of the trade-off coefficient $\alpha$ on \textsc{Scope} routing performance. \textbf{Left:} Accuracy increases with $\alpha$ from 53.6\% to 75.0\%, surpassing the DeepSeek-R1-T2-Chimera baseline (73.6\%, dashed line) when $\alpha \geq 0.95$.
%     \textbf{Right:} Total inference cost grows from \$0.03 to \$0.63, remaining below the DeepSeek baseline cost (\$0.719) across all $\alpha$ values.}
%     \label{fig:performance_and_cost_vs_alpha}
% \end{figure}

\section{Derivation of Budget-controlled Alpha}
\label{sec:budget}

We evaluate cost efficiency under the practical setting of model routing.
For each query $x$, \textsc{Scope} produces \emph{pre-hoc} estimates for every candidate model $M\in\mathcal{M}$,
including a predicted correctness $\hat{p}(x,M)\in[0,1]$ and a predicted inference cost $\hat{c}(x,M)\in\mathbb{R}_+$.
Given a trade-off coefficient $\alpha\in[0,1]$, the router selects a single model by maximizing the utility $U_\alpha$ defined in Sec.~\ref{sec:routing}:
\begin{equation}
\pi_{\alpha}(x)
=
\argmax_{M\in\mathcal{M}} \; U_{\alpha}(x,M).
\label{eq:policy_alpha_q3}
\end{equation}

\paragraph{Budgeted choice of $\alpha$ for a query set.}
At test time, the system receives a \emph{set} of queries $\mathcal{X}=\{x_1,\ldots,x_n\}$ and a user budget $B$ (USD) for the set.
For each query $x\in\mathcal{X}$ and candidate model $M\in\mathcal{M}$, \textsc{Scope} outputs pre-hoc estimates
$\hat p(x,M)\in[0,1]$ and $\hat c(x,M)\in\mathbb{R}_+$.
Let $\hat s(x,M)\in\mathbb{R}$ denote the cost-related score used inside the utility
(e.g., $\hat s(x,M)=(1-\tilde c(x,M))^{\gamma_{\mathrm{dyn}}}$ in Sec.~\ref{sec:routing}).
For a fixed $\alpha\in[0,1]$, define the induced routing decision for each query
\begin{equation}
M_\alpha(x)\in\argmax_{M\in\mathcal{M}}
\Bigl(\alpha\,\hat p(x,M)+(1-\alpha)\,\hat s(x,M)\Bigr).
\label{eq:per_query_policy_set}
\end{equation}
Executing the policy $\pi_\alpha$ on $\mathcal{X}$ yields an \emph{expected} total cost and total accuracy proxy:
\begin{align}
\widehat{C}(\alpha;\mathcal{X})
&=
\sum_{x\in\mathcal{X}} \hat c\bigl(x, M_\alpha(x)\bigr),
\label{eq:set_cost}\\
\widehat{P}(\alpha;\mathcal{X})
&=
\sum_{x\in\mathcal{X}} \hat p\bigl(x, M_\alpha(x)\bigr).
\label{eq:set_acc}
\end{align}
We choose a single $\alpha$ shared across the set by solving the budget-constrained problem
\begin{equation}
\alpha^\star(\mathcal{X},B)\in
\argmax_{\alpha\in[0,1]}\;
\widehat{P}(\alpha;\mathcal{X})
\quad\text{s.t.}\quad
\widehat{C}(\alpha;\mathcal{X})\le B.
\label{eq:alpha_star_set}
\end{equation}
The deployed router then executes $\pi_{\alpha^\star(\mathcal{X},B)}(x)$ for every $x\in\mathcal{X}$.
This formulation matches the deployment requirement: the budget is enforced at the level of the entire workload,
while $\alpha$ selects the most accuracy-favorable routing behavior among the budget-feasible trade-offs.

\paragraph{Finite candidate search for $\alpha^\star(\mathcal{X},B)$.}
For each query $x\in\mathcal{X}$ and model $M\in\mathcal{M}$, define the affine score
\begin{equation}
\begin{aligned}
u_{x,M}(\alpha)
&= \alpha\,\hat p(x,M) + (1-\alpha)\,\hat s(x,M) \\
&= \hat s(x,M) + \alpha\bigl(\hat p(x,M)-\hat s(x,M)\bigr).
\end{aligned}
\label{eq:affine_score}
\end{equation}
We assume a deterministic tie-breaking rule is fixed in advance (e.g., choose the lowest-index model),
so that $M_\alpha(x)$ in Eq.~\eqref{eq:per_query_policy_set} is a single-valued function of $\alpha$.

For a fixed $x$, the maximizer $M_\alpha(x)\in\argmax_{M\in\mathcal{M}} u_{x,M}(\alpha)$
can change only at $\alpha$ values where two affine functions intersect.
For any distinct $M_i,M_j\in\mathcal{M}$ with
$\bigl(\hat p(x,M_i)-\hat s(x,M_i)\bigr)\neq \bigl(\hat p(x,M_j)-\hat s(x,M_j)\bigr)$,
define the intersection point
\begin{equation}
\alpha_{ij}(x)
=
\frac{\hat s(x,M_j)-\hat s(x,M_i)}
{\bigl(\hat p(x,M_i)-\hat s(x,M_i)\bigr)-\bigl(\hat p(x,M_j)-\hat s(x,M_j)\bigr)}.
\label{eq:alpha_intersection_set}
\end{equation}
We collect all breakpoints in $(0,1)$ and add endpoints:
\begin{equation}
\mathcal{A}(\mathcal{X})
=
\{0,1\}\cup
\bigcup_{x\in\mathcal{X}}
\{\alpha_{ij}(x)\in(0,1): M_i,M_j\in\mathcal{M}\}.
\label{eq:candidate_set_global}
\end{equation}
Let $0=\tau_0<\tau_1<\cdots<\tau_K=1$ be the sorted unique elements of $\mathcal{A}(\mathcal{X})$.
For each interval $(\tau_{k-1},\tau_k)$, choose an arbitrary representative
$\bar\alpha_k\in(\tau_{k-1},\tau_k)$.

\begin{proposition}[Correctness of the finite search for the set-level budgeted $\alpha$]
\label{prop:finite_search_set}
Under the deterministic tie-breaking rule above, for every $k\in\{1,\ldots,K\}$ and every $\alpha\in(\tau_{k-1},\tau_k)$,
the routing decisions satisfy $M_\alpha(x)=M_{\bar\alpha_k}(x)$ for all $x\in\mathcal{X}$.
Consequently, both $\widehat{C}(\alpha;\mathcal{X})$ and $\widehat{P}(\alpha;\mathcal{X})$
defined in Eqs.~\eqref{eq:set_cost}--\eqref{eq:set_acc} are constant on $(\tau_{k-1},\tau_k)$.
Therefore, there exists an optimal solution to Eq.~\eqref{eq:alpha_star_set} contained in the finite set
\[
\{\tau_0,\ldots,\tau_K,\bar\alpha_1,\ldots,\bar\alpha_K\}.
\]
\end{proposition}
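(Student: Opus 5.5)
I will fix a query $x\in\mathcal{X}$ and an interval $(\tau_{k-1},\tau_k)$, and show that the ordering of the affine scores $u_{x,M}(\alpha)$ from Eq.~\eqref{eq:affine_score} is the same for every $\alpha$ in that interval. Write $d_M = \hat p(x,M)-\hat s(x,M)$, so that $u_{x,M}(\alpha)=\hat s(x,M)+\alpha d_M$. For any pair $M_i,M_j$, the difference $g_{ij}(\alpha)=u_{x,M_i}(\alpha)-u_{x,M_j}(\alpha)$ is affine in $\alpha$, and there are two cases.
\begin{itemize}
\item If $d_{M_i}=d_{M_j}$, then $g_{ij}$ is constant, so its sign does not depend on $\alpha$.
\item If $d_{M_i}\neq d_{M_j}$, then $g_{ij}$ has a unique zero at $\alpha_{ij}(x)$ from Eq.~\eqref{eq:alpha_intersection_set}. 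If this zero lies in $(0,1)$, it belongs to $\mathcal{A}(\mathcal{X})$, so it is some $\tau_m$ and cannot lie strictly inside $(\tau_{k-1},\tau_k)$. If it lies outside $(0,1)$, it is also outside the interval.
\end{itemize}
Since an affine function with no zero on an interval keeps a constant sign there, in every case $\operatorname{sign} g_{ij}(\alpha)\in\{-1,0,+1\}$ is constant on $(\tau_{k-1},\tau_k)$.

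Next I turn this sign pattern into a statement about the chosen model. The set $\argmax_M u_{x,M}(\alpha)$ is determined by the signs of all the $g_{ij}$: $M_i$ is a maximizer exactly when $g_{ij}(\alpha)\ge 0$ for all $j$. Hence the argmax set is the same for every $\alpha$ in the interval. The tie-breaking rule is fixed in advance and depends only on this set (for example, the lowest index), so $M_\alpha(x)=M_{\bar\alpha_k}(x)$. Doing this for every $x\in\mathcal{X}$ makes each summand in Eqs.~\eqref{eq:set_cost}--\eqref{eq:set_acc} constant on the interval, and therefore $\widehat C$ and $\widehat P$ are constant there too.

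For the last claim, note that $[0,1]$ is the union of the points $\tau_0,\dots,\tau_K$ and the open intervals $(\tau_{k-1},\tau_k)$. On each interval, the pair $(\widehat P,\widehat C)$ takes the single value it has at $\bar\alpha_k$. So the map $\alpha\mapsto(\widehat P(\alpha),\widehat C(\alpha))$ has finite range, and every value in that range is attained at some point of $F=\{\tau_0,\dots,\tau_K,\bar\alpha_1,\dots,\bar\alpha_K\}$. Now consider problem~\eqref{eq:alpha_star_set}, and assume it is feasible (implicitly required, otherwise no optimum exists).
\begin{itemize}
\item The feasible objective values form a nonempty finite set, so a maximum is attained, say at some $\alpha'$.
\item Take the point of $F$ that represents $\alpha'$: either $\alpha'$ itself if it is a breakpoint, or the $\bar\alpha_k$ of its interval.
\item This point has the same cost, so it is still feasible, and the same objective value, so it is also optimal.
\end{itemize}

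The main subtlety is the tie-breaking step. Ties with constant differences ($d_{M_i}=d_{M_j}$ and equal $\hat s$) can persist across a whole interval, so I must use the fact that the rule depends only on the argmax set, not on $\alpha$. I also need to record the implicit feasibility assumption stated above.
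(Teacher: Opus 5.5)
Your proof is correct and follows essentially the same route as the paper's. Every pairwise intersection of the affine scores $u_{x,M}(\alpha)$ lying in $(0,1)$ is a breakpoint, so the argmax set is constant on each open interval $(\tau_{k-1},\tau_k)$; hence $M_\alpha(x)$, $\widehat{C}$ and $\widehat{P}$ are constant there, and the budgeted problem reduces to the finite candidate set.

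Your version is a bit more careful than the paper's in three places. The paper speaks only of the ``strict ordering'' not changing, whereas you explicitly handle persistent ties between parallel scores with equal $\hat s$. You also make explicit that the tie-break depends only on the argmax set. Finally, the paper takes the existence of an optimum for granted, while you derive it from the finite range of $(\widehat{P},\widehat{C})$ under the necessary feasibility assumption.
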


\begin{proof}
Fix a query $x\in\mathcal{X}$.
For any pair of models $(M_i,M_j)$ with distinct slopes in Eq.~\eqref{eq:affine_score},
the equality $u_{x,M_i}(\alpha)=u_{x,M_j}(\alpha)$ holds at a single point $\alpha=\alpha_{ij}(x)$.
All such intersection points in $(0,1)$ are included in $\mathcal{A}(\mathcal{X})$ by construction.
Hence, on any open interval $(\tau_{k-1},\tau_k)$, the strict ordering of
$\{u_{x,M}(\alpha): M\in\mathcal{M}\}$ cannot change, because no two affine functions intersect inside the interval.
With deterministic tie-breaking, the maximizer $M_\alpha(x)$ is therefore constant on $(\tau_{k-1},\tau_k)$.

Since this holds for every $x\in\mathcal{X}$, the selected model tuple
$\{M_\alpha(x)\}_{x\in\mathcal{X}}$ is constant on $(\tau_{k-1},\tau_k)$.
It follows immediately from Eqs.~\eqref{eq:set_cost}--\eqref{eq:set_acc} that
$\widehat{C}(\alpha;\mathcal{X})$ and $\widehat{P}(\alpha;\mathcal{X})$ are constant on the same interval.
Thus, for each interval, either (i) all $\alpha$ in the interval are infeasible, or (ii) all are feasible and achieve the same objective value.
Therefore, an optimal feasible solution can be chosen at an endpoint $\tau_k$ or at a representative $\bar\alpha_k$ of a feasible interval.
\end{proof}

\section{Proof for Token Cost Efficiency}
\label{app:token_efficiency}

\paragraph{Predictor overhead and comparison to test-time scaling.}
A natural concern is that \textsc{\textsc{Scope}} evaluates the entire model pool per query, which could introduce redundant overhead.
We make this overhead explicit by writing the \emph{end-to-end} token budget per query.
Let $\ell_{\mathrm{pred}}(x,M)$ be the number of tokens consumed by \textsc{\textsc{Scope}} when predicting $(\hat p,\hat c)$ for model $M$ on query $x$,
and let $\ell_{\mathrm{api}}(x,M)$ be the number of tokens consumed when actually executing $M$ to answer $x$.
Under per-query routing, the total token usage is
\begin{equation}
\ell_{\mathrm{total}}^{\textsc{Scope}}(x)
=
\sum_{M\in\mathcal{M}} \ell_{\mathrm{pred}}(x,M)
\;+\;
\ell_{\mathrm{api}}\bigl(x,\pi_\alpha(x)\bigr).
\label{eq:scope_total_tokens}
\end{equation}
Eq.~\eqref{eq:scope_total_tokens} separates the cost into a \emph{prediction overhead} term and a \emph{single} executed-model term.
After GRPO training, the predictor is explicitly optimized for concise reasoning and stable formatting, leading to a short average prediction length.
Empirically, the average predictor length is $238.7$ tokens per model per query, while an untrained baseline predictor (Qwen4B) produces $2354.9$ tokens on average,
which is a $90\%$ reduction. Therefore, even though \textsc{\textsc{Scope}} evaluates all models, the overhead term
$\sum_{M\in\mathcal{M}} \ell_{\mathrm{pred}}(x,M)$ remains small relative to the cost of invoking candidate APIs.

This gap becomes more evident when compared with Test-Time Scaling (TTS), which executes multiple models per query and selects the best outcome by observation.
If TTS executes a set $\mathcal{S}\subseteq\mathcal{M}$ of models, its total token usage is
\begin{equation}
\ell_{\mathrm{total}}^{\mathrm{TTS}}(x;\mathcal{S})
=
\sum_{M\in\mathcal{S}} \ell_{\mathrm{api}}(x,M).
\label{eq:tts_total_tokens}
\end{equation}
In our measurements, executing all available models yields $29{,}592.9$ tokens per query on average, and executing the selected model pool yields $19{,}174.6$ tokens per query on average.
In contrast, \textsc{\textsc{Scope}} makes \emph{one} API call per query and replaces the remaining calls by lightweight predictions,
so the difference between Eq.~\eqref{eq:tts_total_tokens} and Eq.~\eqref{eq:scope_total_tokens} is dominated by the replacement
\[
\sum_{M\in\mathcal{S}} \ell_{\mathrm{api}}(x,M)
\;\leadsto\;
\sum_{M\in\mathcal{M}} \ell_{\mathrm{pred}}(x,M)
\;+\;
\ell_{\mathrm{api}}\bigl(x,\pi_\alpha(x)\bigr),
\]
where $\ell_{\mathrm{pred}}$ is orders of magnitude cheaper than $\ell_{\mathrm{api}}$.

Finally, Eq.~\eqref{eq:scope_total_tokens} explains why larger model pools can \emph{strengthen} the cost-efficiency advantage.
Both methods scale linearly in the number of considered models, but with different per-model coefficients:
TTS scales with the expensive coefficient $\E[\ell_{\mathrm{api}}]$, whereas \textsc{\textsc{Scope}} scales with the much smaller coefficient $\E[\ell_{\mathrm{pred}}]$ and adds only one $\E[\ell_{\mathrm{api}}]$ term.
As $|\mathcal{M}|$ grows, the token gap between these coefficients amplifies, so the relative efficiency gain of \textsc{\textsc{Scope}} over TTS increases.

\section{Proof for Computational Cost Comparison}
\label{app:computational_cost}

We derive the computational cost ratio between training a baseline router from scratch versus \textsc{\textsc{Scope}}'s anchor-based inference for OOD adaptation. We use DeepSeek-V3.2 (37B activated parameters) as a representative example.

\subsection{Baseline Approach: Training a Router}

The baseline approach requires two steps:
\begin{enumerate}
    \item Run the 37B model on the training set to generate supervision signals
    \item Train a 4B router on these sequences (prompt + output)
\end{enumerate}

Let:
\begin{itemize}
    \item $P = 37 \times 10^9$ (activated parameters of the teacher model)
    \item $P_{\text{router}} = 4 \times 10^9$ (router parameters)
    \item $N_{\text{tr}} = 4{,}778$ (training samples)
    \item $L = L_{\text{ctx}} + L_{\text{gen}} = 208 + 4{,}665 = 4{,}873$ tokens per sample
    \item $E = 3$ (training epochs)
\end{itemize}

\paragraph{Step 1: 37B Inference on Training Set.}
Total token throughput:
\begin{equation}
T_{\text{inf}} = N_{\text{tr}} \cdot L = 4778 \times 4873 \approx 23.3 \times 10^6 \text{ tokens}.
\end{equation}

Inference FLOPs (forward pass only):
\begin{align}
F_{\text{inf}} &= 2 \cdot P \cdot T_{\text{inf}} \notag \\
&= 2 \times 37 \times 10^9 \times 23.3 \times 10^6 \approx 1.72 \times 10^{18} \text{ FLOPs}.
\end{align}

\paragraph{Step 2: 4B Router Training.}
The training data includes both prompt and model output (the complete sequence):
\begin{equation}
T_{\text{train}} = E \cdot N_{\text{tr}} \cdot L = 3 \times 4778 \times 4873 \approx 69.8 \times 10^6 \text{ tokens}.
\end{equation}

Training FLOPs (forward + backward + gradient):
\begin{align}
F_{\text{train}} &= 6 \cdot P_{\text{router}} \cdot T_{\text{train}} \notag \\
&= 6 \times 4 \times 10^9 \times 69.8 \times 10^6 \approx 1.68 \times 10^{18} \text{ FLOPs}.
\end{align}

\paragraph{Total Baseline Cost.}
\begin{align}
F_{\text{baseline}} &= F_{\text{inf}} + F_{\text{train}} \notag \\
&= 1.72 \times 10^{18} + 1.68 \times 10^{18} \approx 3.40 \times 10^{18} \text{ FLOPs}.
\end{align}

The two components are roughly equal: 37B inference accounts for 50.7\%, and 4B training accounts for 49.3\%.

\subsection{\textsc{\textsc{Scope}} Approach: Anchor Inference Only}

\textsc{\textsc{Scope}} only requires running the 37B model on the anchor set:
\begin{itemize}
    \item $K = 250$ anchors
    \item Same $L = 4{,}873$ tokens per sample
\end{itemize}

Total token throughput:
\begin{equation}
T_{\text{anchor}} = K \cdot L = 250 \times 4873 \approx 1.22 \times 10^6 \text{ tokens}.
\end{equation}

Inference FLOPs:
\begin{align}
F_{\text{\textsc{Scope}}} &= 2 \cdot P \cdot T_{\text{anchor}} \notag \\
&= 2 \times 37 \times 10^9 \times 1.22 \times 10^6 \approx 9.02 \times 10^{16} \text{ FLOPs}.
\end{align}

\subsection{Cost Ratio}

\begin{equation}
\frac{F_{\text{baseline}}}{F_{\text{\textsc{Scope}}}} = \frac{3.40 \times 10^{18}}{9.02 \times 10^{16}} \approx \mathbf{37.7\times}.
\end{equation}

\paragraph{Simplified Formula.}
The ratio can be derived analytically:
\begin{align}
\frac{F_{\text{baseline}}}{F_{\text{\textsc{Scope}}}} &= \frac{2P \cdot N_{\text{tr}} \cdot L + 6 P_{\text{router}} \cdot E \cdot N_{\text{tr}} \cdot L}{2P \cdot K \cdot L} \\
&= \frac{N_{\text{tr}}}{K} \cdot \left(1 + \frac{6 P_{\text{router}} \cdot E}{2P}\right) \\
&= \frac{4778}{250} \times \left(1 + \frac{6 \times 4 \times 3}{2 \times 37}\right) \\
&= 19.1 \times (1 + 0.97) \\
&= 19.1 \times 1.97 \approx \mathbf{38\times}.
\end{align}

This shows that \textsc{\textsc{Scope}} achieves approximately $N_{\text{tr}}/K \approx 19\times$ savings from reduced inference samples, amplified by an additional factor of $\sim 2\times$ since the baseline must also train the router.

\subsection{Memory Comparison}

\paragraph{37B Inference Memory (FP16).}
\begin{equation}
M_{\text{infer}} \approx P \cdot 2\text{B} + M_{\text{KV}} \approx 74\text{GB} + \text{KV cache overhead}.
\end{equation}

\paragraph{4B Router Training Memory.}
\begin{align}
M_{\text{train}} &\approx P_{\text{router}} \cdot (2 + 2 + 8)\text{B} + M_{\text{acts}} \notag \\
&\approx 48\text{GB} + 12\text{GB} = 60\text{GB},
\end{align}
where we account for weights (8GB), gradients (8GB), AdamW optimizer states (32GB), and activations ($\sim$12GB).

Both require high-end GPUs, but \textsc{\textsc{Scope}}'s anchor inference can be performed via API calls, eliminating local GPU requirements.

\section{Dataset Details}
\label{abl:dataset}
\begin{figure*}[t]
    \centering
   \includegraphics[width=\textwidth]{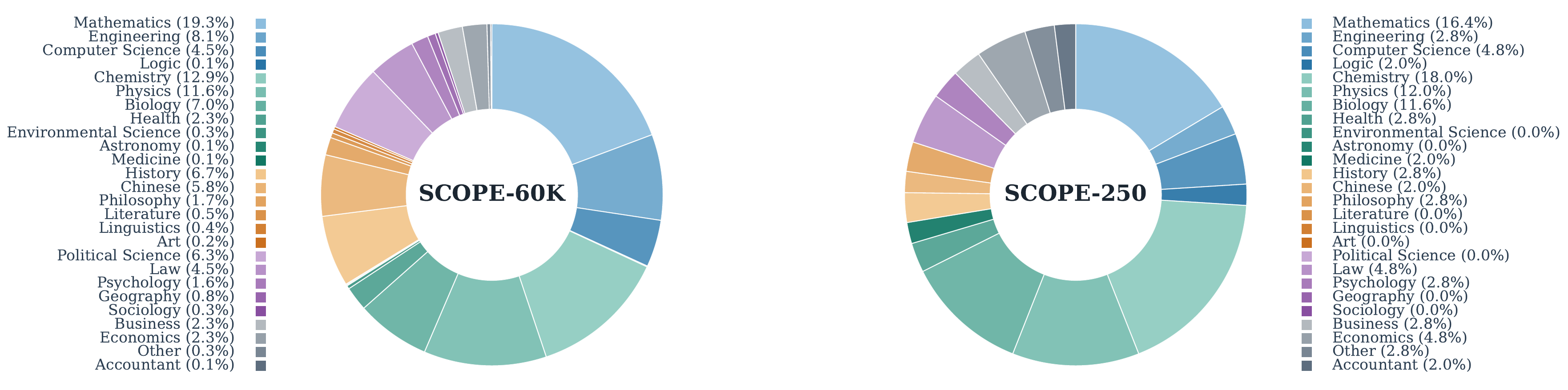}
    \caption{\textbf{Subject distributions of the \textsc{Scope} datasets.} The charts illustrate a broad coverage of disciplines, ranging from STEM fields to Humanities. Notably, the category composition of the \textsc{Scope}-250 anchor set closely mirrors that of the full \textsc{Scope}-60K dataset. This structural alignment ensures that the compact anchor set serves as a reliable and representative proxy for efficient model fingerprinting and performance estimation.}
    \label{fig:category}
\end{figure*}

\begin{figure*}[t]
    \centering
    \includegraphics[width=\textwidth]{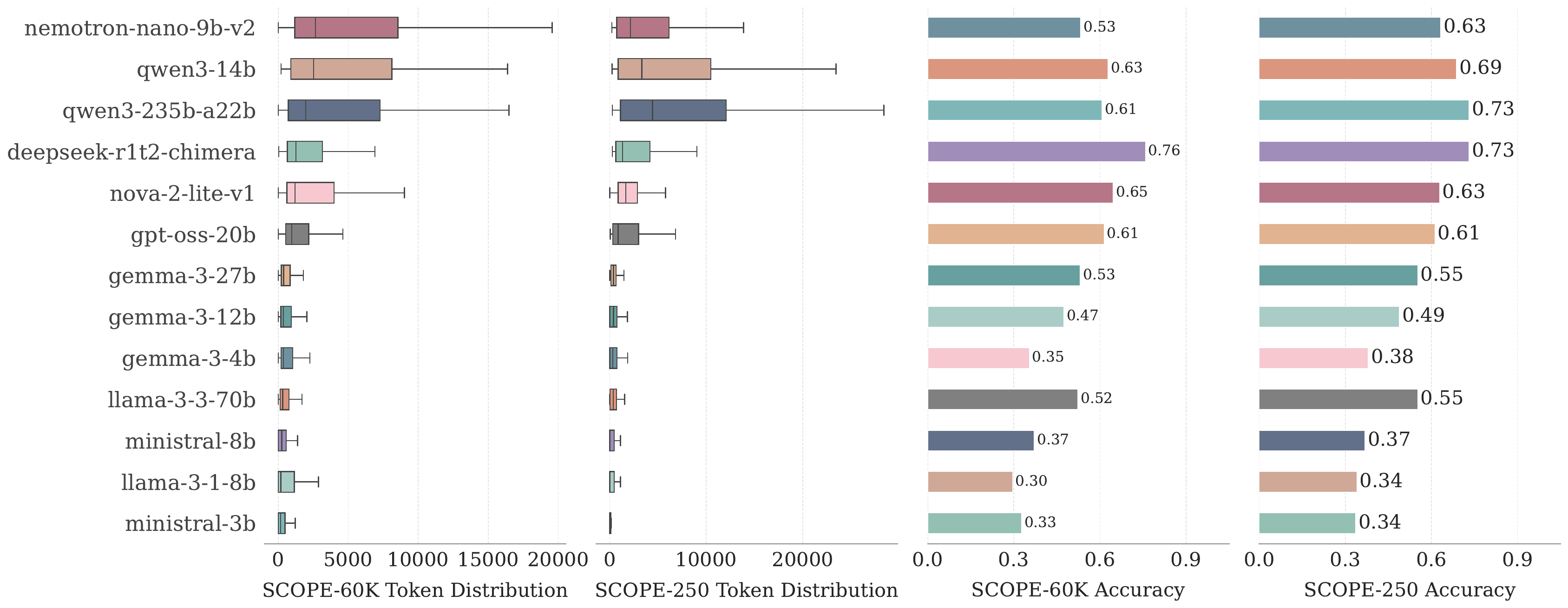}
    \caption{\textbf{Behavioral diversity and performance comparison across models.} The panels display the token usage distributions (left) and accuracy scores (right) on both the \textsc{Scope}-60K and \textsc{Scope}-250 datasets. The results highlight the significant distinctions in reasoning patterns (token consumption) and performance capabilities among the evaluated models.}
    \label{fig:panel_comparison}
\end{figure*}

\begin{figure}[t]
    \centering
    \includegraphics[width=\linewidth]{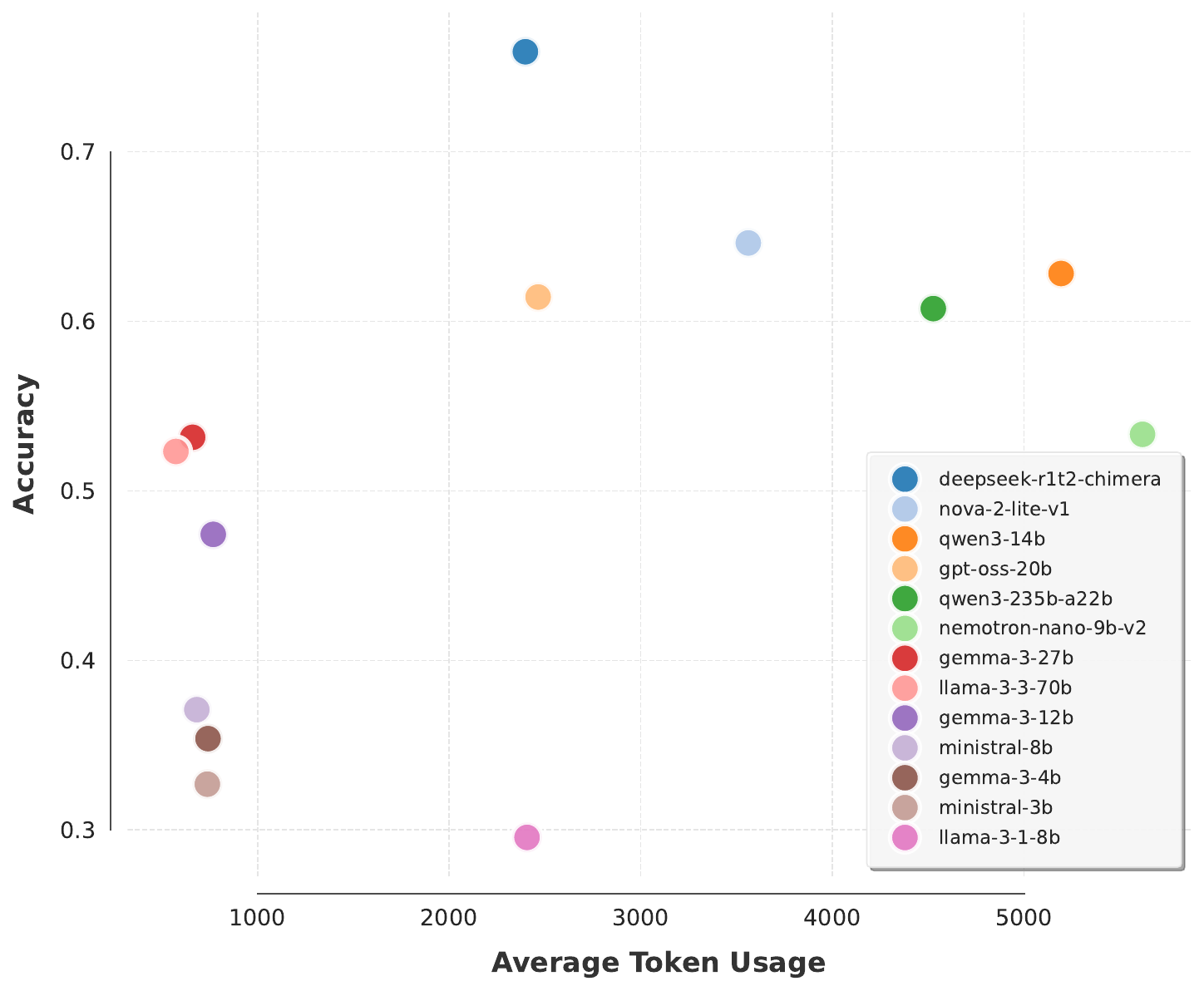}
    \caption{\textbf{Accuracy versus Average Token Usage on the \textsc{Scope}-60K dataset.} The plot illustrates the distinct performance capabilities and varying computational costs associated with each model. The broad scattering of data points highlights the significant diversity in model behaviors and efficiency trade-offs across the \textsc{Scope}-60K.}
    \label{fig:token-acc}
\end{figure}

\begin{figure}[t]
    \centering
    \includegraphics[width=\linewidth]{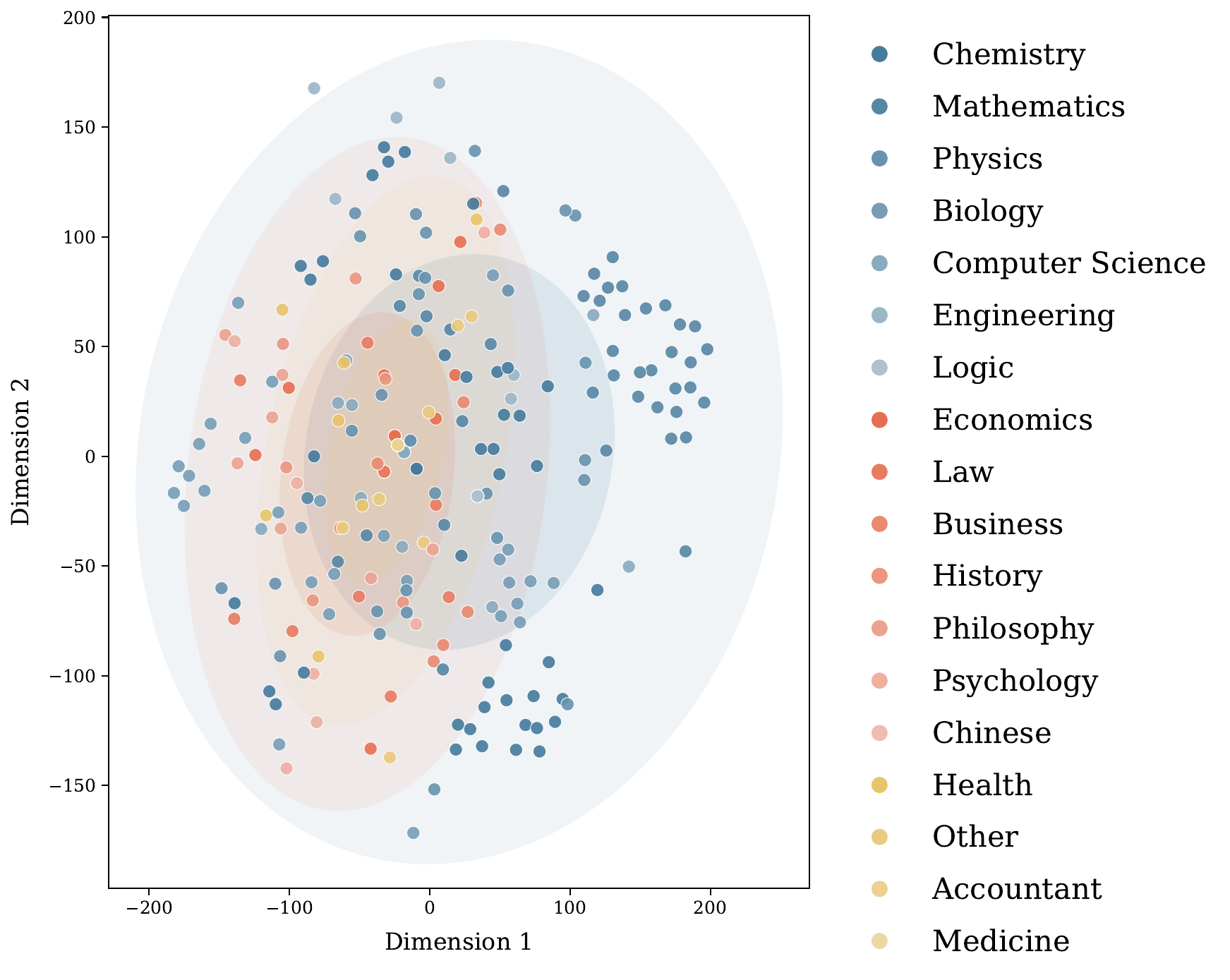}
    \caption{\textbf{t-SNE of the \textsc{Scope}-250 set.} The projection demonstrates the rich semantic composition and broad coverage of the anchor questions, ensuring a comprehensive representation of model behaviors across various domains.}
    \label{fig:anchor-tsne}
\end{figure}

This section provides empirical evidence supporting the core motivations of \textsc{Scope}: the necessity of dynamic routing due to model heterogeneity, and the validity of the anchor-based fingerprinting mechanism. We also specific the details of the dataset we created in this section.

\paragraph{Heterogeneity in Model Behaviors.} The premise of the \textsc{Scope} framework is that models exhibit distinct trade-offs between capabilities and costs, which static routing cannot address. Figure~\ref{fig:token-acc} substantiates this by visualizing the accuracy-cost landscape on the \textsc{Scope}-60K benchmark. The broad dispersion of data points confirms that no single model dominates across all metrics, validating the need for a controllable outcome estimator. Furthermore, Figure~\ref{fig:panel_comparison} reveals that these differences are intrinsic; each model displays a unique token usage distribution and accuracy profile, effectively serving as a distinguishable ``fingerprint'' for identification.

\paragraph{Semantic and Distributional Representativeness of Anchors.} For \textsc{Scope} to achieve scalability, the compact anchor set must serve as a high-fidelity proxy for the full distribution. Figure~\ref{fig:anchor-tsne} visualizes the semantic space of the \textsc{Scope}-250 anchor set via t-SNE, demonstrating a rich semantic coverage that spans diverse domains from STEM to Humanities. This qualitative diversity is quantitatively backed by Figure~\ref{fig:category}, which confirms that the category composition of the anchor set structurally mirrors that of the full \textsc{Scope}-60K dataset.

\paragraph{Behavioral Correspondence.} Finally, we verify that the fingerprints derived from the anchor set serve as effective proxies for global model behaviors. As shown in Figure~\ref{fig:panel_comparison}, while the absolute distributions show natural variance due to the sample size difference, the \textsc{Scope}-250 anchor set successfully preserves the \textit{relative behavioral characteristics} of the models. Models that exhibit high token consumption or superior accuracy on the full benchmark display comparable tendencies on the anchor set. This trend alignment ensures that the anchor set captures the unique ``fingerprint''---the intrinsic capability and cost profile---of each model, providing a discriminative signal for the \textsc{Scope} framework to predict performance reliably.

\section{Data Examples}
\label{sec:data_examples}

% Dataset Examples for \textsc{Scope} and OOD

\subsection{\textsc{Scope}-60K (Training Set)}
\label{subsec:scope_train}

\vspace{4pt}
\noindent\textbf{Example metadata.}
ID: \texttt{rbench\_en\_001076}\quad
Model: \texttt{deepseek-r1t2-chimera}\quad
Category: \texttt{Mathematics}

\vspace{4pt}
\noindent\textbf{Ground truth:} B\quad
\textbf{Final answer:} B\quad
\textbf{Is correct:} True

\vspace{4pt}
\noindent\textbf{Usage:} prompt\_tokens: 174, completion\_tokens: 2876, total\_tokens: 3050

\vspace{6pt}
\noindent\textbf{Question.}
{\small
Given the system equation:
$$\dot{x} = \begin{bmatrix} \lambda_1 & 1 & 0 \\ 0 & \lambda_1 & 1 \\ 0 & 0 & \lambda_1 \end{bmatrix} x + \begin{bmatrix} a \\ b \\ c \end{bmatrix} u$$
determine $a$, $b$, and $c$ that satisfy the condition for complete state controllability.

Options: A.~$c = 0$\quad B.~$c \neq 0$\quad C.~$c \geq 0$\quad D.~$c < 0$\quad E.~$c \leq 0$\quad F.~All other answers are incorrect
}

\vspace{6pt}
\noindent\textbf{Raw output (truncated).}
{\small
To determine the condition for complete state controllability, we analyze the controllability matrix $\mathcal{C} = [B \; AB \; A^2B]$. The last row of $\mathcal{C}$ is $[c, \lambda_1 c, \lambda_1^2 c]$. If $c = 0$, this row becomes all zeros, reducing the rank of $\mathcal{C}$ to at most 2, making the system uncontrollable. For full rank (rank 3), we require $c \neq 0$. \textbf{Final answer: B}
}

%------------------------------------------------------------------
\subsection{\textsc{Scope}-250 (Anchor Set)}
\label{subsec:scope_anchor}

\vspace{4pt}
\noindent\textbf{Example metadata.}
ID: \texttt{anchor051}\quad
Model: \texttt{deepseek-r1t2-chimera}\quad
Category: \texttt{Biology}

\vspace{4pt}
\noindent\textbf{Ground truth:} E\quad
\textbf{Final answer:} E\quad
\textbf{Is correct:} True

\vspace{4pt}
\noindent\textbf{Usage:} prompt\_tokens: 298, completion\_tokens: 979, total\_tokens: 1277

\vspace{6pt}
\noindent\textbf{Question.}
{\small
In a group of students, about 36 percent could roll their tongues, a trait determined by a dominant gene. The other 64 percent of the students were nonrollers. Calculate the frequencies of the gene R for tongue rolling and its recessive allele r for nonrolling.

Options: A.~$R=0.5, r=0.5$\quad B.~$R=0.7, r=0.3$\quad C.~$R=0.4, r=0.6$\quad D.~$R=0.18, r=0.82$\quad E.~$R=0.2, r=0.8$\quad F.~$R=0.6, r=0.4$\quad G.~$R=0.3, r=0.7$\quad H.~$R=0.64, r=0.36$\quad I.~$R=0.36, r=0.64$\quad J.~$R=0.8, r=0.2$
}

\vspace{6pt}
\noindent\textbf{Raw output (truncated).}
{\small
Using the Hardy-Weinberg principle, where the frequency of the homozygous recessive genotype (rr) is $q^2$. Given that 64\% are nonrollers, $q^2 = 0.64$, so $q = 0.8$. Since $p + q = 1$, we get $p = 0.2$. Verification: Frequency of rollers $= p^2 + 2pq = 0.04 + 0.32 = 0.36$, matching the given 36\%. Thus, frequency of R is 0.2 and r is 0.8. \textbf{Final answer: E}
}

%------------------------------------------------------------------
\subsection{OOD Test Set}
\label{subsec:ood_test}

\vspace{4pt}
\noindent\textbf{Example metadata.}
ID: \texttt{aime\_2024\_87}\quad
Model: \texttt{deepseek-v3.2}\quad
Category: \texttt{Math}

\vspace{4pt}
\noindent\textbf{Ground truth:} 699\quad
\textbf{Final answer:} 699\quad
\textbf{Is correct:} True

\vspace{4pt}
\noindent\textbf{Usage:} prompt\_tokens: 92, completion\_tokens: 18942, total\_tokens: 19034

\vspace{6pt}
\noindent\textbf{Question.}
{\small
Let $N$ be the greatest four-digit positive integer with the property that whenever one of its digits is changed to 1, the resulting number is divisible by 7. Let $Q$ and $R$ be the quotient and remainder, respectively, when $N$ is divided by 1000. Find $Q + R$.
}

\vspace{6pt}
\noindent\textbf{Raw output (truncated).}
{\small
The only four-digit numbers satisfying the condition are 5624 and 5694, with 5694 being the greatest. Dividing 5694 by 1000 gives quotient $Q = 5$ and remainder $R = 694$. Thus, $Q + R = 5 + 694 = 699$. $\boxed{699}$
}

%------------------------------------------------------------------
\subsection{OOD Anchor Set}
\label{subsec:ood_anchor}

\vspace{4pt}
\noindent\textbf{Example metadata.}
ID: \texttt{anchor051}\quad
Model: \texttt{deepseek-v3.2}\quad
Category: \texttt{Biology}

\vspace{4pt}
\noindent\textbf{Ground truth:} E\quad
\textbf{Final answer:} E\quad
\textbf{Is correct:} True

\vspace{4pt}
\noindent\textbf{Usage:} prompt\_tokens: 299, completion\_tokens: 344, total\_tokens: 643

\vspace{6pt}
\noindent\textbf{Question.}
{\small
In a group of students, about 36 percent could roll their tongues, a trait determined by a dominant gene. The other 64 percent of the students were nonrollers. Calculate the frequencies of the gene R for tongue rolling and its recessive allele r for nonrolling.

Options: A.~$R=0.5, r=0.5$\quad B.~$R=0.7, r=0.3$\quad C.~$R=0.4, r=0.6$\quad D.~$R=0.18, r=0.82$\quad E.~$R=0.2, r=0.8$\quad F.~$R=0.6, r=0.4$\quad G.~$R=0.3, r=0.7$\quad H.~$R=0.64, r=0.36$\quad I.~$R=0.36, r=0.64$\quad J.~$R=0.8, r=0.2$
}

\vspace{6pt}
\noindent\textbf{Raw output (truncated).}
{\small
The frequency of nonrollers (rr) is 64\%, so $q^2 = 0.64$, giving $q = 0.8$ for the recessive allele r. The frequency of the dominant allele R is $p = 1 - q = 0.2$. \textbf{Final answer: E}
}

%------------------------------------------------------------------
\subsection{Easy vs.\ Hard Question Comparison}
\label{subsec:easy_hard}

This subsection illustrates the contrast between easy and hard questions in terms of reasoning complexity and output length.

\vspace{8pt}
\noindent\textbf{\underline{Easy Question}}

\vspace{4pt}
\noindent\textbf{Metadata.}
ID: \texttt{mmlu\_pro\_010055}\quad
Category: \texttt{Physics}\quad
Completion tokens: \textbf{645}

\vspace{4pt}
\noindent\textbf{Ground truth:} H\quad
\textbf{Final answer:} H\quad
\textbf{Is correct:} True

\vspace{6pt}
\noindent\textbf{Question.}
{\small
When relatively slow-moving molecules condense from the air, the temperature of the remaining air tends to:

Options: A.~condense as well\quad B.~decrease\quad C.~become highly unstable\quad D.~drop rapidly then stabilize\quad E.~fluctuate unpredictably\quad F.~spread out uniformly\quad G.~become extremely cold\quad H.~increase\quad I.~increase then decrease\quad J.~remain unchanged
}

\vspace{6pt}
\noindent\textbf{Raw output.}
{\small
When slower-moving molecules condense from the air, they are removed from the gas phase, leaving behind molecules with higher average kinetic energy. Since temperature is a measure of the average kinetic energy of molecules, the remaining air's temperature increases. \textbf{Final answer: H}
}

\vspace{4pt}
\noindent\textit{Note: The model applies basic thermodynamics reasoning in a straightforward manner.}

\vspace{12pt}
\noindent\textbf{\underline{Hard Question}}

\vspace{4pt}
\noindent\textbf{Metadata.}
ID: \texttt{gpqa\_000247}\quad
Category: \texttt{Physics}\quad
Completion tokens: \textbf{6444}

\vspace{4pt}
\noindent\textbf{Ground truth:} $10^{62}$ J/K\quad
\textbf{Final answer:} $10^{62}$\quad
\textbf{Is correct:} True

\vspace{6pt}
\noindent\textbf{Question.}
{\small
The angular size of the event horizon of a supermassive black hole in the centre of a galaxy at a distance of $d = 10^{10}$ parsecs is measured to be $\theta = 10^{-17}$ degrees. Find the order of magnitude of the entropy of the black hole.
}

\vspace{6pt}
\noindent\textbf{Raw output (truncated).}
{\small
The angular size $\theta$ is given as $10^{-17}$ degrees, and distance $d = 10^{10}$ parsecs. Using the small-angle formula, $\theta \approx 2R_s / d$, where $R_s$ is the Schwarzschild radius. Converting $\theta$ to radians ($\theta \approx 1.745 \times 10^{-19}$ rad) and $d$ to meters ($d \approx 3.086 \times 10^{26}$ m), we calculate $R_s \approx 2.69 \times 10^{7}$ m.

The mass $M$ is derived from $R_s = 2GM/c^2$, yielding $M \approx 1.81 \times 10^{34}$ kg. The event horizon area $A = 4\pi R_s^2 \approx 9.09 \times 10^{15}$ m$^2$.

The entropy $S$ is given by the Bekenstein-Hawking formula:
$$S = \frac{k_B c^3 A}{4\hbar G} \approx 1.20 \times 10^{62}$$

The order of magnitude of the entropy is $10^{62}$. \textbf{Final answer:} $10^{62}$
}

\vspace{4pt}
\noindent\textit{Note: The model performs multi-step physics calculations involving unit conversions, general relativity, and the Bekenstein-Hawking entropy formula, resulting in 10$\times$ more tokens than the easy question.}

\subsection{\textsc{Scope}-NoCoT (SFT Inference)}
\label{subsec:prompt_nocot}

\vspace{4pt}
\noindent\textbf{Example metadata.}
ID: \texttt{rbench\_en\_000042}\quad
Model: \texttt{deepseek-r1t2-chimera}\quad
Category: \texttt{Physics}

\vspace{6pt}
\noindent\textbf{Prompt template (NoCoT).}
{\ttfamily
\#\#\# Task\\
You are a performance prediction expert.\\
Given a target question, K anchor questions with their performance results,\\
and a target AI model, predict how the model will perform on the target question,\\
specifically the output length and correctness after related reasoning analysis.\\
\\
\#\#\# Target Model\\
\{model\_name\}\\
\\
\{anchor\_text\}\\
\\
\#\#\# Target Question\\
\{question\}\\
\\
\#\#\# Output Format\\
You may output additional text above, but the FINAL line of your response\\
MUST follow this exact format:\\
\\
Predicted Performance: \{len: <integer>, correct: <yes|no>\}\\
\\
\#\#\# Output:
}

\vspace{6pt}
\noindent\textbf{Key feature.}
No chain-of-thought is required. The model directly outputs the structured prediction.

\subsection{\textsc{Scope} (RL-CoT)}
\label{subsec:prompt_cot}

\vspace{4pt}
\noindent\textbf{Example metadata.}
ID: \texttt{rbench\_en\_001076}\quad
Model: \texttt{deepseek-r1t2-chimera}\quad
Category: \texttt{Math}

\vspace{6pt}
\noindent\textbf{Prompt template (CoT).}
{\ttfamily
\#\#\# Task\\
You are a performance prediction expert.\\
Given a target question, K anchor questions with their performance results,\\
and a target AI model, predict how the model will perform on the target question,\\
specifically the output length and correctness after related reasoning analysis.\\
\\
\#\#\# Target Model\\
\{model\_name\}\\
\\
\{anchor\_text\}\\
\\
\#\#\# Target Question\\
\{question\}\\
\\
\#\#\# Output Format (STRICT)\\
Analysis: [Your comprehensive analysis covering anchor patterns,\\
target question characteristics, and reasoning.]\\
Predicted Performance: \{len: [integer], correct: [yes/no]\}\\
\\
\#\#\# Output:
}

\vspace{6pt}
\noindent\textbf{Key feature.}
Chain-of-thought is required in the Analysis field. The final line remains the same schema.

\subsection{Hindsight Distillation (SFT Training Data)}
\label{subsec:prompt_teacher_forcing}

\vspace{4pt}
\noindent\textbf{Example metadata.}
ID: \texttt{rbench\_en\_001076}\quad
Model: \texttt{gemma-3-12b}\quad
Category: \texttt{Math}

\vspace{6pt}
\noindent\textbf{Anchor serialization (Markdown-style).}
{\ttfamily
\#\#\# Anchor Question k\\
**Question:** \{a['question']\}\\
\{a['options']\}\\
**Performance:** \{len: \{a['len']\}, correct: \{a['correct']\}\}\\
}

\vspace{6pt}
\noindent\textbf{Prompt template (Hindsight Distillation).}
{\ttfamily
\#\#\# Task\\
You are a performance prediction expert.\\
Given a target question, K anchor questions with their performance results,\\
and a target AI model, predict how the model will perform on the target question,\\
specifically the output length and correctness after related reasoning analysis.\\
You MUST provide an analysis before the final prediction.\\
\\
\#\#\# Target Model\\
\{model\_name\}\\
\\
\{anchor\_text\_markdown\}\\
\\
\#\#\# Target Question\\
\{question\}\\
\\
\#\#\# Output Format (STRICT)\\
Analysis: [Your analysis of anchor patterns and the target question.]\\
Predicted Performance: \{len: [integer], correct: [yes/no]\}\\
\\
\#\#\# Output:
}

\vspace{6pt}
\noindent\textbf{Key Features:} Hindsight distillation conditions the teacher on realized outcomes
(e.g., ground-truth correctness and token cost) to synthesize concise rationales and structured targets.
We then distill these teacher-generated outputs to train the student via SFT, while preserving the same
output schema as the CoT prompt. The main difference from \textsc{\textsc{Scope}} (RL-CoT) is the anchor
serialization style (Markdown-like blocks vs.\ plain text), not the prediction schema.

% Case Study: Model Prediction Analysis
% This file contains examples of model predictions across different configurations
%
% Required packages:
% \usepackage{xcolor}
% \usepackage{pifont}  % for \ding{51} checkmark
% \usepackage{enumitem}  % for itemize options

%------------------------------------------------------------------
% Summary of Prediction Accuracies:
%
% Test Set:
% - Qwen4B training-free NoCoT: 54.30%
% - Qwen4B Zero-shot CoT:   53.48%
% - Qwen4B K-shot NoCoT:    56.68%
% - Qwen4B K-shot CoT:      59.37%
% - SCOPE NoCoT:            75.06%
% - SCOPE CoT:              77.05%
%
% OOD Set:
% - SCOPE OOD CoT:          69.12%
% - SCOPE OOD NoCoT:        55.28%
% - Qwen4B OOD K-shot CoT:  48.56%
%------------------------------------------------------------------

% Case Study: Model Prediction Analysis

% Case Study: Model Prediction Analysis

% Case Study: Model Prediction Analysis

\section{Test Set Case Studies}
\label{sec:test_case_studies}

% ----------------------------
\subsection{SCOPE CoT}
{\footnotesize

\noindent\textbf{Example 1 (TP)} \\
\noindent ID: \texttt{rbench\_en\_000042} \\
\noindent Model: \texttt{tngtech/deepseek-r1t2-chimera} \\
\noindent Anchors: \texttt{anchor090, anchor092, anchor029, anchor158, anchor138} \\
\noindent Pred: correct=yes, len=3000 tok \\
\noindent GT: correct=yes, len=4603 tok \\
\noindent Conf: 1.00 \quad Match: OK

\smallskip
\noindent\textbf{Samples (1 total):}

\noindent\textit{Sample 1}: correct=yes, len=3000
\smallskip
\noindent\fbox{%
\parbox{0.97\linewidth}{\small
The target question involves calculating the electric field along the z-axis due to a charged ring with a specific charge distribution. The anchor questions cover various electromagnetism topics, including electric fields, polarization, displacement, and charge distributions. The model's performance on these questions varies, with some being answered correctly and others not. The target question is similar in complexity to Anchor Questions 1 and 2, which involved detailed calculations and multiple-choice selections. However, it is more straightforward in terms of the physical setup and mathematical requirements. The model's performance on Anchor Question 3 was incorrect, which involved a simpler concept but required understanding of electrostatics. The target question requires integrating the contributions from different parts of the ring, which might be challenging but not as complex as the previous ones. The model's performance on Anchor Questions 4 and 5 was correct, indicating good handling of advanced topics. Given the model's history of correct answers on similar problems, it is likely to perform well on this question, though there is a risk of error in the integration process or misapplying formulas.

\par\vspace{2pt}
\textbf{Predicted Performance: \{len: 3000, correct: yes\}}
}}
\smallskip

\medskip
\hrule
\medskip

\noindent\textbf{Example 2 (TN)} \\
\noindent ID: \texttt{mmlu\_pro\_011758} \\
\noindent Model: \texttt{google/gemma-3-12b-it} \\
\noindent Anchors: \texttt{anchor220, anchor086, anchor017, anchor190, anchor137} \\
\noindent Pred: correct=no, len=1000 tok \\
\noindent GT: correct=no, len=646 tok \\
\noindent Conf: 1.00 \quad Match: OK

\smallskip
\noindent\textbf{Samples (1 total):}

\noindent\textit{Sample 1}: correct=no, len=1000
\smallskip
\noindent\fbox{%
\parbox{0.97\linewidth}{\small
The target question involves calculating the dew point and mole fractions of combustion products for propane burning in theoretical air. This is a complex thermodynamic and chemical engineering problem requiring knowledge of stoichiometry, gas behavior, and phase equilibrium. The anchor questions cover various topics including chemical equilibrium, bond energies, reaction conditions, and physics calculations, with varying performance levels. The model's performance on these questions shows mixed results, with some correct answers and others incorrect. The target question is more technical and requires precise calculations, which might be challenging for the model. However, the model has shown capability in handling multi-step problems, especially when the steps are well-defined. The correct answer likely involves determining the mole fractions of each component and then using the ideal gas law to find the dew point. Given the model's past performance on similar complex questions, it might struggle with the exact calculations but could potentially arrive at the correct answer if the steps are followed accurately.

\par\vspace{2pt}
\textbf{Predicted Performance: \{len: 1000, correct: no\}}
}}
\smallskip

\medskip
\hrule
\medskip

\noindent\textbf{Example 3 (TN)} \\
\noindent ID: \texttt{rbench\_en\_000042} \\
\noindent Model: \texttt{google/gemma-3-27b-it} \\
\noindent Anchors: \texttt{anchor090, anchor092, anchor029, anchor158, anchor138} \\
\noindent Pred: correct=no, len=800 tok \\
\noindent GT: correct=no, len=1243 tok \\
\noindent Conf: 1.00 \quad Match: OK

\smallskip
\noindent\textbf{Samples (1 total):}

\noindent\textit{Sample 1}: correct=no, len=800
\smallskip
\noindent\fbox{%
\parbox{0.97\linewidth}{\small
The target question involves calculating the electric field along the z-axis for a charged ring with a specific charge distribution. The anchor questions cover electromagnetism topics like electric fields, displacement, and charge distributions, but the target question is more about symmetry and integration. The model's performance on these questions was inconsistent, with some being incorrect. However, the target question has a clear mathematical structure and multiple-choice options, which might be easier to handle. The correct answer requires understanding of symmetry and integration of charge contributions. The model might struggle with the integration process but could potentially select the correct option if it recognizes the symmetry.

\par\vspace{2pt}
\textbf{Predicted Performance: \{len: 800, correct: no\}}
}}
\smallskip

\medskip
\hrule
\medskip

\noindent\textbf{Example 4 (TN)} \\
\noindent ID: \texttt{rbench\_en\_000042} \\
\noindent Model: \texttt{google/gemma-3-4b-it} \\
\noindent Anchors: \texttt{anchor090, anchor092, anchor029, anchor158, anchor138} \\
\noindent Pred: correct=no, len=800 tok \\
\noindent GT: correct=no, len=1118 tok \\
\noindent Conf: 1.00 \quad Match: OK

\smallskip
\noindent\textbf{Samples (1 total):}

\noindent\textit{Sample 1}: correct=no, len=800
\smallskip
\noindent\fbox{%
\parbox{0.97\linewidth}{\small
The target question involves calculating the electric field along the z-axis due to a charged ring with a specific charge distribution. The anchor questions cover electromagnetism topics like electric fields, displacement, and charge distributions, with varying complexity. The model's performance on these questions shows incorrect answers and longer response times, indicating potential difficulties with theoretical physics and multi-step calculations. The target question requires integrating the contributions from different parts of the ring, considering symmetry and charge distribution. The correct answer likely involves recognizing the symmetry of the problem and simplifying the integral, which might be challenging for the model. The options are similar in structure, so careful analysis is needed to select the right one.

\par\vspace{2pt}
\textbf{Predicted Performance: \{len: 800, correct: no\}}
}}
\smallskip

\medskip
\hrule
\medskip

\noindent\textbf{Example 5 (TN)} \\
\noindent ID: \texttt{mmlu\_pro\_011758} \\
\noindent Model: \texttt{openai/gpt-oss-20b} \\
\noindent Anchors: \texttt{anchor220, anchor086, anchor017, anchor190, anchor137} \\
\noindent Pred: correct=no, len=12000 tok \\
\noindent GT: correct=no, len=3152 tok \\
\noindent Conf: 1.00 \quad Match: OK

\smallskip
\noindent\textbf{Samples (1 total):}

\noindent\textit{Sample 1}: correct=no, len=12000
\smallskip
\noindent\fbox{%
\parbox{0.97\linewidth}{\small
The target question involves calculating the dew point and mole fractions of combustion products for propane burning in dry 125\% theoretical air. This is a complex thermodynamic and chemical engineering problem requiring knowledge of stoichiometry, gas laws, and phase equilibrium. The anchor questions cover various topics including chemical equilibrium, bond energy, reaction conditions, isotope analysis, and proton energy conversion. The performance on these questions varies, with some being answered correctly and others not. The model's performance on the target question will depend on its ability to handle multi-step calculations and apply relevant chemical principles. The correct answer requires precise calculations involving stoichiometry, ideal gas law, and dew point determination, which are more advanced than the previous questions. However, the model has shown accuracy in similar multi-step problems (e.g., Anchor Question 2 and 5), suggesting it might perform well. The incorrect answers in Anchor Questions 1 and 4 indicate potential issues with complex chemical equilibrium and isotope-related problems, but the target question is more structured and has clear options, which might aid in prediction.

\par\vspace{2pt}
\textbf{Predicted Performance: \{len: 12000, correct: no\}}
}}
\smallskip

\medskip
\hrule
\medskip

\noindent\textbf{Example 6 (TN)} \\
\noindent ID: \texttt{rbench\_en\_000042} \\
\noindent Model: \texttt{meta-llama/llama-3.1-8b-instruct} \\
\noindent Anchors: \texttt{anchor090, anchor092, anchor029, anchor158, anchor138} \\
\noindent Pred: correct=no, len=400 tok \\
\noindent GT: correct=no, len=650 tok \\
\noindent Conf: 1.00 \quad Match: OK

\smallskip
\noindent\textbf{Samples (1 total):}

\noindent\textit{Sample 1}: correct=no, len=400
\smallskip
\noindent\fbox{%
\parbox{0.97\linewidth}{\small
The target question involves calculating the electric field along the z-axis for a charged ring with a specific charge distribution. The anchor questions cover electromagnetism topics like electric fields, displacement, and charge distributions, but the target question is more about symmetry and integration of charge contributions. The model's performance on these questions was incorrect, suggesting possible issues with understanding charge distributions and symmetry. However, the target question has a clear structure with options that can be derived through integration and symmetry arguments. The answer likely involves integrating the contributions from each infinitesimal charge element on the ring, considering the symmetry of the problem. The correct answer is probably F or B, as they have similar forms but differ in coefficients. Given the model's previous incorrect answers, it might struggle with the exact coefficient calculation.

\par\vspace{2pt}
\textbf{Predicted Performance: \{len: 400, correct: no\}}
}}
\smallskip

\medskip
\hrule
\medskip

\noindent\textbf{Example 7 (TP)} \\
\noindent ID: \texttt{mmlu\_pro\_011758} \\
\noindent Model: \texttt{meta-llama/llama-3.3-70b-instruct} \\
\noindent Anchors: \texttt{anchor220, anchor086, anchor017, anchor190, anchor137} \\
\noindent Pred: correct=yes, len=1200 tok \\
\noindent GT: correct=yes, len=828 tok \\
\noindent Conf: 1.00 \quad Match: OK

\smallskip
\noindent\textbf{Samples (1 total):}

\noindent\textit{Sample 1}: correct=yes, len=1200
\smallskip
\noindent\fbox{%
\parbox{0.97\linewidth}{\small
The target question involves calculating the dew point and mole fractions of combustion products for propane burning in theoretical air. This is a complex thermodynamic and chemical engineering problem requiring knowledge of stoichiometry, gas laws, and phase equilibrium. The anchor questions cover various topics including chemical equilibrium, bond energies, reaction conditions, isotopic analysis, and heat transfer, with varying performance levels. The model performed well on Anchor Question 3 (correct: yes) but struggled with others. The target question's complexity is similar to Anchors 1, 2, 4, and 5, which had longer responses but mixed correctness. Given the model's past performance on similar technical questions, it might require significant computation and understanding of thermodynamic principles to arrive at the correct answer. The correct answer likely requires precise calculations and familiarity with combustion processes and dew point determination.

\par\vspace{2pt}
\textbf{Predicted Performance: \{len: 1200, correct: yes\}}
}}
\smallskip

\medskip
\hrule
\medskip

\noindent\textbf{Example 8 (TN)} \\
\noindent ID: \texttt{mmlu\_pro\_011758} \\
\noindent Model: \texttt{mistralai/ministral-3b} \\
\noindent Anchors: \texttt{anchor220, anchor086, anchor017, anchor190, anchor137} \\
\noindent Pred: correct=no, len=1000 tok \\
\noindent GT: correct=no, len=5 tok \\
\noindent Conf: 1.00 \quad Match: OK

\smallskip
\noindent\textbf{Samples (1 total):}

\noindent\textit{Sample 1}: correct=no, len=1000
\smallskip
\noindent\fbox{%
\parbox{0.97\linewidth}{\small
The target question involves calculating the dew point and mole fractions of combustion products for propane burning in dry 125\% theoretical air. This is a complex thermodynamic and chemical engineering problem requiring knowledge of stoichiometry, gas laws, and phase equilibrium. The anchor questions cover various topics including chemical equilibrium, bond energy, reaction conditions, and physics calculations. The model's performance on these questions varies: it struggles with equilibrium and thermodynamics (Anchor 1, 3) but performs better on physics-based problems (Anchor 5). The target question is more technical and may require precise calculations, which the model might handle with some difficulty. However, the presence of multiple choice options could aid in narrowing down the correct answer. The model's previous performance on similar technical questions suggests it might not fully grasp the required concepts, leading to an incorrect answer.

\par\vspace{2pt}
\textbf{Predicted Performance: \{len: 1000, correct: no\}}
}}
\smallskip

\medskip
\hrule
\medskip

\noindent\textbf{Example 9 (TN)} \\
\noindent ID: \texttt{rbench\_en\_000042} \\
\noindent Model: \texttt{mistralai/ministral-8b} \\
\noindent Anchors: \texttt{anchor090, anchor092, anchor029, anchor158, anchor138} \\
\noindent Pred: correct=no, len=800 tok \\
\noindent GT: correct=no, len=1279 tok \\
\noindent Conf: 1.00 \quad Match: OK

\smallskip
\noindent\textbf{Samples (1 total):}

\noindent\textit{Sample 1}: correct=no, len=800
\smallskip
\noindent\fbox{%
\parbox{0.97\linewidth}{\small
The target question involves calculating the electric field along the z-axis for a charged ring with a specific charge distribution. The anchor questions cover electromagnetism topics like electric fields, displacement, and charge calculations, but none directly match this scenario. The model's performance on these anchors was incorrect, suggesting possible issues with understanding complex charge distributions and field calculations. However, the target question is a standard electrostatics problem with a clear mathematical approach. The model might struggle with integrating the contributions from different charge regions and applying symmetry. The answer options are similar in structure, so the model might select based on partial correctness rather than full derivation. Given the model's past performance on similar problems, it may not fully derive the correct expression but could guess among the options.

\par\vspace{2pt}
\textbf{Predicted Performance: \{len: 800, correct: no\}}
}}
\smallskip

\medskip
\hrule
\medskip

\noindent\textbf{Example 10 (TN)} \\
\noindent ID: \texttt{rbench\_en\_000042} \\
\noindent Model: \texttt{nvidia/nemotron-nano-9b-v2} \\
\noindent Anchors: \texttt{anchor090, anchor092, anchor029, anchor158, anchor138} \\
\noindent Pred: correct=no, len=3500 tok \\
\noindent GT: correct=no, len=16384 tok \\
\noindent Conf: 1.00 \quad Match: OK

\smallskip
\noindent\textbf{Samples (1 total):}

\noindent\textit{Sample 1}: correct=no, len=3500
\smallskip
\noindent\fbox{%
\parbox{0.97\linewidth}{\small
The target question involves calculating the electric field along the z-axis for a charged ring with non-uniform charge distribution. The anchor questions cover electromagnetism topics like electric fields, displacement, and charge distributions. The model performed well on questions requiring detailed calculations (Anchor 1 and 2) but struggled with conceptual questions (Anchor 3 and 4). The target question is similar in complexity to Anchor 1 and 2, involving integration and vector calculus. However, it also requires understanding of charge distribution and symmetry. The model's performance on similar problems suggests it might handle the mathematical aspects well but may struggle with the specific application of charge distribution to find the electric field. The correct answer likely involves integrating the contributions from each segment of the ring, considering the symmetry and the direction of the electric field along the z-axis. The options provided are all in the form of vectors with z-component, which aligns with the expected result.

\par\vspace{2pt}
\textbf{Predicted Performance: \{len: 3500, correct: no\}}
}}
\smallskip

\medskip
\hrule
\medskip

\noindent\textbf{Example 11 (TP)} \\
\noindent ID: \texttt{2010-2022\_Math\_I\_MCQs\_91} \\
\noindent Model: \texttt{amazon/nova-2-lite-v1} \\
\noindent Anchors: \texttt{anchor223, anchor245, anchor241, anchor047, anchor039} \\
\noindent Pred: correct=yes, len=400 tok \\
\noindent GT: correct=yes, len=342 tok \\
\noindent Conf: 1.00 \quad Match: OK

\smallskip
\noindent\textbf{Samples (1 total):}

\noindent\textit{Sample 1}: correct=yes, len=400
\smallskip
\noindent\fbox{%
\parbox{0.97\linewidth}{\small
The target question is a complex math problem involving complex numbers, specifically simplifying the expression (3+i)/(1+i). This type of question requires knowledge of complex number arithmetic, including multiplying numerator and denominator by the conjugate of the denominator to simplify. The anchor questions also involve mathematical reasoning, but with varying levels of complexity and domains (e.g., ellipse properties, probability, calculus, tax law, logic). The model's performance on these questions varies, with some being answered correctly and others not. However, the target question is more straightforward in terms of mathematical operations and has a clear step-by-step solution. The model's performance on similar math problems (like Anchor Question 1 and 2) was incorrect, but the target question's structure is more direct and less likely to involve errors in multi-step reasoning. The correct answer can be determined by performing the division of complex numbers using the conjugate method, which is a standard procedure. Given the model's past performance on math problems, it might struggle with the steps but should be able to handle this relatively simple calculation.

\par\vspace{2pt}
\textbf{Predicted Performance: \{len: 400, correct: yes\}}
}}
\smallskip

\medskip
\hrule
\medskip

\noindent\textbf{Example 12 (TN)} \\
\noindent ID: \texttt{rbench\_en\_000042} \\
\noindent Model: \texttt{qwen/qwen3-14b} \\
\noindent Anchors: \texttt{anchor090, anchor092, anchor029, anchor158, anchor138} \\
\noindent Pred: correct=no, len=12000 tok \\
\noindent GT: correct=no, len=16384 tok \\
\noindent Conf: 1.00 \quad Match: OK

\smallskip
\noindent\textbf{Samples (1 total):}

\noindent\textit{Sample 1}: correct=no, len=12000
\smallskip
\noindent\fbox{%
\parbox{0.97\linewidth}{\small
The target question involves calculating the electric field along the z-axis for a charged ring with a specific charge distribution. The anchor questions cover electromagnetism topics like electric dipoles, displacement fields, and radiation. The model performed well on complex electromagnetic problems (Anchor 1 and 2) but struggled with conceptual questions (Anchor 3 and 4). The target question requires understanding of symmetry and integration of charge distributions, which are similar to Anchor 1 and 2. The correct answer likely involves integrating the contributions from each segment of the ring, considering the charge density changes. The model's performance on similar problems suggests it can handle this, but there's a risk of error in the integration or symmetry considerations.

\par\vspace{2pt}
\textbf{Predicted Performance: \{len: 12000, correct: no\}}
}}
\smallskip

\medskip
\hrule
\medskip

\noindent\textbf{Example 13 (TP)} \\
\noindent ID: \texttt{2010-2022\_Math\_I\_MCQs\_91} \\
\noindent Model: \texttt{qwen/qwen3-235b-a22b} \\
\noindent Anchors: \texttt{anchor223, anchor245, anchor241, anchor047, anchor039} \\
\noindent Pred: correct=yes, len=1000 tok \\
\noindent GT: correct=yes, len=768 tok \\
\noindent Conf: 1.00 \quad Match: OK

\smallskip
\noindent\textbf{Samples (1 total):}

\noindent\textit{Sample 1}: correct=yes, len=1000
\smallskip
\noindent\fbox{%
\parbox{0.97\linewidth}{\small
The target question is a complex math problem involving complex numbers, specifically simplifying the expression (3+i)/(1+i). This type of question requires knowledge of complex number arithmetic, including rationalizing the denominator and simplifying the result. The anchor questions also involve mathematical reasoning, but with varying levels of complexity. For example, Anchor Question 1 involves ellipse properties and trigonometry, while Anchor Question 2 is a probability problem with multiple steps. Anchor Question 3 deals with calculus and function derivatives, and Anchor Question 4 and 5 are more about logical reasoning and identifying fallacies. The target question is more straightforward in terms of mathematical operations but still requires careful step-by-step computation. The model's performance on these anchor questions varies, with some being answered correctly and others not. However, the target question's structure is similar to Anchor Question 1 and 3, which involve mathematical manipulation and logical steps. Given that the target question is a standard complex number problem, it is likely that the model can handle it with proper knowledge of complex number arithmetic. The correct answer can be found by multiplying numerator and denominator by the conjugate of the denominator, which is (1-i). This process involves expanding the numerator and denominator, simplifying the expression, and arriving at the correct answer. The model's previous performance on similar mathematical problems suggests that it may have the necessary skills to solve this question correctly.

\par\vspace{2pt}
\textbf{Predicted Performance: \{len: 1000, correct: yes\}}
}}
\smallskip

\medskip
\hrule
\medskip

}

% ----------------------------
\subsection{SCOPE NoCoT}
{\footnotesize

\noindent\textbf{Example 1 (TP)} \\
\noindent ID: \texttt{rbench\_en\_000042} \\
\noindent Model: \texttt{tngtech/deepseek-r1t2-chimera} \\
\noindent Anchors: \texttt{anchor090, anchor092, anchor029, anchor158, anchor138} \\
\noindent Pred: correct=yes, len=2060 tok \\
\noindent GT: correct=yes, len=4603 tok \\
\noindent Conf: 1.00 \quad Match: OK

\smallskip
\noindent\fbox{%
\parbox{0.97\linewidth}{\small
\textbf{Predicted Performance: \{len: 2060, correct: yes\}}
}}
\medskip
\hrule
\medskip

\noindent\textbf{Example 2 (TN)} \\
\noindent ID: \texttt{mmlu\_pro\_011758} \\
\noindent Model: \texttt{google/gemma-3-12b-it} \\
\noindent Anchors: \texttt{anchor220, anchor086, anchor017, anchor190, anchor137} \\
\noindent Pred: correct=no, len=1045 tok \\
\noindent GT: correct=no, len=646 tok \\
\noindent Conf: 0.00 \quad Match: OK

\smallskip
\noindent\fbox{%
\parbox{0.97\linewidth}{\small
\textbf{Predicted Performance: \{len: 1045, correct: no\}}
}}
\medskip
\hrule
\medskip

\noindent\textbf{Example 3 (TN)} \\
\noindent ID: \texttt{rbench\_en\_000042} \\
\noindent Model: \texttt{google/gemma-3-27b-it} \\
\noindent Anchors: \texttt{anchor090, anchor092, anchor029, anchor158, anchor138} \\
\noindent Pred: correct=no, len=1133 tok \\
\noindent GT: correct=no, len=1243 tok \\
\noindent Conf: 0.36 \quad Match: OK

\smallskip
\noindent\fbox{%
\parbox{0.97\linewidth}{\small
\textbf{Predicted Performance: \{len: 1133, correct: no\}}
}}
\medskip
\hrule
\medskip

\noindent\textbf{Example 4 (TN)} \\
\noindent ID: \texttt{rbench\_en\_000042} \\
\noindent Model: \texttt{google/gemma-3-4b-it} \\
\noindent Anchors: \texttt{anchor090, anchor092, anchor029, anchor158, anchor138} \\
\noindent Pred: correct=no, len=1408 tok \\
\noindent GT: correct=no, len=1118 tok \\
\noindent Conf: 0.21 \quad Match: OK

\smallskip
\noindent\fbox{%
\parbox{0.97\linewidth}{\small
\textbf{Predicted Performance: \{len: 1408, correct: no\}}
}}
\medskip
\hrule
\medskip

\noindent\textbf{Example 5 (TP)} \\
\noindent ID: \texttt{rbench\_en\_000042} \\
\noindent Model: \texttt{openai/gpt-oss-20b} \\
\noindent Anchors: \texttt{anchor090, anchor092, anchor029, anchor158, anchor138} \\
\noindent Pred: correct=yes, len=1295 tok \\
\noindent GT: correct=yes, len=1596 tok \\
\noindent Conf: 0.74 \quad Match: OK

\smallskip
\noindent\fbox{%
\parbox{0.97\linewidth}{\small
\textbf{Predicted Performance: \{len: 1295, correct: yes\}}
}}
\medskip
\hrule
\medskip

\noindent\textbf{Example 6 (TN)} \\
\noindent ID: \texttt{rbench\_en\_000042} \\
\noindent Model: \texttt{meta-llama/llama-3.1-8b-instruct} \\
\noindent Anchors: \texttt{anchor090, anchor092, anchor029, anchor158, anchor138} \\
\noindent Pred: correct=no, len=5 tok \\
\noindent GT: correct=no, len=650 tok \\
\noindent Conf: 0.34 \quad Match: OK

\smallskip
\noindent\fbox{%
\parbox{0.97\linewidth}{\small
\textbf{Predicted Performance: \{len: 5, correct: no\}}
}}
\medskip
\hrule
\medskip

\noindent\textbf{Example 7 (TP)} \\
\noindent ID: \texttt{rbench\_en\_000042} \\
\noindent Model: \texttt{meta-llama/llama-3.3-70b-instruct} \\
\noindent Anchors: \texttt{anchor090, anchor092, anchor029, anchor158, anchor138} \\
\noindent Pred: correct=yes, len=674 tok \\
\noindent GT: correct=yes, len=929 tok \\
\noindent Conf: 0.64 \quad Match: OK

\smallskip
\noindent\fbox{%
\parbox{0.97\linewidth}{\small
\textbf{Predicted Performance: \{len: 674, correct: yes\}}
}}
\medskip
\hrule
\medskip

\noindent\textbf{Example 8 (TN)} \\
\noindent ID: \texttt{mmlu\_pro\_011758} \\
\noindent Model: \texttt{mistralai/ministral-3b} \\
\noindent Anchors: \texttt{anchor220, anchor086, anchor017, anchor190, anchor137} \\
\noindent Pred: correct=no, len=498 tok \\
\noindent GT: correct=no, len=5 tok \\
\noindent Conf: 0.00 \quad Match: OK

\smallskip
\noindent\fbox{%
\parbox{0.97\linewidth}{\small
\textbf{Predicted Performance: \{len: 498, correct: no\}}
}}
\medskip
\hrule
\medskip

\noindent\textbf{Example 9 (TN)} \\
\noindent ID: \texttt{rbench\_en\_000042} \\
\noindent Model: \texttt{mistralai/ministral-8b} \\
\noindent Anchors: \texttt{anchor090, anchor092, anchor029, anchor158, anchor138} \\
\noindent Pred: correct=no, len=811 tok \\
\noindent GT: correct=no, len=1279 tok \\
\noindent Conf: 0.34 \quad Match: OK

\smallskip
\noindent\fbox{%
\parbox{0.97\linewidth}{\small
\textbf{Predicted Performance: \{len: 811, correct: no\}}
}}
\medskip
\hrule
\medskip

\noindent\textbf{Example 10 (TN)} \\
\noindent ID: \texttt{rbench\_en\_000042} \\
\noindent Model: \texttt{nvidia/nemotron-nano-9b-v2} \\
\noindent Anchors: \texttt{anchor090, anchor092, anchor029, anchor158, anchor138} \\
\noindent Pred: correct=no, len=16384 tok \\
\noindent GT: correct=no, len=16384 tok \\
\noindent Conf: 0.00 \quad Match: OK

\smallskip
\noindent\fbox{%
\parbox{0.97\linewidth}{\small
\textbf{Predicted Performance: \{len: 16384, correct: no\}}
}}
\medskip
\hrule
\medskip

\noindent\textbf{Example 11 (TP)} \\
\noindent ID: \texttt{rbench\_en\_000042} \\
\noindent Model: \texttt{amazon/nova-2-lite-v1} \\
\noindent Anchors: \texttt{anchor090, anchor092, anchor029, anchor158, anchor138} \\
\noindent Pred: correct=yes, len=6082 tok \\
\noindent GT: correct=yes, len=9288 tok \\
\noindent Conf: 1.00 \quad Match: OK

\smallskip
\noindent\fbox{%
\parbox{0.97\linewidth}{\small
\textbf{Predicted Performance: \{len: 6082, correct: yes\}}
}}
\medskip
\hrule
\medskip

\noindent\textbf{Example 12 (TN)} \\
\noindent ID: \texttt{rbench\_en\_000042} \\
\noindent Model: \texttt{qwen/qwen3-14b} \\
\noindent Anchors: \texttt{anchor090, anchor092, anchor029, anchor158, anchor138} \\
\noindent Pred: correct=no, len=16384 tok \\
\noindent GT: correct=no, len=16384 tok \\
\noindent Conf: 0.00 \quad Match: OK

\smallskip
\noindent\fbox{%
\parbox{0.97\linewidth}{\small
\textbf{Predicted Performance: \{len: 16384, correct: no\}}
}}
\medskip
\hrule
\medskip

\noindent\textbf{Example 13 (TP)} \\
\noindent ID: \texttt{2010-2022\_Math\_I\_MCQs\_91} \\
\noindent Model: \texttt{qwen/qwen3-235b-a22b} \\
\noindent Anchors: \texttt{anchor223, anchor245, anchor241, anchor047, anchor039} \\
\noindent Pred: correct=yes, len=1322 tok \\
\noindent GT: correct=yes, len=768 tok \\
\noindent Conf: 1.00 \quad Match: OK

\smallskip
\noindent\fbox{%
\parbox{0.97\linewidth}{\small
\textbf{Predicted Performance: \{len: 1322, correct: yes\}}
}}
\medskip
\hrule
\medskip

}

% ----------------------------
\subsection{Qwen4B Zero-shot NoCoT}
{\footnotesize

\noindent\textbf{Example 1 (TP)} \\
\noindent ID: \texttt{mmlu\_pro\_011758} \\
\noindent Model: \texttt{tngtech/deepseek-r1t2-chimera} \\
\noindent Pred: correct=yes, len=20 tok \\
\noindent GT: correct=yes, len=7848 tok \\
\noindent Conf: 1.00 \quad Match: OK

\smallskip
\noindent\fbox{%
\parbox{0.97\linewidth}{\small
\textbf{Predicted Performance: \{len: 20, correct: yes\}}
}}
\medskip
\hrule
\medskip

\noindent\textbf{Example 2 (TP)} \\
\noindent ID: \texttt{2010-2022\_Math\_I\_MCQs\_91} \\
\noindent Model: \texttt{tngtech/deepseek-r1t2-chimera} \\
\noindent Pred: correct=yes, len=30 tok \\
\noindent GT: correct=yes, len=598 tok \\
\noindent Conf: 1.00 \quad Match: OK

\smallskip
\noindent\fbox{%
\parbox{0.97\linewidth}{\small
\textbf{Predicted Performance: \{len: 30, correct: yes\}}
}}
\medskip
\hrule
\medskip

}

% ----------------------------
\subsection{Qwen4B Zero-shot CoT}
{\footnotesize

\noindent\textbf{Example 1 (TP)} \\
\noindent ID: \texttt{rbench\_en\_000042} \\
\noindent Model: \texttt{tngtech/deepseek-r1t2-chimera} \\
\noindent Pred: correct=yes, len=320 tok \\
\noindent GT: correct=yes, len=4603 tok \\
\noindent Conf: 0.88 \quad Match: OK

\smallskip
\noindent\textbf{Samples (8 total):}

\noindent\textit{Sample 1}: correct=yes, len=4
\smallskip
\noindent\fbox{%
\parbox{0.97\linewidth}{\small
The target question involves a charged ring with an antisymmetric charge distribution in the $y$-direction: $\lambda = \lambda\_0$ for $y > 0$ and $\lambda$...
}}
\smallskip

\noindent\textit{Sample 2}: correct=yes, len=586
\smallskip
\noindent\fbox{%
\parbox{0.97\linewidth}{\small
The target question involves calculating the electric field along the $z$-axis due to a charged ring with a non-uniform charge distribution. The charg...
}}
\smallskip

\noindent\textit{Sample 3}: correct=yes, len=480
\smallskip
\noindent\fbox{%
\parbox{0.97\linewidth}{\small
The target question involves a charged ring with a symmetric charge distribution—positive on the upper half ($y > 0$) and negative on the lower half (...
}}
\smallskip

\noindent\textit{Sample 4}: correct=yes, len=620
\smallskip
\noindent\fbox{%
\parbox{0.97\linewidth}{\small
The question involves a non-uniform charge distribution with angular dependence. The model correctly identifies that symmetry arguments can be used to...
}}
\smallskip

\noindent\textit{Sample 5}: correct=no, len=0
\smallskip

\noindent\textit{Sample 6}: correct=yes, len=300
\smallskip
\noindent\fbox{%
\parbox{0.97\linewidth}{\small
The target question involves a charged ring with a charge density that is $\lambda = \lambda\_0$ for $y < 0$ and $\lambda = -\lambda\_0$ for $y > 0$, wh...
}}
\smallskip

\noindent\textit{Sample 7}: correct=yes, len=250
\smallskip
\noindent\fbox{%
\parbox{0.97\linewidth}{\small
The target question involves a symmetric charge distribution ($\lambda = \lambda_{0}$ for $y < 0$, $-\lambda_{0}$ for $y > 0$) on a ring, leading to zero electric field along the $z-a$...
}}
\smallskip

\noindent\textit{Sample 8}: correct=yes, len=6
\smallskip
\noindent\fbox{%
\parbox{0.97\linewidth}{\small
This is a standard problem involving integration over a disk. The model correctly derived the formula using symmetry and integration. The output inclu...
}}
\smallskip

\medskip
\hrule
\medskip

\noindent\textbf{Example 2 (TP)} \\
\noindent ID: \texttt{mmlu\_pro\_011758} \\
\noindent Model: \texttt{tngtech/deepseek-r1t2-chimera} \\
\noindent Pred: correct=yes, len=49 tok \\
\noindent GT: correct=yes, len=7848 tok \\
\noindent Conf: 0.62 \quad Match: OK

\smallskip
\noindent\textbf{Samples (8 total):}

\noindent\textit{Sample 1}: correct=yes, len=5
\smallskip

\noindent\textit{Sample 2}: correct=yes, len=180
\smallskip
\noindent\fbox{%
\parbox{0.97\linewidth}{\small
[Your comprehensive analysis covering anchor patterns, target question characteristics, and reasoning.]
}}
\smallskip

\noindent\textit{Sample 3}: correct=yes, len=15
\smallskip

\noindent\textit{Sample 4}: correct=no, len=22
\smallskip
\noindent\fbox{%
\parbox{0.97\linewidth}{\small
: The target question involves a thermodynamic combustion analysis requiring dew point calculation, which is structurally similar to the anchor questi...
}}
\smallskip

\noindent\textit{Sample 5}: correct=yes, len=12
\smallskip

\noindent\textit{Sample 6}: correct=no, len=0
\smallskip
\noindent\fbox{%
\parbox{0.97\linewidth}{\small
The target question involves propane combustion in dry 125\% theoretical air, requiring both the dew point and mole fractions of combustion products (C...
}}
\smallskip

\noindent\textit{Sample 7}: correct=no, len=0
\smallskip
\noindent\fbox{%
\parbox{0.97\linewidth}{\small
A combustion reaction involves reacting a hydrocarbon with oxygen to produce carbon dioxide and water. The stoichiometry of the reaction must be balan...
}}
\smallskip

\noindent\textit{Sample 8}: correct=yes, len=60
\smallskip
\noindent\fbox{%
\parbox{0.97\linewidth}{\small
The target question is a complex chemical engineering problem requiring multi-step reasoning involving stoichiometry, vapor-liquid equilibrium, and th...
}}
\smallskip

\medskip
\hrule
\medskip

}

% ----------------------------
\subsection{Qwen4B K-shot NoCoT}
{\footnotesize

\noindent\textbf{Example 1 (TP)} \\
\noindent ID: \texttt{rbench\_en\_000042} \\
\noindent Model: \texttt{tngtech/deepseek-r1t2-chimera} \\
\noindent Anchors: \texttt{anchor090, anchor092, anchor029, anchor158, anchor138} \\
\noindent Pred: correct=yes, len=1342 tok \\
\noindent GT: correct=yes, len=4603 tok \\
\noindent Conf: 1.00 \quad Match: OK

\smallskip
\noindent\fbox{%
\parbox{0.97\linewidth}{\small
\textbf{Predicted Performance: \{len: 1342, correct: yes\}}
}}
\medskip
\hrule
\medskip

\noindent\textbf{Example 2 (TP)} \\
\noindent ID: \texttt{mmlu\_pro\_011758} \\
\noindent Model: \texttt{tngtech/deepseek-r1t2-chimera} \\
\noindent Anchors: \texttt{anchor220, anchor086, anchor017, anchor190, anchor137} \\
\noindent Pred: correct=yes, len=2150 tok \\
\noindent GT: correct=yes, len=7848 tok \\
\noindent Conf: 1.00 \quad Match: OK

\smallskip
\noindent\fbox{%
\parbox{0.97\linewidth}{\small
\textbf{Predicted Performance: \{len: 2150, correct: yes\}}
}}
\medskip
\hrule
\medskip

}

% ----------------------------
\subsection{Qwen4B K-shot CoT}
{\footnotesize

\noindent\textbf{Example 1 (TP)} \\
\noindent ID: \texttt{rbench\_en\_000042} \\
\noindent Model: \texttt{tngtech/deepseek-r1t2-chimera} \\
\noindent Anchors: \texttt{anchor090, anchor092, anchor029, anchor158, anchor138} \\
\noindent Pred: correct=yes, len=3340 tok \\
\noindent GT: correct=yes, len=4603 tok \\
\noindent Conf: 0.75 \quad Match: OK

\smallskip
\noindent\textbf{Samples (8 total):}

\noindent\textit{Sample 1}: correct=no, len=3200
\smallskip
\noindent\fbox{%
\parbox{0.97\linewidth}{\small
The target question involves calculating the electric field along the z-axis due to a charged ring with a non-uniform charge distribution (positive on...
}}
\smallskip

\noindent\textit{Sample 2}: correct=yes, len=5179
\smallskip
\noindent\fbox{%
\parbox{0.97\linewidth}{\small
The target question involves calculating the electric field along the z-axis due to a charged ring with a non-uniform charge distribution. The symmetr...
}}
\smallskip

\noindent\textit{Sample 3}: correct=yes, len=5179
\smallskip
\noindent\fbox{%
\parbox{0.97\linewidth}{\small
The target question involves calculating the electric field along the z-axis due to a charged ring with a non-uniform charge distribution (positive be...
}}
\smallskip

\noindent\textit{Sample 4}: correct=yes, len=2830
\smallskip
\noindent\fbox{%
\parbox{0.97\linewidth}{\small
The target question involves calculating the electric field along the z-axis due to a charged ring with a non-uniform charge distribution (positive on...
}}
\smallskip

\noindent\textit{Sample 5}: correct=yes, len=1129
\smallskip
\noindent\fbox{%
\parbox{0.97\linewidth}{\small
The target question involves calculating the electric field along the z-axis due to a charged ring with a non-uniform charge distribution. The ring ha...
}}
\smallskip

\noindent\textit{Sample 6}: correct=yes, len=2830
\smallskip
\noindent\fbox{%
\parbox{0.97\linewidth}{\small
The target question involves calculating the electric field along the z-axis due to a charged ring with a non-uniform charge distribution (positive on...
}}
\smallskip

\noindent\textit{Sample 7}: correct=no, len=1200
\smallskip
\noindent\fbox{%
\parbox{0.97\linewidth}{\small
The target question involves calculating the electric field along the z-axis due to a charged ring with asymmetric charge distribution (positive on th...
}}
\smallskip

\noindent\textit{Sample 8}: correct=yes, len=5179
\smallskip
\noindent\fbox{%
\parbox{0.97\linewidth}{\small
The target question involves calculating the electric field along the z-axis due to a charged ring with a non-uniform charge distribution. The ring ha...
}}
\smallskip

\medskip
\hrule
\medskip

\noindent\textbf{Example 2 (TP)} \\
\noindent ID: \texttt{mmlu\_pro\_011758} \\
\noindent Model: \texttt{tngtech/deepseek-r1t2-chimera} \\
\noindent Anchors: \texttt{anchor220, anchor086, anchor017, anchor190, anchor137} \\
\noindent Pred: correct=yes, len=3857 tok \\
\noindent GT: correct=yes, len=7848 tok \\
\noindent Conf: 0.75 \quad Match: OK

\smallskip
\noindent\textbf{Samples (8 total):}

\noindent\textit{Sample 1}: correct=yes, len=3756
\smallskip
\noindent\fbox{%
\parbox{0.97\linewidth}{\small
The target question involves combustion analysis with a focus on calculating the dew point and mole fractions of products in a non-stoichiometric air ...
}}
\smallskip

\noindent\textit{Sample 2}: correct=yes, len=3500
\smallskip
\noindent\fbox{%
\parbox{0.97\linewidth}{\small
The target question involves a combustion reaction with theoretical air calculation and requires determining the dew point and mole fractions of combu...
}}
\smallskip

\noindent\textit{Sample 3}: correct=yes, len=4000
\smallskip
\noindent\fbox{%
\parbox{0.97\linewidth}{\small
The target question involves a combustion reaction with theoretical air calculation and requires determining the dew point and mole fractions of combu...
}}
\smallskip

\noindent\textit{Sample 4}: correct=no, len=4200
\smallskip
\noindent\fbox{%
\parbox{0.97\linewidth}{\small
The target question involves combustion analysis with a focus on calculating the dew point and mole fractions of combustion products under specific co...
}}
\smallskip

\noindent\textit{Sample 5}: correct=yes, len=3800
\smallskip
\noindent\fbox{%
\parbox{0.97\linewidth}{\small
The target question involves a combustion reaction with excess air (125\% theoretical air), requiring calculation of the dew point and mole fractions o...
}}
\smallskip

\noindent\textit{Sample 6}: correct=yes, len=3200
\smallskip
\noindent\fbox{%
\parbox{0.97\linewidth}{\small
The target question involves a combustion reaction calculation requiring determination of dew point and mole fractions of products under specific cond...
}}
\smallskip

\noindent\textit{Sample 7}: correct=yes, len=4200
\smallskip
\noindent\fbox{%
\parbox{0.97\linewidth}{\small
The target question involves a combustion reaction with theoretical air calculation and requires finding the dew point and mole fractions of products....
}}
\smallskip

\noindent\textit{Sample 8}: correct=no, len=4200
\smallskip
\noindent\fbox{%
\parbox{0.97\linewidth}{\small
The target question involves a combustion reaction with excess air (125\% theoretical air) and requires calculation of both the dew point and mole frac...
}}
\smallskip

\medskip
\hrule
\medskip

}

% ============================
\section{OOD Set Case Studies}
\label{sec:ood_case_studies}
% ----------------------------
\subsection{SCOPE OOD CoT}
{\footnotesize

\noindent\textbf{Example 1 (TN)} \\
\noindent ID: \texttt{aime2025-ii\_12} \\
\noindent Model: \texttt{anthropic/claude-sonnet-4\_5} \\
\noindent Anchors: \texttt{anchor243, anchor225, anchor065, anchor242, anchor230} \\
\noindent Pred: correct=no, len=12000 tok \\
\noindent GT: correct=no, len=9952 tok \\
\noindent Conf: 1.00 \quad Match: OK

\smallskip
\noindent\textbf{Samples (1 total):}

\noindent\textit{Sample 1}: correct=no, len=12000
\smallskip
\noindent\fbox{%
\parbox{0.97\linewidth}{\small
The target question involves a recursive sequence of rational numbers and requires finding the remainder of the sum of numerator and denominator when expressed in simplest form. This is a complex problem involving pattern recognition and possibly periodicity in the sequence. The examples provided involve various mathematical concepts such as function properties, calculus, and sequence analysis, with varying levels of difficulty and correctness. The first example had a long response and was correct, indicating the model can handle intricate function-related problems. The second example had a longer response but was incorrect, suggesting possible issues with handling more abstract or calculus-based problems. The third example was correct with a shorter response, showing the model can handle straightforward calculations. The fourth and fifth examples were correct with moderate lengths, indicating the model can handle sequence and function analysis effectively.  The target question is similar in complexity to the first and fourth examples, which involved sequences and function properties. However, it also includes elements of number theory (finding remainders and simplifying fractions), which might be more challenging. Given the model's performance on similar problems, it is likely to attempt a detailed analysis of the sequence's behavior, possibly identifying patterns or periodicity. The correct answer requires careful computation and understanding of the recursive formula, which could be error-prone. The model might struggle with the algebraic manipulation needed to simplify the expression for $ x\_\{2025\} $, but if it can identify a pattern or periodicity, it could potentially solve it correctly.

\par\vspace{2pt}
\textbf{Predicted Performance: \{len: 12000, correct: no\}}
}}
\smallskip

\medskip
\hrule
\medskip

\noindent\textbf{Example 2 (TN)} \\
\noindent ID: \texttt{aime2025-ii\_12} \\
\noindent Model: \texttt{deepseek/deepseek-v3\_2} \\
\noindent Anchors: \texttt{anchor243, anchor225, anchor065, anchor242, anchor230} \\
\noindent Pred: correct=no, len=15000 tok \\
\noindent GT: correct=no, len=10487 tok \\
\noindent Conf: 1.00 \quad Match: OK

\smallskip
\noindent\textbf{Samples (1 total):}

\noindent\textit{Sample 1}: correct=no, len=15000
\smallskip
\noindent\fbox{%
\parbox{0.97\linewidth}{\small
The target question involves a recursive sequence of rational numbers and requires finding the remainder of the sum of numerator and denominator when divided by 1000. This is a complex problem that demands identifying patterns or cycles in the sequence. The examples provided include problems with functional properties, calculus, and algebraic manipulations, but none directly relate to recursive sequences or modular arithmetic. However, the performance on these questions varies: some are answered correctly, while others are not. The target question's complexity is comparable to Example 2 and 4, which involve advanced mathematical reasoning and pattern recognition. Given the high difficulty and the need for precise computation, the model might struggle with the exact steps required to find the cycle or simplify the expression. The correct answer likely requires recognizing a periodicity in the sequence, which could be non-trivial to discover. Since the model performed poorly on similar complex problems (e.g., Example 1 and 2), it might not handle this task well, leading to an incorrect answer.

\par\vspace{2pt}
\textbf{Predicted Performance: \{len: 15000, correct: no\}}
}}
\smallskip

\medskip
\hrule
\medskip

\noindent\textbf{Example 3 (TN)} \\
\noindent ID: \texttt{aime2025-ii\_12} \\
\noindent Model: \texttt{google/gemini-3-flash-preview} \\
\noindent Anchors: \texttt{anchor243, anchor225, anchor065, anchor242, anchor230} \\
\noindent Pred: correct=no, len=15000 tok \\
\noindent GT: correct=no, len=16380 tok \\
\noindent Conf: 1.00 \quad Match: OK

\smallskip
\noindent\textbf{Samples (1 total):}

\noindent\textit{Sample 1}: correct=no, len=15000
\smallskip
\noindent\fbox{%
\parbox{0.97\linewidth}{\small
The target question involves a recursive sequence of rationals with a specific transformation rule, requiring analysis of the sequence's behavior over many iterations to find $x\_\{2025\}$. This is a complex problem that demands identifying patterns or cycles in the sequence. The previous examples involve advanced topics like function properties, calculus, and algebraic manipulations, which also require deep mathematical insight. The performance on these questions varies, with some being solved correctly and others not. The target question is similar in complexity to Example 1 and 4, which involved intricate function transformations and sequence analysis. However, it is more computational and algorithmic in nature, possibly involving finding a cycle or simplifying the recurrence relation. Given the high difficulty and the need for careful step-by-step computation, the model might struggle with the exact computation required for 2025 iterations, but if it can identify a pattern or cycle, it could potentially solve it. The correct answer requires precise calculation and understanding of the recurrence relation, which might be challenging but feasible with systematic analysis.

\par\vspace{2pt}
\textbf{Predicted Performance: \{len: 15000, correct: no\}}
}}
\smallskip

\medskip
\hrule
\medskip

\noindent\textbf{Example 4 (TP)} \\
\noindent ID: \texttt{aime\_2024\_87} \\
\noindent Model: \texttt{openai/gpt-5-mini} \\
\noindent Anchors: \texttt{anchor003, anchor194, anchor076, anchor174, anchor115} \\
\noindent Pred: correct=yes, len=1500 tok \\
\noindent GT: correct=yes, len=7809 tok \\
\noindent Conf: 1.00 \quad Match: OK

\smallskip
\noindent\textbf{Samples (1 total):}

\noindent\textit{Sample 1}: correct=yes, len=1500
\smallskip
\noindent\fbox{%
\parbox{0.97\linewidth}{\small
The target question is a complex mathematical problem involving number theory and digit manipulation. It requires understanding of divisibility rules, modular arithmetic, and systematic exploration of numbers with specific properties. The examples provided involve various domains: combinatorics, organic chemistry, computer science, and calculus. The performance on these questions varies in length and correctness, but the model consistently answers correctly. The target question's complexity is comparable to Example 2 and 4, which required detailed reasoning and knowledge of specialized fields. However, it is more mathematical and less reliant on domain-specific knowledge. The model's performance on mathematical problems (like Example 1 and 5) shows it can handle such tasks with moderate success. Given the need for systematic analysis and careful application of number theory concepts, the model might face challenges in ensuring correctness, but the structured nature of the problem could aid in finding the right approach.

\par\vspace{2pt}
\textbf{Predicted Performance: \{len: 1500, correct: yes\}}
}}
\smallskip

\medskip
\hrule
\medskip

\noindent\textbf{Example 5 (TP)} \\
\noindent ID: \texttt{aime2025-i\_0} \\
\noindent Model: \texttt{x-ai/grok-4\_1-fast} \\
\noindent Anchors: \texttt{anchor043, anchor174, anchor003, anchor072, anchor190} \\
\noindent Pred: correct=yes, len=4000 tok \\
\noindent GT: correct=yes, len=1020 tok \\
\noindent Conf: 1.00 \quad Match: OK

\smallskip
\noindent\textbf{Samples (1 total):}

\noindent\textit{Sample 1}: correct=yes, len=4000
\smallskip
\noindent\fbox{%
\parbox{0.97\linewidth}{\small
The target question is a number theory problem involving base conversion and divisibility. It requires understanding how numbers in different bases relate to each other and applying divisibility rules. The examples provided include a mix of computer science, organic chemistry, and mathematics problems, with varying complexity levels. The model's performance on these questions varies significantly in length and correctness. For instance, it performed well on the hexadecimal conversion (Example 4) and the combinatorial problem (Example 3), but struggled with more complex organic chemistry (Example 2) and lengthy mathematical reasoning (Example 5). The target question is mathematical and requires systematic analysis, similar to Example 3 and 4, which the model handled correctly. However, it also involves abstract reasoning and number theory concepts, which might be challenging. Given the model's past performance on similar mathematical problems, it is likely to perform well, but the complexity could affect the length of the response and correctness.

\par\vspace{2pt}
\textbf{Predicted Performance: \{len: 4000, correct: yes\}}
}}
\smallskip

\medskip
\hrule
\medskip

}

% ----------------------------
\subsection{SCOPE OOD NoCoT}
{\footnotesize

\noindent\textbf{Example 1 (TN)} \\
\noindent ID: \texttt{aime2025-ii\_12} \\
\noindent Model: \texttt{anthropic/claude-sonnet-4\_5} \\
\noindent Anchors: \texttt{anchor243, anchor225, anchor065, anchor242, anchor230} \\
\noindent Pred: correct=no, len=16384 tok \\
\noindent GT: correct=no, len=9952 tok \\
\noindent Conf: 0.00 \quad Match: OK

\smallskip
\noindent\fbox{%
\parbox{0.97\linewidth}{\small
\textbf{Predicted Performance: \{len: 16384, correct: no\}}
}}
\medskip
\hrule
\medskip

\noindent\textbf{Example 2 (TP)} \\
\noindent ID: \texttt{aime\_2024\_87} \\
\noindent Model: \texttt{deepseek/deepseek-v3\_2} \\
\noindent Anchors: \texttt{anchor003, anchor194, anchor076, anchor174, anchor115} \\
\noindent Pred: correct=yes, len=5540 tok \\
\noindent GT: correct=yes, len=18942 tok \\
\noindent Conf: 1.00 \quad Match: OK

\smallskip
\noindent\fbox{%
\parbox{0.97\linewidth}{\small
\textbf{Predicted Performance: \{len: 5540, correct: yes\}}
}}
\medskip
\hrule
\medskip

\noindent\textbf{Example 3 (TN)} \\
\noindent ID: \texttt{aime2025-ii\_12} \\
\noindent Model: \texttt{google/gemini-3-flash-preview} \\
\noindent Anchors: \texttt{anchor243, anchor225, anchor065, anchor242, anchor230} \\
\noindent Pred: correct=no, len=16384 tok \\
\noindent GT: correct=no, len=16380 tok \\
\noindent Conf: 0.00 \quad Match: OK

\smallskip
\noindent\fbox{%
\parbox{0.97\linewidth}{\small
\textbf{Predicted Performance: \{len: 16384, correct: no\}}
}}
\medskip
\hrule
\medskip

\noindent\textbf{Example 4 (TP)} \\
\noindent ID: \texttt{aime\_2024\_69} \\
\noindent Model: \texttt{openai/gpt-5-mini} \\
\noindent Anchors: \texttt{anchor245, anchor003, anchor002, anchor118, anchor115} \\
\noindent Pred: correct=yes, len=1102 tok \\
\noindent GT: correct=yes, len=671 tok \\
\noindent Conf: 1.00 \quad Match: OK

\smallskip
\noindent\fbox{%
\parbox{0.97\linewidth}{\small
\textbf{Predicted Performance: \{len: 1102, correct: yes\}}
}}
\medskip
\hrule
\medskip

\noindent\textbf{Example 5 (TP)} \\
\noindent ID: \texttt{aime\_2024\_75} \\
\noindent Model: \texttt{x-ai/grok-4\_1-fast} \\
\noindent Anchors: \texttt{anchor003, anchor118, anchor002, anchor245, anchor120} \\
\noindent Pred: correct=yes, len=1291 tok \\
\noindent GT: correct=yes, len=2285 tok \\
\noindent Conf: 1.00 \quad Match: OK

\smallskip
\noindent\fbox{%
\parbox{0.97\linewidth}{\small
\textbf{Predicted Performance: \{len: 1291, correct: yes\}}
}}
\medskip
\hrule
\medskip

}

%---

\end{document}